\definecolor{myred}{HTML}{FFE5D1}
\definecolor{myblue}{HTML}{D4f2f2}
\newcommand{\wR}{\widetilde{R}}
\newcommand{\wF}{\widetilde{F}}
\newcommand{\wQ}{\widetilde{Q}}
\newcommand{\wt}{\widetilde{t}}
\newcommand{\wJ}{\widetilde{J}}
\newcommand{\var}{\textup{var}}
\newcommand{\mean}{\textup{mean}}
\newcommand{\cM}{\mathcal{M}}
\newcommand{\cV}{\mathcal{V}}
\newtheorem{definition}{Definition}
\newtheorem{assumption}{Assumption}
\newtheorem{theorem}{Theorem}
\newtheorem*{theorem*}{Theorem}
\newtheorem{lemma}{Lemma}
\newtheorem*{lemma*}{Lemma}
\newtheorem{corollary}[theorem]{Corollary}
\newtheorem{approximation}{Approximation}
\title{Tilted Empirical Risk Minimization}
\author{Tian Li\thanks{Equal contribution.} \\ CMU \\ \href{mailto:tianli@cmu.edu}{tianli@cmu.edu} \\
\And
Ahmad Beirami\footnotemark[1]  \\ Facebook AI \\ \href{mailto:beirami@fb.com}{beirami@fb.com} \\
\And
Maziar Sanjabi \\ Facebook AI \\ \href{mailto:maziars@fb.com}{maziars@fb.com} \\
\And
Virginia Smith \\ CMU \\
\href{mailto:smithv@cmu.edu}{smithv@cmu.edu}
}
\begin{document}

\maketitle

\begin{abstract}
Empirical risk minimization (ERM) is typically designed to perform well on the average loss, which can  result in estimators that are sensitive to outliers, generalize poorly, or treat subgroups unfairly. While many methods aim to address these problems individually, in this work, we explore them through a unified framework---tilted empirical risk minimization (TERM). In particular, we show that it is possible to flexibly tune the impact of individual losses through a straightforward extension to ERM using a hyperparameter called the tilt. We provide several interpretations of the resulting framework: We show that TERM can increase or decrease the influence of outliers, respectively, to enable fairness or robustness; has variance-reduction properties that can benefit generalization; and can be viewed as a smooth approximation to a superquantile method.
We develop batch and stochastic first-order optimization methods for solving TERM, and show that the problem can be efficiently solved relative to common alternatives. Finally, we demonstrate that TERM can be used for a multitude of applications, such as enforcing fairness between subgroups, mitigating the effect of outliers, and handling class imbalance. TERM is not only competitive with existing solutions tailored to these individual problems, but can also enable entirely new applications, such as simultaneously addressing outliers and promoting fairness.
\end{abstract}

\section{Introduction}\label{sec:introduction}
 
\begingroup
\setlength{\thinmuskip}{0mu}
\setlength{\medmuskip}{0.75mu}
\setlength{\thickmuskip}{0.75mu}
Many statistical estimation procedures rely on the concept of empirical risk minimization (ERM), in which the parameter of interest, 
$\theta \in \Theta \subseteq \mathbb{R}^d$, is estimated by minimizing an average loss over the data:
\endgroup

\vspace{-1em}
\begin{equation}\label{eq: ERM}
    \overline{R}(\theta) :=  \frac{1}{N} \sum_{i \in [N]}f(x_i; \theta) \, .
\end{equation} 
\vspace{-1em}

While ERM is widely used and has nice statistical properties, it can perform poorly in situations where average performance is not an appropriate surrogate for the problem of interest. Significant research has thus been devoted to developing alternatives to traditional ERM for diverse applications, such as learning in the presence of noisy/corrupted data~\citep{khetan2017learning,jiang2018mentornet}, performing classification with imbalanced data~\citep{lin2017focal,malisiewicz2011ensemble}, ensuring that subgroups within a population are treated fairly~\citep{hashimoto2018fairness,samadi2018price}, or developing solutions with favorable out-of-sample performance~\citep{namkoong2017variance}.

In this paper, we suggest that deficiencies in ERM can be flexibly addressed via a unified framework, {\em tilted empirical risk minimization (TERM)}. TERM encompasses a family of objectives, parameterized by a  real-valued hyperparameter, $t$. For $t \in \mathbb{R}^{\setminus 0},$ the $t$-tilted loss (TERM objective) is given by: 

\vspace{-1em}
\begin{equation}
    \label{eq: TERM}
    \wR(t ;\theta) := \frac{1}{t} \log\bigg( \frac{1}{N}\sum_{i \in [N]}  e^{t f(x_i; \theta)} \bigg) \, .
\end{equation}
\vspace{-1em}

TERM generalizes ERM as the $0$-tilted loss recovers the average loss, i.e., $\wR(0, \theta){=} \overline{R}(\theta)$.\footnote{$\wR(0;\theta)$ is defined in~\eqref{eq:def-R0} via the continuous extension of $R(t; \theta)$.} It also recovers other popular alternatives such as the max-loss ($t{\to}{+}\infty$) and min-loss ($t{\to}{-}\infty$) (Lemma~\ref{lemma: special_cases}). For $t{>}0$, the objective is a common form of exponential smoothing, used to approximate the max~\citep{kort1972new, pee2011solving}.
Variants of tilting have been studied in several contexts, including robust regression~\citep{wang2013robust} $(t{<}0)$, importance sampling~\citep{wainwright2005new}, sequential decision making~\citep{howard1972risk,Nass2019EntropicRM}, and large deviations theory~\citep{ beirami2018characterization}. However, despite the rich history of tilted objectives, they have not seen widespread use in machine learning. In this work, we aim to bridge this gap by: (i) rigorously studying the objective in a general form, and (ii) exploring its utility for a number of ML applications. Surprisingly, we find that this simple extension to ERM is competitive for a wide range of problems.

To highlight how the TERM objective can help with issues such as outliers or imbalanced classes, we discuss three motivating examples below, which are illustrated in Figure~\ref{fig:example}.

\begin{figure}[t!]
    \centering
    \includegraphics[width=1.0\textwidth]{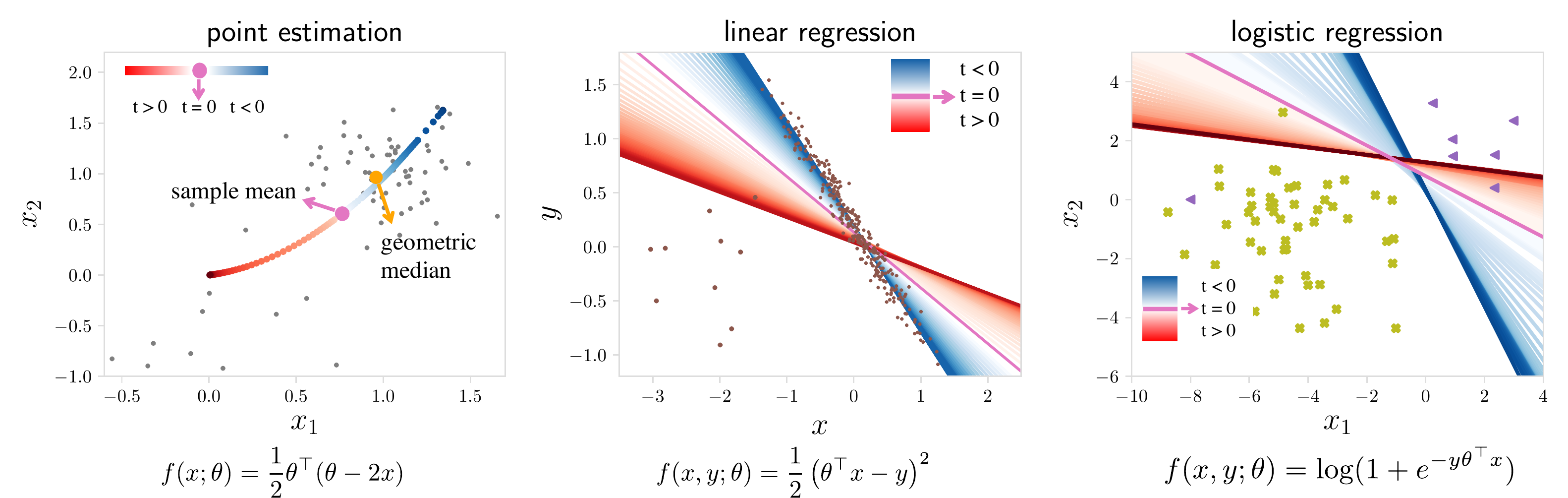}
    \vspace{-0.05in}
    \caption{\small Toy examples illustrating TERM as a function of $t$: (a) finding  a point estimate from a set of 2D samples, (b) linear regression with outliers, and (c) logistic regression with imbalanced classes. While positive values of $t$ magnify outliers, negative values suppress them. Setting $t{=}0$ recovers the original ERM objective~\eqref{eq: ERM}.
    }
    \label{fig:example}
\vspace{-.18in}
\end{figure}

\textit{(a) Point estimation:}  As a first example, consider determining a point estimate from a set of samples that contain some outliers. We plot an example 2D dataset in Figure~\ref{fig:example}a, with data centered at (1,1). Using traditional ERM (i.e., TERM with $t=0$) recovers the \textit{sample mean}, which can be biased towards outlier data. 
By setting $t<0$, TERM can suppress outliers by reducing the relative impact of the largest losses (i.e., points that are far from the estimate) in~\eqref{eq: TERM}. A specific value of $t<0$ can in fact approximately recover the geometric median, as the objective in~\eqref{eq: TERM} can be viewed as approximately optimizing specific loss quantiles (a connection which we make explicit in Section~\ref{sec:term_properties}). In contrast, if these `outlier' points are important to estimate, setting $t>0$ will push the solution towards a  point that aims to minimize variance, as we prove more rigorously in Section~\ref{sec:term_properties}, Theorem~\ref{thm: variance-reduction}.

\textit{(b) Linear regression:} A similar interpretation holds for the case of linear regression (Figure 2b). As $t \to -\infty,$ 
TERM finds a line of best while ignoring outliers. 
However, this solution may not be preferred if we have reason to believe that these `outliers' should not be ignored. As $t \to +\infty,$ TERM recovers the min-max solution, which aims to minimize the worst loss, thus ensuring the model is a reasonable fit for \textit{all} samples (at the expense of possibly being a worse fit for many). Similar criteria have been used, e.g., in defining notions of fairness~\citep{hashimoto2018fairness,samadi2018price}. We explore several use-cases involving robust regression and fairness in more detail in Section~\ref{sec:experiments}.

\textit{(c) Logistic regression:} Finally, we consider a binary classification problem using logistic regression (Figure 2c). For $t \in \mathbb{R}$,
the TERM solution varies from the nearest cluster center ($t{\to} {-}\infty$), to the logistic regression classifier ($t{=}0$),  towards a classifier that magnifies the misclassified data ($t{\to}{+}\infty$). 
We note that it is common to modify logistic regression classifiers by adjusting the decision threshold from $0.5$, which is equivalent to moving the intercept of the decision boundary.
This is fundamentally different than what is offered by TERM (where the slope is changing). As we show in Section~\ref{sec:experiments}, this added flexibility affords TERM with competitive performance on a number of classification problems, such as those involving noisy data, class imbalance, or a combination of the two.

{\bf Contributions.} 
In this work, we explore  TERM as a simple, unified framework to flexibly address various challenges with empirical risk minimization.
We first analyze the  objective and its solutions, showcasing the behavior of TERM with varying $t$ (Section~\ref{sec:term_properties}). Our analysis provides novel connections between tilted objectives and superquantile methods.
We develop efficient methods for solving TERM (Section~\ref{sec:solver}), and show via numerous case studies that TERM is competitive with  existing, problem-specific state-of-the-art solutions (Section~\ref{sec:experiments}). We also extend TERM to  handle compound issues, such as the simultaneous existence of noisy samples and imbalanced classes (Section~\ref{sec:term-extended}). 
Our results demonstrate the effectiveness and versatility of tilted objectives in machine learning.

\section{TERM: Properties \& Interpretations}\label{sec:term_properties}

To better understand the performance of the $t$-tilted losses in~\eqref{eq: TERM}, we provide several interpretations of the TERM solutions, leaving the full statements of theorems and proofs to the appendix. We make no distributional assumptions on the data, and study properties of TERM under the assumption that the loss function forms a generalized linear model, e.g., $L_2$ loss and logistic loss (Appendix~\ref{app:general-TERM-GLM}).
However, we also obtain favorable empirical results using TERM with other objectives such as deep neural networks and PCA in Section~\ref{sec:experiments}, motivating the extension of our theory beyond GLMs in future work. 

\begin{wrapfigure}[21]{r}{0.41\textwidth}
    \centering
    \hspace{-.14in}
    \includegraphics[trim=0 0 0 5,clip, width=0.44\textwidth]{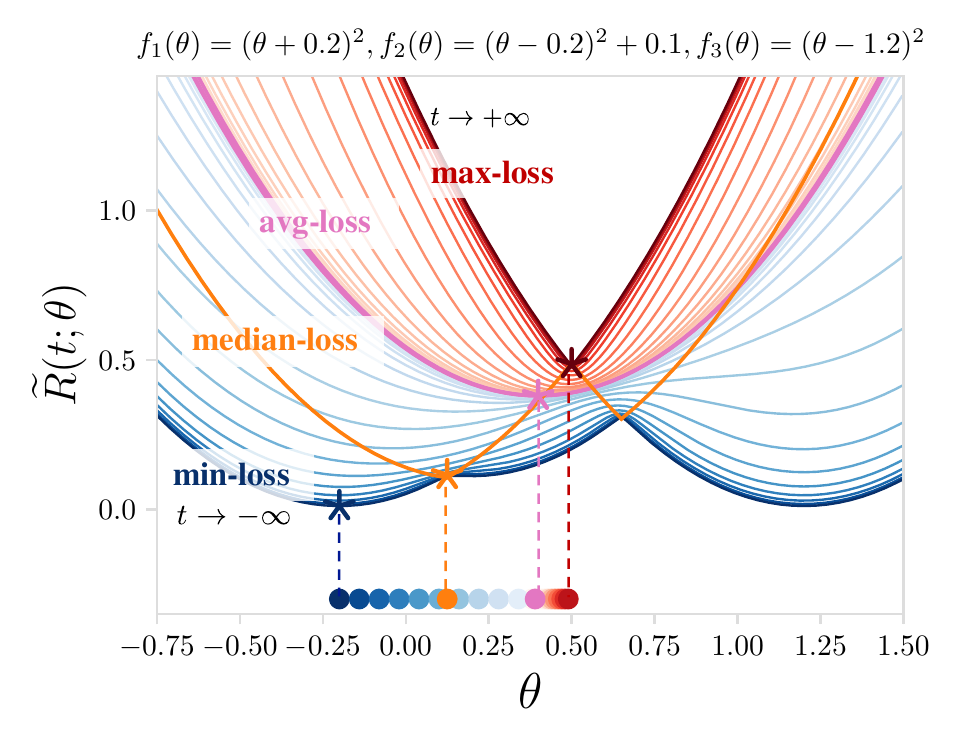}
    \caption{\small TERM objectives for a squared loss problem with $N=3$. As $t$ moves from $-\infty$ to $+\infty$, $t$-tilted losses recover  min-loss, avg-loss, and  max-loss. TERM is smooth for all finite $t$ and convex for positive $t$.}
    \label{fig:toy1}
\end{wrapfigure}
\textbf{General properties.} We begin by noting several general properties of the TERM objective~\eqref{eq: TERM}. Given a smooth $f(x; \theta)$, the $t$-tilted loss is smooth for all finite $t$ (Lemma~\ref{lemma:TERM-smoothness}). If $f(x; \theta)$ is strongly convex, the $t$-tilted loss is strongly convex for  $t>0$ (Lemma~\ref{lemma:TERM-convexity}). We visualize the solutions to TERM for a toy problem in Figure~\ref{fig:toy1}, which allows us to illustrate several special cases of the general framework. As discussed in Section~\ref{sec:introduction}, TERM can recover traditional ERM ($t{=}0$), the max-loss ($t{\to}{+}\infty$), and the min-loss ($t{\to}{-}\infty$). As we demonstrate in Section~\ref{sec:experiments}, providing a smooth tradeoff between these specific losses can be beneficial for a number of practical use-cases---both in terms of the resulting solution and the difficulty of solving the problem itself. Interestingly, we additionally show that the TERM solution can be viewed as a smooth approximation to a \textit{superquantile} method, which aims to minimize quantiles of losses such as the median loss. In Figure~\ref{fig:toy1}, it is clear to see why this may be beneficial, as the median loss (orange) can be highly non-smooth in practice. We make these rough connections more explicit via the interpretations below.

{\bf (Interpretation 1) Re-weighting samples to magnify/suppress outliers.} As discussed via the toy examples in Section 1, the TERM objective can be tuned (using $t$) to magnify or suppress the influence of outliers. We make this notion rigorous by exploring the \textit{gradient} of the $t$-tilted loss in order to reason about the solutions to the objective  defined in~\eqref{eq: TERM}.
\begin{lemma}[Tilted gradient, proof in Appendix~\ref{app:general-TERM-properties}]
\label{lemma:TERM-gradient}
For a smooth loss function $f(x; \theta)$,
\begin{equation}
\setlength{\thinmuskip}{0mu}
\setlength{\medmuskip}{0.75mu}
\setlength{\thickmuskip}{0.75mu}
\label{eq: tilted_grad}
    \nabla_\theta \wR(t ;\theta){=}\sum_{i \in [N]} w_i(t; \theta) \nabla_\theta f(x_i;\theta), \hspace{.2em} \text{ where } \hspace{.2em} w_i (t; \theta) := \frac{e^{t f(x_i; \theta)}}{\sum_{j \in [N]} e^{t f(x_j; \theta)}} = \frac{1}{N}e^{t(f(x_i; \theta) - \wR(t; \theta))}.
\end{equation}

\end{lemma}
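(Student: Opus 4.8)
The plan is to compute the gradient directly by the chain rule, viewing $\wR(t;\theta)$ as the composition of the scalar map $u \mapsto \tfrac{1}{t}\log u$ with the inner function $g(\theta) := \tfrac{1}{N}\sum_{i\in[N]} e^{t f(x_i;\theta)}$. First I would observe that since each $f(x_i;\theta)$ is smooth in $\theta$ and the sum is finite, $g$ is smooth and strictly positive (each summand is a positive exponential), so $\wR(t;\theta) = \tfrac{1}{t}\log g(\theta)$ is differentiable and the gradient may be interchanged with the finite sum without difficulty.

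Next I would apply the chain rule termwise. Differentiating a single summand gives $\nabla_\theta e^{t f(x_i;\theta)} = t\, e^{t f(x_i;\theta)} \nabla_\theta f(x_i;\theta)$, so $\nabla_\theta g(\theta) = \tfrac{t}{N}\sum_{i\in[N]} e^{t f(x_i;\theta)} \nabla_\theta f(x_i;\theta)$. Then $\nabla_\theta \wR(t;\theta) = \tfrac{1}{t}\,\tfrac{\nabla_\theta g(\theta)}{g(\theta)}$, and substituting the expressions for $g$ and $\nabla_\theta g$ makes the factors of $t$ and $1/N$ cancel, leaving $\nabla_\theta \wR(t;\theta) = \sum_{i\in[N]} \tfrac{e^{t f(x_i;\theta)}}{\sum_{j\in[N]} e^{t f(x_j;\theta)}} \nabla_\theta f(x_i;\theta)$. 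This is precisely $\sum_{i\in[N]} w_i(t;\theta) \nabla_\theta f(x_i;\theta)$ with the first stated form of the weights.

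Finally I would verify the equivalent second expression for $w_i$. From the definition of $\wR$ in~\eqref{eq: TERM} we have $e^{t \wR(t;\theta)} = \tfrac{1}{N}\sum_{j\in[N]} e^{t f(x_j;\theta)}$, hence $\sum_{j\in[N]} e^{t f(x_j;\theta)} = N\, e^{t \wR(t;\theta)}$. Substituting this denominator into the first form and combining the exponentials yields $w_i(t;\theta) = \tfrac{1}{N} e^{t(f(x_i;\theta) - \wR(t;\theta))}$, as claimed.

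There is no substantive obstacle here: the result is a routine application of the chain rule. The only points demanding any care are confirming that differentiation commutes with the finite sum (immediate) and that $g(\theta) > 0$ so that $\log g$ and the division by $g$ are well defined (immediate from positivity of the exponentials). The derivation holds for every finite $t \neq 0$; the boundary case $t = 0$ is excluded from this statement and is handled separately through the continuous extension of $\wR$.
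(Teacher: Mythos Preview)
Your proof is correct and follows essentially the same approach as the paper: a direct application of the chain rule to $\tfrac{1}{t}\log\bigl(\tfrac{1}{N}\sum_i e^{tf(x_i;\theta)}\bigr)$, yielding the softmax-weighted gradient. The paper's derivation is terser (it does not separately verify the second form of $w_i$ or discuss positivity of $g$), but the substance is identical.
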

 From this, we can observe that the tilted gradient is a weighted average of the gradients of the original individual losses, where each data point is weighted exponentially proportional to the value of its loss.  
Note that $t=0$ recovers the uniform weighting associated with ERM, i.e., $w_i(t; \theta) = 1/N$. For positive $t,$ it {\it magnifies} the outliers---samples with large losses---by assigning more weight to them, and for negative $t,$ it {\it suppresses} the outliers by assigning less weight to them.

\textbf{(Interpretation 2) Tradeoff between average-loss and min/max-loss.}
To put \textit{Interpretation 1} in context and understand the limits of TERM, a benefit of the framework is that it offers a continuum of solutions between the min and max losses. Indeed, for positive values of $t$, TERM enables a smooth tradeoff between the average-loss and max-loss (as we demonstrate in Figure~\ref{fig:property_loss_tradeoff}, Appendix~\ref{appen:exp_full}). Hence, TERM can selectively improve the worst-performing losses by paying a  penalty on average performance, thus promoting a notion of uniformity or fairness~\citep{hashimoto2018fairness}.  
On the other hand, for negative $t,$ the solutions achieve a smooth tradeoff between average-loss and min-loss, which can have the benefit of focusing on the `best' losses, or ignoring outliers (Theorem~\ref{thm:tradeoff-2}, Appendix~\ref{app:general-TERM-GLM}).

\begin{theorem*}[Formal statement and proof in Appendix~\ref{app:general-TERM-GLM}, Theorem~\ref{thm:tradeoff-2}] Let $\breve{\theta}(t)$ be the minimizer of $\wR(t; \theta)$, referred to as $t$-tilted solution. Then, {for $t>0$,} max-loss, 
$\widehat{R}( \breve{\theta}(t)),$ is non-increasing with $t$ while the average loss, $\overline{R}( \breve{\theta}(t))$, is non-decreasing with  $t$. 
\end{theorem*}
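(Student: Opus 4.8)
The plan is to reduce everything to one scalar fact—that the tilted objective is monotone in the tilt—together with a revealed-preference comparison between two optimal solutions, avoiding any differentiation. Since $u \mapsto \tfrac1t\log u$ is increasing for $t>0$, the minimizer $\breve{\theta}(t)$ of $\wR(t;\theta)$ also minimizes $\Phi_t(\theta) := \tfrac1N\sum_{i\in[N]} e^{t f(x_i;\theta)}$, hence minimizes $P_\theta(t) := \log\Phi_t(\theta)$. For fixed $\theta$, $P_\theta$ is the cumulant generating function of the loss profile $\{f(x_i;\theta)\}$ under uniform weights: it satisfies $P_\theta(0)=0$, and $P_\theta'(s)=\sum_i w_i(s;\theta)\,f(x_i;\theta)=:\mu_s(\theta)$ is the tilted mean of the losses (with $w_i$ as in Lemma~\ref{lemma:TERM-gradient}). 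Because $P_\theta''(s)=\mathrm{Var}_{w(s;\theta)}\!\big(f(x_i;\theta)\big)\ge 0$—the same variance computation underlying the monotonicity of $\wR(t;\theta)$ in $t$—the map $s\mapsto\mu_s(\theta)$ is non-decreasing, with endpoints $\mu_0(\theta)=\overline{R}(\theta)$ and $\mu_\infty(\theta)=\widehat{R}(\theta)$.

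First I would set up the revealed-preference inequalities. Fix $0<t_1<t_2$ and write $\theta_1:=\breve{\theta}(t_1)$, $\theta_2:=\breve{\theta}(t_2)$ (unique by strong convexity of $\wR(t;\cdot)$ for $t>0$, Lemma~\ref{lemma:TERM-convexity}). Optimality gives $P_{\theta_1}(t_1)\le P_{\theta_2}(t_1)$ and $P_{\theta_2}(t_2)\le P_{\theta_1}(t_2)$. Writing $P_\theta(t)=\int_0^t\mu_s(\theta)\,ds$ and $\delta(s):=\mu_s(\theta_1)-\mu_s(\theta_2)$, these read $\int_0^{t_1}\delta(s)\,ds\le 0$ and $\int_{t_1}^{t_2}\delta(s)\,ds\ge 0$. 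The two claims of the theorem are exactly the endpoint signs $\delta(0)=\overline{R}(\theta_1)-\overline{R}(\theta_2)\le 0$ (average non-decreasing) and $\delta(\infty)=\widehat{R}(\theta_1)-\widehat{R}(\theta_2)\ge 0$ (max non-increasing).

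The key step—and the one I expect to be the main obstacle—is to show that $\delta$ changes sign at most once. Granting this, the two integral inequalities pin down the direction of the crossing: $\int_0^{t_1}\delta\le 0$ shows $\delta$ is negative somewhere on $(0,t_1)$, while $\int_{t_1}^{t_2}\delta\ge 0$ shows it is positive somewhere on $(t_1,t_2)$; since under a single crossing these regions cannot be interleaved, $\delta$ is $\le 0$ before the crossing and $\ge 0$ after, giving $\delta(0)\le 0\le\delta(\infty)$, which is precisely what we want. Single-crossing of $\delta$ is equivalent to quasi-convexity of the ratio $s\mapsto\Phi_s(\theta_1)/\Phi_s(\theta_2)$, i.e. to $P_{\theta_1}(s)-P_{\theta_2}(s)$ having a single interior minimum. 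This is false for arbitrary loss vectors—a difference of two convex CGFs can oscillate—so this is where I expect the generalized-linear-model hypothesis to be essential: I would use the GLM forms $\nabla_\theta f(x_i;\theta)=\ell_i'\,x_i$ and $\nabla_\theta^2 f(x_i;\theta)=\ell_i''\,x_ix_i^\top$ to control how the profile $\{f(x_i;\theta)\}$ deforms along the path $t\mapsto\breve{\theta}(t)$ and thereby bound the number of sign changes of $\delta$.

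As a fallback I would prove the infinitesimal version directly. Strong convexity (Lemma~\ref{lemma:TERM-convexity}) plus the implicit function theorem applied to the stationarity condition $\nabla_\theta\wR(t;\breve{\theta}(t))=0$ makes $\breve{\theta}(t)$ differentiable, with $\breve{\theta}'(t)=-\big(\nabla_\theta^2\wR\big)^{-1}\partial_t\nabla_\theta\wR$; using Lemma~\ref{lemma:TERM-gradient} and the stationarity condition $\sum_i w_i\nabla_\theta f(x_i;\theta)=0$, the second factor collapses to the tilted covariance $\sum_i w_i\,f(x_i;\theta)\,\nabla_\theta f(x_i;\theta)$, and the claims reduce to signing $\tfrac{d}{dt}\overline{R}(\breve{\theta}(t))=\nabla_\theta\overline{R}^\top\breve{\theta}'(t)$ and its max-loss analogue (the latter handled by Danskin's theorem at the active index). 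The sign of these quadratic forms is again the crux and is exactly where GLM structure enters; integrating the resulting pointwise inequalities in $t$ then recovers the stated monotonicity.
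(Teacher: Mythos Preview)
Your fallback is precisely the route the paper takes, but the decisive computation you leave as ``where GLM structure enters'' is the whole proof, and the GLM form you wrote is not the one in force. The paper assumes $f(x;\theta)=A(\theta)-\theta^\top T(x)$ (so $\nabla_\theta f(x_i;\theta)=\nabla_\theta A(\theta)-T(x_i)$, not $\ell_i' x_i$), and this linearity in $T(x)$ is what makes everything close. With this form one defines the tilted mean and covariance of the sufficient statistic,
\[
\mathcal{M}(t;\theta)=\tfrac1N\sum_i T(x_i)\,e^{-t\theta^\top T(x_i)-\Gamma(t;\theta)},\qquad
\mathcal{V}(t;\theta)=\tfrac1N\sum_i (T(x_i)-\mathcal{M})(T(x_i)-\mathcal{M})^\top e^{-t\theta^\top T(x_i)-\Gamma(t;\theta)},
\]
and the implicit function theorem gives $\partial_t\breve{\theta}(t)=-(\nabla^2 A+t\mathcal{V})^{-1}\mathcal{V}\,\breve{\theta}$ at $\theta=\breve{\theta}(t)$. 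The key calculation is actually more general than the two endpoints you target: for \emph{any} $\tau$,
\[
\frac{\partial}{\partial t}\,\wR\bigl(\tau;\breve{\theta}(t)\bigr)
=\breve{\theta}(t)^\top\,\mathcal{V}(t;\breve{\theta}(t))\,\bigl(\nabla^2A+t\mathcal{V}\bigr)^{-1}\Bigl(\int_\tau^{t}\mathcal{V}(\nu;\breve{\theta}(t))\,d\nu\Bigr)\,\breve{\theta}(t),
\]
where the crucial identity $\mathcal{M}(t;\theta)-\mathcal{M}(\tau;\theta)=-\bigl(\int_\tau^{t}\mathcal{V}(\nu;\theta)\,d\nu\bigr)\theta$ comes from $\partial_t\mathcal{M}=-\mathcal{V}\theta$. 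The sign of the integral factor is determined solely by the sign of $t-\tau$, and the theorem follows by sending $\tau\to+\infty$ (max-loss) and $\tau\to 0$ (average loss). Your Danskin argument for the max is replaced by this uniform-in-$\tau$ statement plus a limit.

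Your primary revealed-preference route is appealing but has a real gap that I do not see how to close, even under GLM. You need $\delta(s)=\mu_s(\theta_1)-\mu_s(\theta_2)$ to change sign at most once, but under the paper's GLM one computes $\mu_s(\theta)=A(\theta)-\theta^\top\mathcal{M}(s;\theta)$ and hence $\delta'(s)=\theta_1^\top\mathcal{V}(s;\theta_1)\theta_1-\theta_2^\top\mathcal{V}(s;\theta_2)\theta_2$, a difference of two unrelated positive functions of $s$ with no monotonicity or variation-diminishing structure. The two integral inequalities you derived are correct but, absent single crossing, they do not pin down $\delta(0)$ or $\delta(\infty)$. In short: the paper's proof is your fallback with the GLM-specific algebra carried through; the novelty you propose hinges on a single-crossing lemma that does not appear to follow from the hypotheses.
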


\textbf{(Interpretation 3) Empirical bias/variance tradeoff.}
Another key property of the TERM solutions is that the \textit{empirical variance} of the loss across all samples decreases as $t$ increases  (Theorem~\ref{thm: variance-reduction}). Hence, by increasing $t$, it is possible to trade off between optimizing the average loss vs. reducing variance, allowing the solutions to potentially  achieve a better bias-variance tradeoff for generalization~\citep{maurer2009empirical,bennett1962probability,hoeffding1994probability} (Figure~\ref{fig:property_loss_tradeoff}, Appendix~\ref{appen:exp_full}).
We use this property to achieve better generalization in classification in Section~\ref{sec:experiments}.
We also prove that the cosine similarity between the loss vector and the all-ones vector monotonically increases with $t$ (Theorem~\ref{thm: cosine-similarity}), which shows that larger $t$ promotes a more \textit{uniform} performance across all losses and can have implications for fairness defined as representation disparity~\citep{hashimoto2018fairness} (Section~\ref{sec:exp:pca}).
\begin{theorem*}[Formal statement and proof in Appendix~\ref{app:general-TERM-GLM}, Theorem~\ref{thm: variance-reduction}]
Let $\mathbf{f}(\theta):= (f(x_1; \theta)), \ldots, f(x_N; \theta))$ be the loss vector for parameter $\theta$. Then, the variance of the vector $\mathbf{f}(\breve{\theta}(t))$ is non-increasing with $t$ while its average, i.e., $\overline{R}( \breve{\theta}(t)),$ is non-decreasing with $t$. 
\end{theorem*}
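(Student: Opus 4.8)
The plan is to track the solution path $\breve\theta(t)$ through the first-order stationarity condition and convert each monotonicity claim into a sign condition on a total derivative in $t$. First I would record the stationarity condition supplied by Lemma~\ref{lemma:TERM-gradient}: at $\theta=\breve\theta(t)$ we have $\sum_{i}w_i(t;\breve\theta(t))\,\nabla_\theta f(x_i;\breve\theta(t))=0$. For $t>0$ and strongly convex $f$, Lemma~\ref{lemma:TERM-convexity} makes $\wR(t;\cdot)$ strongly convex, so $\breve\theta(t)$ is the unique minimizer, the Hessian $H(t):=\nabla^2_\theta\wR(t;\breve\theta(t))$ is positive definite, and the implicit function theorem guarantees that $t\mapsto\breve\theta(t)$ is continuously differentiable. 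This is what legitimizes replacing ``non-increasing / non-decreasing'' by the signs of $\tfrac{d}{dt}$ along the path.

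Next I would differentiate. Using $\partial_t w_i=w_i(f_i-\mu_w)$, with $f_i:=f(x_i;\breve\theta(t))$ and $\mu_w:=\sum_j w_j f_j$, differentiation of the stationarity condition in $t$ yields $\dot\theta:=\tfrac{d\breve\theta}{dt}=-H(t)^{-1}c(t)$, where $c(t):=\sum_i w_i(f_i-\mu_w)\nabla_\theta f_i$. Because $\sum_i w_i\nabla_\theta f_i=0$ at the stationary point, $c(t)$ is exactly the $w$-weighted covariance between the losses and their gradients. Writing $\bar R(t):=\overline{R}(\breve\theta(t))$ and $V(t):=\var(\mathbf{f}(\breve\theta(t)))$, the chain rule gives $\tfrac{d\bar R}{dt}=\big(\tfrac1N\sum_i\nabla_\theta f_i\big)^\top\dot\theta$ and $\tfrac{dV}{dt}=\tfrac2N\sum_i(f_i-\bar R)\,\nabla_\theta f_i^\top\dot\theta$. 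Substituting $\dot\theta=-H(t)^{-1}c(t)$ turns both derivatives into quadratic forms built from $c(t)$ and $H(t)^{-1}$, so the goal reduces to showing $\tfrac{d\bar R}{dt}\ge 0$ and $\tfrac{dV}{dt}\le 0$.

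The main obstacle is the sign of these quadratic forms: the vector contracting $H^{-1}$ on the left (a uniform average gradient for $\bar R$, a loss-centered average gradient for $V$) is not proportional to $c$ on the right, so positive definiteness of $H$ does not by itself decide the sign. This is where I expect the generalized-linear-model hypothesis to be indispensable: writing $f(x_i;\theta)=a_i(\langle x_i,\theta\rangle)$ makes each gradient $\nabla_\theta f_i=a_i'\,x_i$ collinear with the fixed direction $x_i$ and each Hessian $\nabla^2_\theta f_i=a_i''\,x_ix_i^\top$ rank one, so that $H$, $c$, and the two averaging vectors all collapse into weighted second-moment matrices and vectors of $\{x_i\}$ modulated by scalar residual-like factors. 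The stationarity identity $\sum_i w_i a_i'\,x_i=0$ then pins down the alignment of these reweighted covariances, which is the computation I would grind through to extract the two signs.

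As a direction check I would use the cumulant expansion $\wR(t;\theta)=\overline{R}(\theta)+\tfrac t2\var(\mathbf{f}(\theta))+O(t^2)$, whose variance-penalty reading predicts exactly that increasing $t$ lowers the loss variance while raising the average loss, matching both target signs; I would also note that the average-loss half coincides with Theorem~\ref{thm:tradeoff-2}, so the genuinely new content is the variance monotonicity. Finally, as a companion fact I would first verify that the tilted loss itself is monotone, $\partial_t\wR=\tfrac1{t^2}\,\mathrm{KL}(w\,\|\,u)\ge 0$ with $u$ the uniform weights, which fixes the sign conventions and exposes the cumulant structure $\partial_t(t\wR)=\mu_w$ and $\partial_t\mu_w=\sum_i w_i(f_i-\mu_w)^2$ that the derivative computation above exploits.
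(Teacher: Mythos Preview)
Your overall architecture matches the paper exactly: differentiate the stationarity condition to get $\dot\theta=-H(t)^{-1}c(t)$, then compute $\tfrac{d}{dt}\var(\mathbf f(\breve\theta(t)))$ and $\tfrac{d}{dt}\overline R(\breve\theta(t))$ via the chain rule, and finally appeal to the GLM hypothesis to pin down the signs. You are also right that the average-loss half is already handled by Theorems~\ref{thm:tradeoff} and~\ref{thm:tradeoff-2}, so the new content is the variance claim.

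The gap is in your GLM model. You take $f(x_i;\theta)=a_i(\langle x_i,\theta\rangle)$, a single-index model with rank-one Hessians $a_i''\,x_ix_i^\top$. That is \emph{not} Assumption~\ref{assump: expnential_family}: the paper's GLM is the exponential-family negative log-likelihood
\[
f(x_i;\theta)=A(\theta)-\theta^\top T(x_i),
\]
and its structure is what makes the argument close. Three consequences of this form, none of which hold for your single-index model, are used:
\begin{itemize}
\item All individual Hessians are \emph{identical}, $\nabla^2_\theta f(x_i;\theta)=\nabla^2_\theta A(\theta)$, so the tilted Hessian is simply $H(t)=\nabla^2 A(\breve\theta)+t\,\cV(t;\breve\theta)$ (Lemma~\ref{lem: dLambda-theta-theta}).
\item Loss differences are \emph{linear} in $\theta$: $f_i(\theta)-f_j(\theta)=-\theta^\top(T(x_i)-T(x_j))$. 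Consequently $\var(\mathbf f(\theta))=\theta^\top\cV_0\,\theta$, where $\cV_0$ is the \emph{fixed} empirical covariance of the sufficient statistics $T(x_i)$, independent of $\theta$.
\item Your covariance vector $c(t)$ collapses to $\cV(t;\breve\theta)\,\breve\theta$; this is exactly Lemma~\ref{lem:implicit-func-theorem}, giving $\dot\theta=-(\nabla^2 A+t\cV)^{-1}\cV\,\breve\theta$.
\end{itemize}
With these three facts the variance derivative becomes
\[
\frac{d}{dt}\var(\mathbf f(\breve\theta(t)))
=2\,\dot\theta^{\top}\cV_0\,\breve\theta
=-2\,\breve\theta^\top\,\cV(t;\breve\theta)\,(\nabla^2 A+t\cV)^{-1}\,\cV_0\,\breve\theta,
\]
which is the expression the paper declares negative. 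Under your single-index form none of the three bullets holds: the Hessians differ across $i$, $\var(\mathbf f(\theta))$ is not a fixed quadratic in $\theta$, and $c(t)$ does not factor as a covariance matrix acting on $\breve\theta$. The ``grind through'' you plan would therefore not land on a sign-definite object. The fix is to switch to the exponential-family form $A(\theta)-\theta^\top T(x)$ and exploit the linearity in $T(x_i)$ throughout; then both the left and right vectors in your quadratic form become covariance matrices applied to the single vector $\breve\theta$, which is the alignment you were looking for. (A minor secondary point: you restrict to $t>0$ to invoke strong convexity, whereas the paper states the result for all $t\in\mathbb R$ under the strict-saddle Assumption~\ref{assump:strict-saddle}, which is what guarantees $H(t)\succ 0$ at $\breve\theta(t)$ even for negative $t$.)
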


{\bf (Interpretation 4) Approximate superquantile method.} 
Finally, we show that TERM is related to \textit{superquantile}-based objectives, which aim to minimize specific quantiles of the individual losses that exceed a certain value~\citep{rockafellar2000optimization}. 
For example, optimizing for 90\% of the individual losses (ignoring the worst-performing 10\%) 
could be a more reasonable practical objective than the pessimistic min-max objective. Another common application of this is to use the median in contrast to the mean in the presence of noisy outliers.
As we discuss in Appendix~\ref{app:superquantile}, superquantile methods can be reinterpreted as minimizing the 
$k$-loss, defined as the $k$-th smallest loss of $N$ (i.e., $1$-loss is the min-loss, $N$-loss is the max-loss, $\small (N{-}1)/2$-loss is the median-loss). 
While minimizing the $k$-loss is more desirable than ERM in many applications, 
the $k$-loss is non-smooth (and generally non-convex), and is challenging to solve for large-scale problems~\citep{jin2019minmax, nouiehed2019solving}. 

\begin{theorem*}[Formal statement and proof in Appendix~\ref{app:superquantile}, Theorem~\ref{thm:Q-opt}]
The quantile of the losses that exceed a given value is upper bounded by a smooth function of the TERM objective. Further, the $t$-tilted solutions are good approximate solutions of the superquantile ($k$-loss) optimization.
\end{theorem*}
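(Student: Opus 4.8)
The plan is to prove the two assertions from a single elementary device, the Chernoff-type bound $\mathbf{1}[s>\tau]\le e^{t(s-\tau)}$, which holds for every $t>0$ and all reals $s,\tau$.

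For the first assertion, fix a threshold $\tau$ and let $\wQ(\tau;\theta):=\tfrac1N\,|\{i\in[N]:f(x_i;\theta)>\tau\}|$ be the fraction of losses exceeding $\tau$. First I would instantiate the pointwise bound at $s=f(x_i;\theta)$ and average over $i$, which gives
\[
\wQ(\tau;\theta)\;\le\;\frac1N\sum_{i\in[N]}e^{t(f(x_i;\theta)-\tau)}\;=\;e^{-t\tau}\Big(\tfrac1N\sum_{i\in[N]}e^{tf(x_i;\theta)}\Big)\;=\;e^{t(\wR(t;\theta)-\tau)},
\]
where the final equality merely rewrites the log-sum-exp through definition~\eqref{eq: TERM}. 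This already delivers the first sentence: the exceedance quantile is dominated by $e^{t(\wR(t;\theta)-\tau)}$, a smooth function of the TERM objective, so minimizing $\wR(t;\cdot)$ pushes this tail surrogate down.

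For the second assertion I would turn the same idea into a sandwich on the $k$-loss. Writing the order statistics $f_{(1)}(\theta)\le\cdots\le f_{(N)}(\theta)$, so that the $k$-loss is $f_{(k)}(\theta)$, the key step is to retain only the largest $N-k+1$ summands of the tilted sum: since $f_{(i)}(\theta)\ge f_{(k)}(\theta)$ for $i\ge k$,
\[
e^{t\wR(t;\theta)}=\frac1N\sum_{i\in[N]}e^{tf_{(i)}(\theta)}\;\ge\;\frac{N-k+1}{N}\,e^{tf_{(k)}(\theta)}.
\]
Taking logarithms and dividing by $t>0$ yields the one-sided bound
\[
f_{(k)}(\theta)\;\le\;\wR(t;\theta)+\frac1t\log\frac{N}{N-k+1}\;=:\;\wR(t;\theta)+c(t,k),\qquad c(t,k)\ge 0,
\]
which I would read as saying that the nonsmooth $k$-loss is majorized by the smooth TERM objective up to the explicit $O(\tfrac{\log N}{t})$ smoothing penalty $c(t,k)$ (and $c(t,N)=\tfrac{\log N}{t}$ recovers the familiar max-loss gap).

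The approximate-optimality claim then follows by transferring optimality. Let $\breve\theta(t)$ minimize $\wR(t;\cdot)$ and let $\theta^\star_k$ minimize the $k$-loss; evaluating the majorization at $\breve\theta(t)$ and using $\wR(t;\breve\theta(t))\le\wR(t;\theta^\star_k)$ gives $f_{(k)}(\breve\theta(t))\le\wR(t;\theta^\star_k)+c(t,k)$. The hard part is the reverse direction needed to replace $\wR(t;\theta^\star_k)$ by $f_{(k)}(\theta^\star_k)$: the tilted sum is trivially lower bounded by its top $N-k+1$ terms, but upper bounding it in terms of $f_{(k)}$ alone is impossible without also charging for the larger losses $f_{(k+1)},\dots,f_{(N)}$. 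I would handle this by splitting the sum into its smallest $k$ and largest $N-k$ blocks, bounding the first by $\tfrac kN e^{tf_{(k)}}$ and carrying the second as an explicit residual, so that ``good approximate solution'' is quantified by a gap governed jointly by $c(t,k)$ and the spread of the top losses. In the extreme $k=N$ this residual and $c(t,N)$ both vanish as $t\to\infty$, and via Lemma~\ref{lemma: special_cases} the bound collapses to the exact max-loss, confirming that the sandwich is tight in the limit.
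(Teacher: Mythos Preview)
Your argument for the first assertion is correct but coarser than the paper's. You use the plain Chernoff inequality $\mathbf{1}[s>\tau]\le e^{t(s-\tau)}$, which yields $Q(\tau;\theta)\le e^{t(\wR(t;\theta)-\tau)}$ and is valid only for $t>0$. The paper instead shifts everything by the global min--min value $\wF(-\infty)=\min_\theta\min_i f(x_i;\theta)$ before applying the bound $\mathbb{I}\{x\ge 1\}\le x$, obtaining
\[
Q(a;\theta)\;\le\;\frac{e^{t\wR(t;\theta)}-e^{t\wF(-\infty)}}{e^{ta}-e^{t\wF(-\infty)}}.
\]
This is strictly tighter than your bound whenever $\wR(t;\theta)<a$ (the regime where the bound is informative), and---because both numerator and denominator flip sign together---it remains valid for $t<0$, which your Chernoff device cannot handle. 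So your route is more elementary and self-contained; the paper's buys a sharper inequality and full coverage of $t\in\mathbb{R}$.

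For the second assertion the two presentations diverge more. You try to sandwich $f_{(k)}(\theta)$ directly and are candid that the reverse inequality needs an uncontrolled residual involving the top losses. The paper does not attempt a two-sided sandwich: it minimizes its upper bound over $t$ to define $\wt(a)$ and $Q^3(a)$, proves the chain $Q^0(a)\le Q^1(a)\le Q^2(a)\le Q^3(a)$, and then states the near-optimality $Q^0(a)\approx Q^2(a)$ as \emph{Approximation~1}, supported only by numerical evidence rather than a proof. In other words, your ``hard part'' is genuinely hard, and the paper sidesteps rather than solves it---so your acknowledged gap is not a flaw relative to what the paper actually establishes.
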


\section{TERM Extended: Hierarchical Multi-Objective Tilting}\label{sec:term-extended}

We also consider an extension of TERM that can be used to address practical applications requiring multiple objectives, e.g., simultaneously achieving robustness to noisy data \textit{and} ensuring fair performance across subgroups. Existing approaches typically aim to address such problems in isolation. 
To handle multiple objectives with TERM, let each sample $x$ be associated with a group $g \in [G],$ i.e., $x \in g.$ These groups could be related to the labels (e.g., classes in a classification task), or may depend only on features. For any $t, \tau \in \mathbb{R},$ we define multi-objective TERM as: 
\vspace{-.05in}
\begin{align}
\wJ(t, \tau; \theta) :=  \frac{1}{t} \log \left(\frac{1}{N} \sum_{g \in [G]} |g| e^{t \widetilde{R}_g (\tau; \theta)}\right)\,, \,\,\, \text{where} \,\,\, \wR_g (\tau; \theta) := \frac{1}{\tau} \log \left(\frac{1}{|g|} \sum_{x \in g}  e^{\tau f(x; \theta)} \right) \, ,
\label{eq: class-TERM}
\vspace{-.05in}
\end{align}
and $|g|$ is the size of group $g$. Multi-objective TERM recovers sample-level TERM as a special case for $\tau = t$ (Appendix, Lemma~\ref{lemma: hierarchical-tilt}), and reduces to group-level TERM with $\tau \to 0$. Note that all properties discussed in Section~\ref{sec:term_properties} carry over to group-level TERM.
Similar to the tilted gradient~\eqref{eq: tilted_grad}, the multi-objective tilted gradient is a weighted sum of the gradients (Appendix, Lemma~\ref{lemma: generalized-tilt-gradient}), making it similarly efficient to solve. 
We validate the effectiveness of hierarchical tilting empirically in Section~\ref{sec:exp:multi_obj},
where we show that TERM can significantly outperform baselines to handle class imbalance {\it and} noisy outliers simultaneously.

\section{Solving TERM}\label{sec:solver}

To solve TERM, we suggest batch and stochastic variants of traditional first-order gradient-based optimization methods. TERM in the batch setting (Batch TERM) is summarized in Algorithm~\ref{alg:batch-TERM} in the context of solving multi-objective hierarchical TERM~\eqref{eq: class-TERM} for full generality. The main steps include computing the tilted gradients of the hierarchical objective defined in~\eqref{eq: class-TERM}. Note that Batch TERM with $t=\tau$ reduces to solving the sample-level tilted objective~\eqref{eq: TERM}.
We also provide a stochastic variant in Algorithm~\ref{alg:stochastic-TERM}, Appendix~\ref{app: solving-TERM}. At a high level, at each iteration, group-level tilting is addressed by choosing a group based on the tilted weight vector estimated via stochastic dynamics. Sample-level tilting is then incorporated by re-weighting the samples in a uniformly drawn mini-batch. We find that these methods perform well empirically on a variety of tasks (Section~\ref{sec:experiments}).

\begin{algorithm}[h]
\SetKwInOut{Init}{Initialize}
\SetAlgoLined
\DontPrintSemicolon
\SetNoFillComment
\KwIn{$t, \tau, \alpha$}
\While{stopping criteria not reached}{
\For{$g \in [G]$}{
compute the loss $f(x; \theta)$ and gradient $\nabla_\theta f(x; \theta)$ for all $x \in g$\;
$\wR_{g, \tau} \gets \text{$\tau$-tilted loss~\eqref{eq: class-TERM} on group $g$}$, $\nabla_\theta \wR_{g, \tau} \gets \frac{1}{|g|}\sum_{x \in g} e^{\tau f(x; \theta) - \tau \wR_{g, \tau}} \nabla_\theta f(x; \theta)$
}
$\wJ_{t, \tau}  \gets  \frac{1}{t} \log \left(\frac{1}{N} \sum_{g \in [G]} |g|  e^{t \widetilde{R}_g (\tau; \theta)}\right)$,~$w_{t,\tau, g} \gets |g|e^{t \wR_{\tau, g} - t \wJ_{t, \tau}}$\;
$\theta \gets \theta - \frac{\alpha}{N} \sum_{g\in [G]} w_{t, \tau, g} \nabla_{\theta} \wR_{g, \tau}$\;
}
\caption{Batch TERM}\label{alg:batch-TERM}
\end{algorithm}

We defer readers to Appendix~\ref{app: solving-TERM} for general properties of TERM (smoothness, convexity) that may vary with $t$ and affect the convergence of gradient-based methods used to solve the objective.

\section{TERM in Practice: Use Cases}\label{sec:experiments}

In this section, we showcase the  {flexibility}, {wide applicability}, and competitive performance of the TERM framework through empirical results on a variety of real-world problems such as handling outliers (Section~\ref{sec:exp:noisy_annotator}), ensuring fairness and improving generalization (Section~\ref{sec:exp:pca}), and addressing compound issues (Section~\ref{sec:exp:multi_obj}). Despite the relatively straightforward modification TERM makes to traditional ERM, we show that $t$-tilted losses not only outperform ERM, but either outperform or are competitive with state-of-the-art, problem-specific tailored baselines on a wide range of applications.

We provide implementation details in Appendix~\ref{appen:exp}. 
All code, datasets, and experiments are publicly available at \href{https://github.com/litian96/TERM}{\texttt{github.com/litian96/TERM}}. 
For experiments with positive $t$ (Section ~\ref{sec:exp:pca}), we tune $t \in \{0.1, 0.5, 1, 5, 10, 50, 100, 200\}$ on the validation set. 
In our initial robust regression experiments, we find that the performance is robust to various $t$'s, and we thus use a fixed $t=-2$ for all experiments involving negative $t$ (Section~\ref{sec:exp:robsut_regression} and Section~\ref{sec:exp:multi_obj}).
For all values of $t$ tested, the number of iterations required to solve TERM is within 2$\times$  that of standard ERM.

\subsection{Mitigating Noisy Outliers} \label{sec:exp:robsut_regression}

We begin by investigating TERM's ability to find robust solutions that  reduce the effect of noisy outliers. 
We note that we specifically focus on the setting of `robustness' involving random additive noise; the applicability of TERM to more adversarial forms of robustness would be an interesting direction of future work.
We do not compare with approaches that require additional clean validation data~\citep[e.g.,][]{roh2020fr,veit2017learning,hendrycks2018using, ren2018learning}, as such data can be costly to obtain in practice.

{\bf Robust regression.} 
We first consider a regression task with noise corrupted targets, where we aim to minimize the root mean square error (RMSE) on samples from the Drug Discovery dataset~\citep{olier2018meta,diakonikolas2019sever}. The task is to predict the bioactivities given a set of chemical   compounds. 
We compare against linear regression with an $L_2$ loss, which we view as the `standard' ERM solution for regression, as well as with losses commonly used to mitigate outliers---the $L_1$ loss and Huber loss~\citep{Huber-loss}. We also compare with consistent robust regression (CRR)~\citep{bhatia2017consistent} and STIR~\citep{pmlr-v89-mukhoty19a}, recent state-of-the-art methods specifically designed for label noise in robust regression. 
In this particular problem, TERM is equivalent to exponential squared loss, studied in~\citep{wang2013robust}.
We apply TERM at the sample level with an $L_2$ loss, and generate noisy outliers by assigning random targets drawn from $\mathcal{N}(5,5)$ on a fraction of the samples. 

In Table~\ref{table: outlier}, we report RMSE on clean test data for each objective and  under different noise levels. We also present the performance of an oracle method (Genie ERM) which has access to all of the clean data samples with the noisy samples removed. {\it Note that Genie ERM is not a practical algorithm and is solely presented to set the expected performance limit in the noisy setting}. The results indicate that TERM is competitive with baselines on the 20\% noise level, and achieves better robustness with moderate-to-extreme noise. We observe similar trends in  scenarios involving both noisy features and targets  (Appendix~\ref{appen:complete_exp}).  CRR tends to run slowly as it scales cubicly with the number of dimensions~\citep{bhatia2017consistent}, while solving TERM is roughly as efficient as ERM.

\setlength{\tabcolsep}{3pt}
\begin{table}[h]
\parbox{.45\linewidth}{
	\caption{\small TERM is competitive with robust \textit{regression} baselines, particularly in high noise regimes. 
	}
	\centering
	\label{table: outlier}
	\scalebox{0.7}{
	\begin{tabular}{lcccc} 
	       \toprule[\heavyrulewidth]
        \multicolumn{1}{l}{\multirow{2}{*}{ objectives}} & \multicolumn{3}{c}{test RMSE ({\fontfamily{qcr}\selectfont{Drug Discovery}})} \\
        \cmidrule(r){2-4}
          &   20\% noise & 40\% noise & 80\% noise\\
        \midrule
        \rule{0pt}{2ex}ERM  &	1.87 {\tiny (.05)}  &	2.83  {\tiny (.06)}	& 4.74 {\tiny (.06)} \\
        $L_1$ & 	\textbf{1.15} {\tiny (.07)}  &	1.70  {\tiny (.12)}	& 4.78 {\tiny (.08)} \\
        Huber~\citep{Huber-loss} & 	\textbf{1.16} {\tiny (.07)}  &	1.78  {\tiny (.11)}	& 4.74 {\tiny (.07)} \\
         
         STIR~\citep{pmlr-v89-mukhoty19a} & {\bf 1.16} {\tiny (.07)} & 1.75 {\tiny (.12)} & 4.74 {\tiny (.06)} \\
         
         CRR~\citep{bhatia2017consistent}  &	\textbf{1.10} {\tiny (.07)}  &	1.51  {\tiny (.08)}	& 4.07 {\tiny (.06)} \\
          
          \rowcolor{myblue}
        TERM & 	\textbf{1.08} {\tiny (.05)}  &	\textbf{1.10}  {\tiny (.04)}	& \textbf{1.68} {\tiny (.03)} \\
        \hline
        \rule{0pt}{2ex}Genie ERM  &	1.02 {\tiny (.04)}  &	1.07  {\tiny (.04)} &	1.04 {\tiny (.03)} \\
    \bottomrule[\heavyrulewidth]
	\end{tabular}}}
\hspace{0.01\linewidth}
\hfill
\parbox{.54\linewidth}{
\caption{
\small TERM is competitive with robust \textit{classification} baselines, and is superior in high noise regimes.
}
\centering
\label{table:label_noise_cnn}
\scalebox{0.7}{
\begin{tabular}{lcccc}
	   \toprule[\heavyrulewidth]
        \multicolumn{1}{l}{\multirow{2}{*}{objectives}} & \multicolumn{4}{c}{test accuracy ({\fontfamily{qcr}\selectfont{CIFAR10}}, Inception)} \\
        \cmidrule(r){2-5}
          &  20\% noise & 40\% noise & 80\% noise\\
        \midrule
 \rule{0pt}{0ex}ERM & 0.775 {\tiny (.004)} &  0.719 {\tiny (.004)} & 0.284 {\tiny (.004)}  \\
 RandomRect~\citep{ren2018learning} & 0.744 {\tiny (.004)} & 0.699 {\tiny (.005)} & 0.384 {\tiny (.005)}  \\
SelfPaced~\citep{kumar2010self} & 0.784 {\tiny (.004)} & 0.733 {\tiny (.004)} & 0.272 {\tiny (.004)}  \\
 MentorNet-PD~\citep{jiang2018mentornet} & 0.798 {\tiny (.004)} & 0.731 {\tiny (.004)} & 0.312 {\tiny (.005)} \\
GCE~\citep{zhang2018generalized} & \textbf{0.805} {\tiny (.004)}  & 0.750 {\tiny (.004)} & 0.433 {\tiny (.005)} \\
 \rowcolor{myblue}
 TERM & 0.795 {\tiny (.004)}  & \textbf{0.768} {\tiny (.004)} & \textbf{0.455} {\tiny (.005)} \\
 \hline 
 \rule{0pt}{2ex}Genie ERM & 0.828 {\tiny (.004)} & 0.820 {\tiny (.004)} & 0.792 {\tiny (.004)} \\
\bottomrule[\heavyrulewidth]
\end{tabular}}}
\end{table}

{Note that the outliers considered here are  unstructured with random noise, and not adversarial. This makes it possible for the methods to find the underlying structure of clean data even if the majority of the samples are noisy outliers. To gain more intuition on these cases, we also generate synthetic two-dimensional data points and test the performance of TERM under 0\%, 20\%, 40\%, and 80\% noise for linear regression. TERM with $t=-2$ performs well in all noise levels (Figure~\ref{fig:robust_regression_synthetic} and \ref{fig:robust_regression_synthetic2} in Appendix~\ref{appen:complete_exp}). However, as might be expected, in Figure 14 (Appendix~\ref{appen:complete_exp}) we show that TERM may overfit to noisy samples when the noise is structured and the noise values are large (e.g., 80\%).}

\textbf{Robust classification.} It is well-known that deep neural networks can easily overfit to corrupted labels~\cite[e.g.,][]{zhang2016understanding}. While the theoretical properties we study for TERM (Section~\ref{sec:term_properties}) do not directly cover objectives with neural network function approximations, we show that TERM can be applied empirically to DNNs to achieve robustness to noisy training labels. 
MentorNet~\citep{jiang2018mentornet} is a popular method in this setting, which learns to assign weights to samples based on feedback from a student net. Following the setup in~\citet{jiang2018mentornet}, we explore classification on CIFAR10~\citep{krizhevsky2009learning} when a fraction of the training labels are corrupted with uniform noise---comparing TERM with ERM and several state-of-the-art approaches~\citep{kumar2010self,ren2018learning,zhang2018generalized,krizhevsky2009learning}.  As shown in Table~\ref{table:label_noise_cnn}, TERM performs competitively with 20\% noise, and outperforms all baselines in the high noise regimes.  We use MentorNet-PD as a baseline since it does not require clean validation data. In Appendix~\ref{appen:complete_exp}, we  show that TERM also matches the performance of MentorNet-DD, which requires clean validation data. 
{To help reason about the performance of TERM, we also explore a simpler, two-dimensional logistic regression problem in Figure~\ref{fig:robust_classification_synthetic}, Appendix~\ref{appen:complete_exp}, finding that TERM with $t$=$-2$ is similarly robust across the considered noise regimes.}

\paragraph{Low-quality annotators.} \label{sec:exp:noisy_annotator}
It is not uncommon for practitioners to obtain human-labeled  data for their learning tasks from crowd-sourcing platforms.  
However, these labels are usually noisy in part due to the varying quality of the human annotators. Given a collection of labeled samples from crowd-workers, we aim to learn statistical models that are robust to the potentially low-quality annotators. 
As a case study, following the setup of~\citep{khetan2017learning}, we take the CIFAR-10 dataset and simulate 100 annotators where 20 of them are {\it hammers} (i.e., always correct)  and 80 of them are {\it spammers} (i.e., assigning labels uniformly at random). We apply TERM at the annotator group level in~\eqref{eq: class-TERM}, which is equivalent to assigning annotator-level weights based on the aggregate value of their loss. As shown in Figure~\ref{fig:noisy_annotator}, TERM is able to achieve the test accuracy limit set by \textit{Genie ERM}, i.e., \textit{the ideal performance obtained by completely removing the known outliers}. 
We note in particular that the accuracy reported by~\citep{khetan2017learning} (0.777) is lower than TERM (0.825) in the same setup, even though their approach is a two-pass algorithm requiring at least to double the training time. We provide full empirical details and investigate additional noisy annotator scenarios in Appendix~\ref{appen:complete_exp}.

\begin{figure}[h]
\centering
\begin{minipage}{.31\textwidth}
  \centering
  \includegraphics[width=\linewidth]{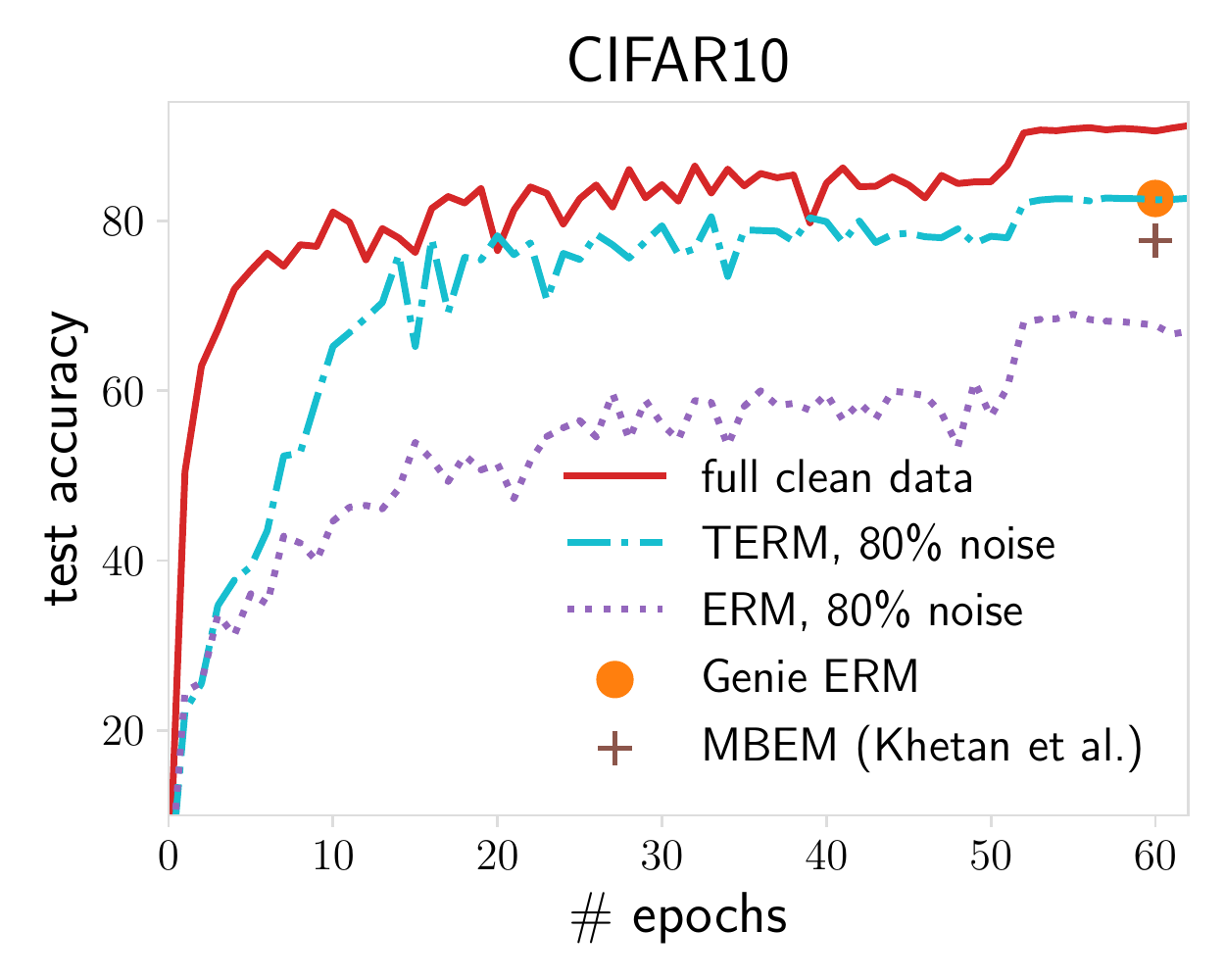}\vspace{-.05in}
  \captionof{figure}{\small TERM ($t{=}{-}2$) completely removes the impact of noisy annotators, reaching the performance limit set by Genie ERM.}
  \label{fig:noisy_annotator}
\end{minipage}%
\quad
\begin{minipage}{.31\textwidth}
  \centering
  \includegraphics[width=\linewidth]{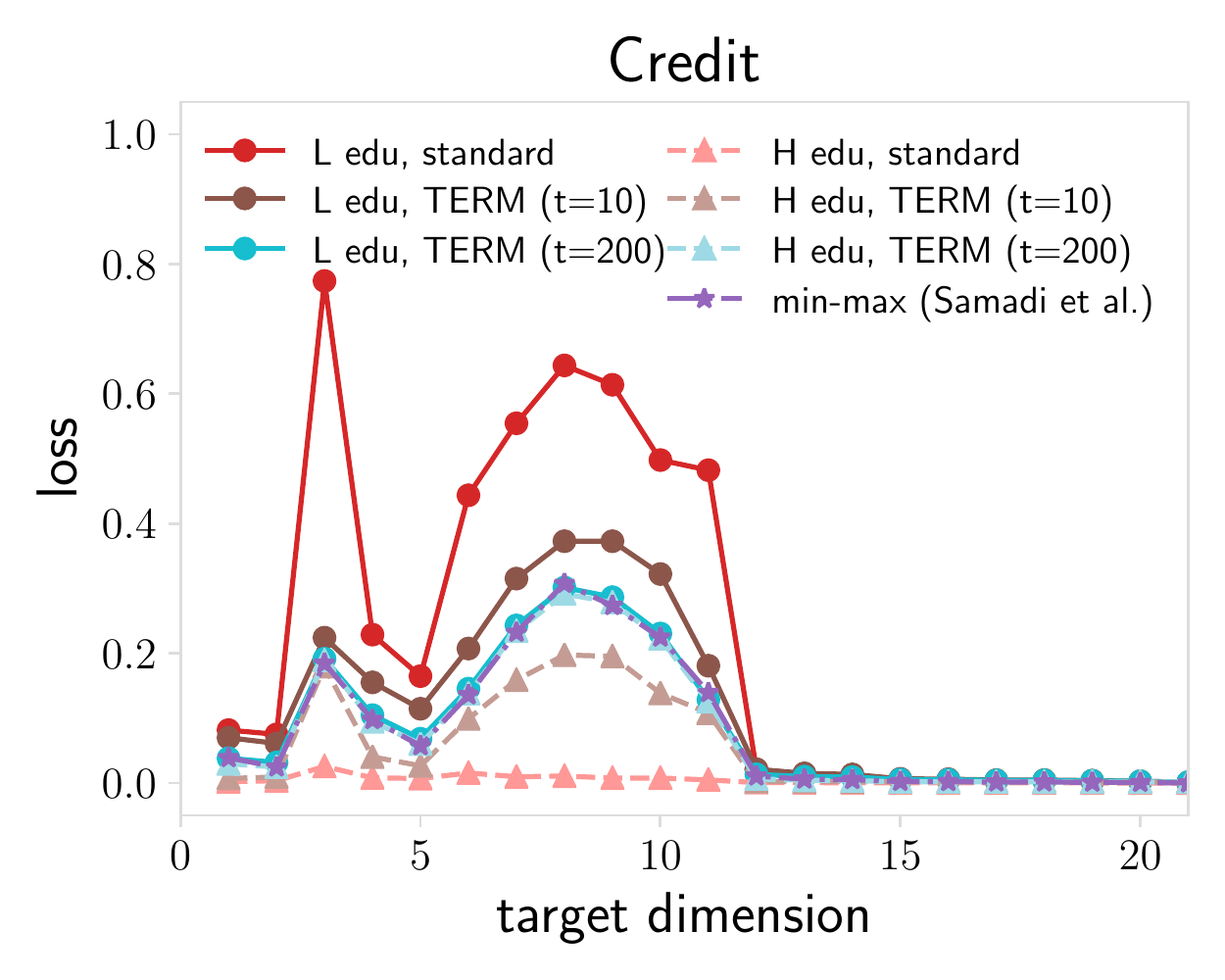}\vspace{-.05in}
  \captionof{figure}{\small TERM-PCA  flexibly trades the performance on the high (H) edu group for the performance on the low (L) edu group.}
  \label{fig:pca}
\end{minipage}
\quad
\begin{minipage}{.31\textwidth}
        \centering
        \includegraphics[width=\textwidth]{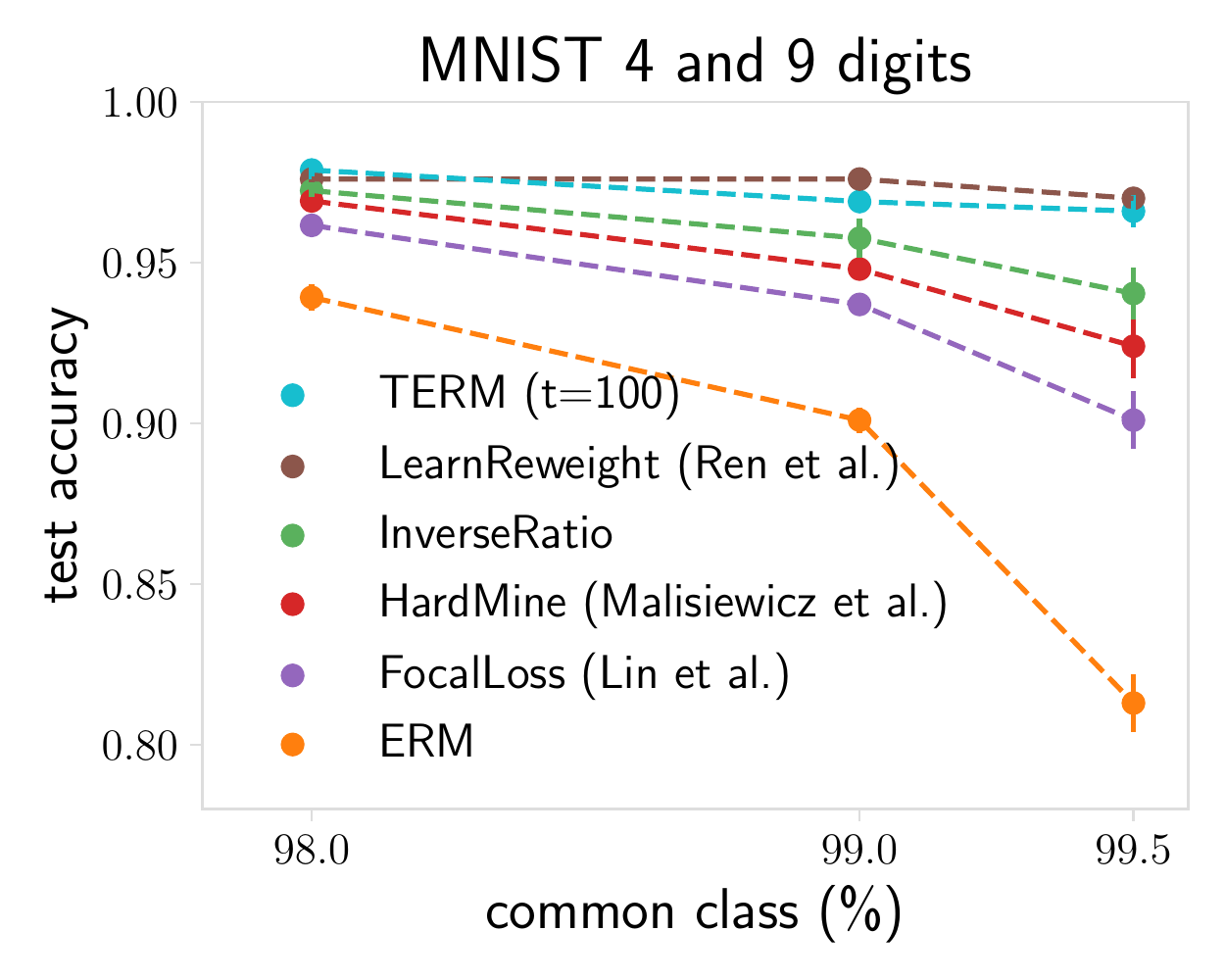}\vspace{-.05in}
        \captionof{figure}{\small TERM ($t{=}100$) is competitive with state-of-the-art methods for classification with imbalanced classes.}
        \label{fig:class_imabalance}
\end{minipage}
\end{figure}

\subsection{Fairness and  Generalization}\label{sec:exp:pca}

In this section, we show that positive values of $t$ in TERM can help promote fairness (e.g., via learning fair representations), and offer variance reduction for better generalization.

{\bf Fair principal component analysis (PCA).}
We explore the flexibility of TERM in learning fair representations using PCA. 
In fair PCA, the goal is to learn low-dimensional representations which are fair to all considered subgroups (e.g., yielding similar reconstruction errors)~\citep{samadi2018price, tantipongpipat2019multi, kamani2019efficient}. 
Despite the non-convexity of the fair PCA problem, we apply TERM to this task, referring to the resulting objective as TERM-PCA. We tilt the same loss function as in~\citep{samadi2018price}: 
$f(X; U) = \frac{1}{|X|}\left(\|X-XUU^\top \|_{F}^2- \|X-\hat{X}\|_{F}^2\right) \, ,$ 
where $X \in \mathbb{R}^{n \times d}$ is a subset (group) of data, $U \in \mathbb{R}^{d \times r}$ is the current projection, and $\hat{X} \in \mathbb{R}^{n \times d}$ is the optimal rank-$r$ approximation of $X$.
Instead of solving a more complex min-max problem using semi-definite programming as in~\citep{samadi2018price}, which scales poorly with problem dimension,
 we apply gradient-based methods,  re-weighting the gradients at each iteration based on the loss on each group. In Figure~\ref{fig:pca}, we plot the aggregate loss for two groups (high vs. low education) in the Default Credit dataset~\citep{yeh2009comparisons} for  different target dimensions $r$.
By varying $t$, we achieve varying degrees of performance improvement on different groups---TERM ($t=200$) recovers the min-max results of~\citep{samadi2018price} by forcing the losses on both groups to be (almost) identical, while TERM ($t=10$) offers the flexibility of reducing the performance gap less aggressively.

\paragraph{Handling class imbalance.}
\label{sec:exp:class_imbalance}
Next, we show that TERM can reduce the performance variance across classes 
with extremely imbalanced data when training deep neural networks. 
We compare TERM with several baselines which re-weight samples during training, including assigning weights inversely proportional to the class size (InverseRatio), focal loss~\citep{lin2017focal}, HardMine~\citep{malisiewicz2011ensemble}, and LearnReweight~\citep{ren2018learning}.  Following~\citep{ren2018learning}, the datasets are composed of imbalanced $4$ and $9$ digits from MNIST~\citep{lecun1998gradient}. In Figure~\ref{fig:class_imabalance}, we see that TERM obtains similar (or higher) final accuracy on the clean test data as the state-of-the-art methods. We note that compared with LearnReweight, 
which optimizes the model over an additional balanced validation set and requires three gradient calculations for each update, TERM neither requires  such balanced  validation data nor does it increase the per-iteration complexity.

{\bf Improving generalization via variance reduction.}
A common alternative to ERM is to consider a distributionally robust objective, which optimizes for the worst-case training loss over a set of distributions, and has been shown to offer variance-reduction properties that benefit generalization~\citep[e.g.,][]{namkoong2017variance,sinha2017certifying}.
While not directly developed for distributional robustness, TERM also enables variance reduction for positive values of $t$ (Theorem~\ref{thm: variance-reduction}), which can be used to strike a better bias-variance tradeoff for generalization. 
We compare TERM with several baselines including robustly regularized risk (RobustRegRisk)~\citep{namkoong2017variance}, linear SVM~\citep{ren2018learning}, LearnRewight~\citep{ren2018learning}, FocalLoss~\citep{lin2017focal}, and HRM~\citep{leqi2019human}. The results and detailed discussions are presented in Appendix~\ref{appen:complete_exp}.

\subsection{Solving Compound Issues: Hierarchical Multi-Objective Tilting}\label{sec:exp:multi_obj}

Finally, in this section, we focus on settings where multiple issues, e.g., class imbalance and label noise, exist in the data simultaneously. We discuss two possible instances of hierarchical multi-objective TERM to tackle such problems. One can think of other variants in this hierarchical tilting space which could be useful depending on applications at hand. However, we are not aware of other prior work that aims to  simultaneously handle multiple goals, e.g., suppressing noisy samples and addressing class imbalance, in a unified framework without additional validation data.

We explore the  HIV-1 dataset~\citep{hiv1}, as in Section~\ref{sec:exp:pca}. 
We report both overall accuracy and accuracy on the rare class in four  scenarios: {\bf (a) clean and 1:4}, the original dataset that is naturally slightly imbalanced with rare samples represented 1:4 with respect to the common class; {\bf (b) clean and 1:20}, where we subsample to introduce a 1:20 imbalance ratio; {\bf (c) noisy and 1:4}, which is the original dataset with labels associated with 30\% of the samples randomly reshuffled; and {\bf (d) noisy and 1:20}, where 30\% of the labels of the 1:20  imbalanced dataset are reshuffled.

\vspace{.07in}
\begin{table}[h!]
\centering
\vspace{-1em}
\caption{\small Hierarchical TERM can address both class imbalance and noisy samples.}
\label{table:hierarchical}
\scalebox{0.72}{
\begin{tabular}{lcc|cc|cc|cc}
\toprule[\heavyrulewidth]
\multicolumn{1}{l}{\multirow{3}{*}{objectives}} & 
\multicolumn{8}{c}{test accuracy ({\fontfamily{qcr}\selectfont{HIV-1}})} \\
\cmidrule(r){4-7}
\multicolumn{1}{c}{~}& \multicolumn{4}{c}{clean data}  & \multicolumn{4}{c}{30\% noise} \\
\cmidrule(r){3-4}\cmidrule(r){7-8}
\multicolumn{1}{c}{~}& \multicolumn{2}{c}{1:4}  & \multicolumn{2}{c}{1:20} & \multicolumn{2}{c}{1:4} & \multicolumn{2}{c}{1:20} \\
\cmidrule(r){2-9}
  & $Y=0$ & overall & $Y=0$ & overall  & $Y=0$ & overall  & $Y=0$ & overall  \\
\hline
\rule{0pt}{2ex}ERM    & 0.822 {\tiny (.009)} &   0.934 {\tiny (.003)} & 0.503 {\tiny (.013)} & 0.888 {\tiny (.006)} & 0.656 {\tiny (.014)} & 0.911 {\tiny (.006)} & 0.240 {\tiny (.018)} & 0.831 {\tiny (.011)}\\
GCE~\citep{zhang2018generalized} & 0.822 {\tiny (.009)} &   0.934 {\tiny (.003)} & 0.503 {\tiny (.013)} & 0.888 {\tiny (.006)} & 0.732 {\tiny (.021)} &  {\bf 0.925} {\tiny (.005)} & 0.324 {\tiny (.017)}  & 0.849 {\tiny (.008)}\\
LearnReweight~\citep{ren2018learning} & \textbf{0.841} {\tiny (.014)} & 0.934 {\tiny (.004)} & 0.800 {\tiny (.022)} & 0.904 {\tiny (.003)} & 0.721 {\tiny (.034)} & 0.856 {\tiny (.008)} & 0.532 {\tiny (.054)} & 0.856 {\tiny (.013)} \\
RobustRegRisk~\citep{namkoong2017variance} &  {\bf 0.844}  \tiny (.010) & {\bf 0.939} {\tiny (.004)} & 0.622 {\tiny (.011)} & 0.906 {\tiny (.005)} & 0.634 {\tiny (.014)} & 0.907 {\tiny (.006)} & 0.051 {\tiny (.014)} & 0.792 {\tiny (.012)} \\
FocalLoss~\citep{lin2017focal}  &  {\bf 0.834} {\tiny (.013)} &  {\bf 0.937} {\tiny (.004)} & {\bf 0.806} {\tiny (.020)} & {\bf 0.918} {\tiny (.003)} & 0.638 {\tiny (.008)} & 0.908 {\tiny (.005)} & 0.565 {\tiny (.027)} & {\bf 0.890} {\tiny (.009)}\\
\rowcolor{myred}
TERM$_{sc}$ & {\bf 0.840}  {\tiny (.010)} &	{\bf 0.937} {\tiny (.004)} & {\bf 0.836} {\tiny (.018)} & {\bf 0.921} {\tiny (.002)} & {\bf 0.852} {\tiny (.010)} & {\bf 0.924} {\tiny (.004)} & {\bf 0.778} {\tiny (.008)} & {\bf 0.900} {\tiny (.005)} \\
\hline
\rowcolor{myblue}
 TERM$_{ca}$ & {\bf 0.844} {\tiny (.014)} & {\bf 0.938} {\tiny (.004)} & {\bf 0.834} {\tiny (.021)} & {\bf 0.918} {\tiny (.003)} & {\bf 0.846} {\tiny (.015)} & {\bf 0.933} {\tiny (.003)} & {\bf 0.806} {\tiny (.020)} & {\bf 0.901} {\tiny (.010)}\\
\bottomrule
\end{tabular}
}
\end{table}

In Table~\ref{table:hierarchical}, hierarchical TERM is applied at the sample level and class level (TERM$_{sc}$), where we use the sample-level tilt of $\tau{=}{-}2$ for noisy data. We use class-level tilt of $t{=}0.1$ for the 1:4 case and $t{=}50$ for the 1:20 case. We compare against baselines for robust classification and class imbalance (discussed previously in Sections~\ref{sec:exp:robsut_regression} and~\ref{sec:exp:pca}), where we tune them for best performance (Appendix~\ref{appen:exp}). Similar to the experiments in Section~\ref{sec:exp:robsut_regression}, we avoid using baselines that require clean validation data~\cite[e.g.,][]{roh2020fr}. While different baselines perform well in their respective problem settings, TERM is far superior to all baselines when considering noisy samples and class imbalance simultaneously (rightmost column in Table~\ref{table:hierarchical}). Finally, in the last row of Table~\ref{table:hierarchical}, we simulate the noisy annotator setting of Section~\ref{sec:exp:robsut_regression} assuming that the data is coming from 10 annotators, i.e., in the 30\% noise case we have 7 hammers and 3 spammers. In this case, we apply hierarchical TERM at both class and annotator levels (TERM$_{ca}$), where we perform the higher level tilt at the annotator (group) level and the lower level tilt at the class level (with no sample-level tilting). We show that this approach can benefit noisy/imbalanced data even further (far right, Table~\ref{table:hierarchical}), while suffering only a small performance drop on the clean and noiseless data (far left, Table~\ref{table:hierarchical}).

\section{Related Work}\label{sec:related_work}

\textbf{Alternate aggregation schemes: exponential smoothing/superquantile methods.} 
A common alternative to the standard average loss in empirical risk minimization is to consider a {min-max} objective, which aims to minimize the max-loss. {Min-max} objectives are commonplace in machine learning, and have been used for a wide range of applications, such as ensuring fairness across subgroups~\citep{hashimoto2018fairness,mohri2019agnostic, stelmakh2019peerreview4all, samadi2018price,tantipongpipat2019multi}, enabling robustness under small perturbations~\citep{sinha2017certifying}, or generalizing to unseen domains~\citep{volpi2018generalizing}. As discussed in Section~\ref{sec:term_properties}, the TERM objective can be viewed as a minimax smoothing~\citep{kort1972new,pee2011solving} with the added flexibility of a tunable $t$ to allow the user to optimize utility for different quantiles of loss similar to superquantile approaches~\citep{rockafellar2000optimization, laguel2020device}, directly trading off between robustness/fairness and utility for positive and negative values of $t$ (see Appendix~\ref{app:superquantile} for these connections). 
However, the TERM objective remains smooth (and efficiently solvable) for moderate values of $t$, resulting in faster convergence even when the resulting solutions are effectively the same as the min-max solution or other desired quantiles of the loss (as we demonstrate in the experiments of Section~\ref{sec:experiments}). 
{Such smooth approximations to the max often appear through LogSumExp functions, with applications in geometric programming~\citep[][Sec.~9.7]{calafiore2014optimization}, and boosting~\citep{mason1999boosting,shen2010dual}.}
Interestingly, Cohen et al. introduce Simnets~\citep{cohen2014simnets,cohen2016deep}, with a similar exponential smoothing operator, though for a differing purpose of  achieving layer-wise operations {\it between} sum and max in deep neural networks. 

\textbf{Alternate loss functions.} Rather than modifying the way the losses are aggregated, as in (smoothed) min-max or superquantile methods, it is also quite common to modify the losses themselves. For example, in robust regression, it is common to consider losses such as the $L_1$ loss, Huber loss, or general $M$-estimators~{\citep{holland2019better}} as a way to mitigate the effect of outliers~\citep{bhatia2015robust}. \citep{wang2013robust} studies a similar exponentially tilted loss for robust regression, though it is limited to the squared loss and only corresponds to $t{<}0$. Losses can also be modified to address outliers by favoring small losses~\citep{yu2012polynomial,zhang2018generalized} or gradient clipping~\citep{menon2020can}. On the other extreme, the largest losses can  be magnified to encourage focus on hard samples~\citep{lin2017focal,wang2016adaptive, li2019fair}, which is a popular approach for curriculum learning.
Constraints could also be imposed to promote fairness~\citep{hardt2016equality,donini2018empirical,rezaei2019fair,zafar2017fairness,baharlouei2019r}.
Ignoring the log portion of the objective in~\eqref{eq: TERM}, TERM can  be viewed as an alternate loss function exponentially shaping the loss to achieve both of these goals with a single objective, i.e., magnifying hard examples with $t>0$ and suppressing outliers with $t<0$. In addition, we show that TERM can even achieve both goals simultaneously with hierarchical multi-objective optimization (Section~\ref{sec:exp:multi_obj}).

\textbf{Sample re-weighting schemes.} Finally, there exist approaches that implicitly modify the underlying ERM objective by re-weighting the influence of the samples themselves. These re-weighting schemes can be enforced in many ways. 
A simple and widely used example is to subsample training points in different classes.
Alternatively, one can re-weight examples according to their loss function when using a stochastic optimizer, which can be used to put more emphasis on ``hard'' examples~\citep{shrivastava2016training, jiang2019accelerating, katharopoulos2017biased,leqi2019human}. Re-weighting can also be implicitly enforced via the inclusion of a regularization parameter~\citep{abdelkarim2020long}, loss clipping~\citep{yang2010relaxed}, or modelling crowd-worker qualities~\citep{khetan2017learning}. 
Such an explicit re-weighting has been explored for other applications~\citep[e.g.,][]{lin2017focal,jiang2018mentornet,shu2019meta,chang2017active,gao2015active,ren2018learning}, though in contrast to these methods, TERM is applicable to a general class of loss functions, with theoretical  guarantees.  
TERM is equivalent to a dynamic re-weighting of the samples based on the values of the objectives (Lemma~\ref{lemma:TERM-gradient}), which could be viewed as a convexified version of loss clipping. We compare to several sample re-weighting schemes empirically in Section~\ref{sec:experiments}.

\section{Conclusion}

In this paper, we examined tilted empirical risk minimization (TERM) as a flexible extension to the ERM framework.
We explored, both theoretically and empirically, TERM's ability to handle various known issues with ERM, such as robustness to noise, class imbalance, fairness, and  generalization, as well as more complex issues like the simultaneous existence of class imbalance and noisy outliers. 
Despite the straightforward modification TERM makes to traditional ERM objectives,
the framework consistently outperforms ERM and delivers competitive performance with state-of-the-art,
problem-specific methods on a wide range of applications. Our work highlights the effectiveness and versatility of tilted objectives in machine learning. 
{Building on the analyses and empirical study provided herein, in future work, it would be interesting to investigate generalization bounds for TERM as a function of $t$, and to derive theoretical convergence guarantees for our proposed stochastic solvers.}

\section*{Acknowledgements}
We are grateful to Arun Sai Suggala and Adarsh Prasad (CMU) for their helpful comments on robust regression; to Zhiguang Wang, Dario Garcia Garcia, Alborz Geramifard, and other members of Facebook AI for productive discussions and feedback and pointers to prior work~\citep{cohen2014simnets,cohen2016deep,  wang2016adaptive,rockafellar2000optimization}; and to Meisam Razaviyayn (USC) for helpful discussions and pointers to  exponential smoothing~\citep{kort1972new,pee2011solving}, Value-at-Risk~\citep{rockafellar2002conditional,nouiehed2019pervasiveness}, and general properties of gradient-based methods in non-convex optimization problems~\citep{jin2017escape, jin2019minmax,ge2015escaping,ostrovskii2020efficient}. 
{The work of TL and VS was supported in part by the
National Science Foundation grant IIS1838017, a Google Faculty Award, a Carnegie Bosch Institute
Research Award, and the CONIX Research Center. Any opinions, findings, and conclusions or
recommendations expressed in this material are those of the author(s) and do not necessarily reflect
the National Science Foundation or any other funding agency.
}

\newpage

{\small
\bibliography{iclr2021_conference}
\bibliographystyle{iclr2021_conference}
}

\newpage
\appendix

\section*{Appendix}

In this appendix we provide full statements and proofs of the analyses presented in Section~\ref{sec:term_properties} (Appendix~\ref{app:assumptions}---\ref{app:superquantile}); details on the methods we propose for solving TERM (Appendix~\ref{app: solving-TERM}); complete empirical results and details of our empirical setup (Appendix~\ref{appen:exp_full}---\ref{appen:exp}), and a discussion on the broader impacts (both positive and negative) of TERM and the research herein (Appendix~\ref{app:broaderimpact}). We provide a table of contents below for easier navigation.

\etocdepthtag.toc{mtappendix}
\etocsettagdepth{mtchapter}{none}
\etocsettagdepth{mtappendix}{subsection}
{
\parskip=0em
\tableofcontents
}

\newpage
\section{Notation \& Assumptions}
\label{app:assumptions}

In this section, we provide the notation and the assumptions that are used throughout our theoretical analyses.

The results in this paper are derived under one of the following four assumptions: 
\begin{assumption}[Smoothness condition]
\label{assump:smoothness}
We assume that for $i \in [N],$ loss function $f(x_i;\theta)$ is in differentiability class $C^1$ (i.e., continuously differentiable) with respect to $\theta \in \Theta \subseteq \mathbb{R}^d.$ 
\end{assumption}

\begin{assumption}[Strong convexity condition]\label{assump: regularity}
We assume that Assumption~\ref{assump:smoothness} is satisfied. In addition, we assume that for any $i\in [N]$, $f(x_i; \theta)$ is in differentiability class $C^2$ (i.e., twice differentiable with continuous Hessian) with respect to $\theta$.
We further assume that there exist $\beta_{\min}, \beta_{\max} \in \mathbb{R}^+$ such that for $i \in [N]$ and any $\theta \in \Theta \subseteq \mathbb{R}^d,$
\begin{equation}
\label{eq: assump1-strong-convex}
    \beta_{\min} \mathbf{I}\preceq  \nabla^2_{\theta \theta^\top} f(x_i;\theta) \preceq \beta_{\max}\mathbf{I},
\end{equation}    
where $\mathbf{I}$ is the identity matrix of appropriate size (in this case $d\times d$). We further assume that there does {\bf not} exist any $\theta \in \Theta,$ such that $\nabla_\theta f(x_i; \theta) = 0$ for all $i\in [N].$
\end{assumption}

\begin{assumption}[Generalized linear model condition~\citep{wainwright2008graphical}]\label{assump: expnential_family}
We assume that Assumption~\ref{assump: regularity} is satisfied.
We further assume that the loss function $f(x; \theta)$ is given by
\begin{equation}
    f(x; \theta) = A(\theta) - \theta^\top T(x),
\label{eq: loss-exponential}
\end{equation}
where $A(\cdot)$ is a convex function such that there exists $\beta_{\max}$ such that for any $\theta \in \Theta \subseteq \mathbb{R}^d,$
\begin{equation}
   \beta_{\min} \mathbf{I}\preceq \nabla^2_{\theta \theta^\top} A(\theta)  \preceq \beta_{\max} \mathbf{I}.
\end{equation}   
We also assume that
\begin{equation}
    \sum_{i \in [N]} T(x_i) T(x_i)^\top \succ 0. 
\end{equation}
\end{assumption}
This nest set of assumptions become the most restrictive with 
Assumption~\ref{assump: expnential_family}, which essentially requires 
that the loss be the negative log-likelihood of an exponential family. While the assumption is stated using the natural parameter of an exponential family for ease of presentation, the results hold for a bijective and smooth reparameterization of the exponential family.
Assumption~\ref{assump: expnential_family}
is  satisfied by the commonly used $L_2$ loss for regression and logistic loss for classification (see toy examples (b) and (c) in Figure~\ref{fig:example}).  While the assumption is not satisfied when we use neural network function approximators in Section~\ref{sec:exp:robsut_regression}, we observe favorable numerical results motivating the extension of these results beyond the cases that are theoretically studied in this paper.

In the sequel, many of the results are concerned with characterizing the $t$-tilted solutions defined as the parametric set of solutions of $t$-tiled losses by sweeping  $t \in \mathbb{R}$, 
\begin{equation}
    \label{eq: opt_obj}
    \breve{\theta}(t) \in \arg\min_{\theta \in \Theta} \wR(t ;\theta),
\end{equation}
where $\Theta \subseteq \mathbb{R}^d$ is an open subset of $\mathbb{R}^d.$ We state an assumption on this set below.
\begin{assumption}[Strict saddle property~(Definition 4 in \citep{ge2015escaping})]\label{assump:strict-saddle}
We assume that the set $ \arg\min_{\theta \in \Theta} \wR(t ;\theta)$ is non-empty for all $t \in \mathbb{R}$.
Further, we assume that for all $t \in \mathbb{R},$ $\wR(t; \theta)$ is a ``strict saddle'' as a function of $\theta$, i.e., for all local minima,  $\nabla^2_{\theta \theta^\top}\wR(t; \theta){\succ}0$, and for all other stationary solutions, $\lambda_{\min}(\nabla^2_{\theta \theta^\top}\wR(t; \theta))< 0$, where $\lambda_{\min}(\cdot)$ is the minimum eigenvalue of the matrix.
\end{assumption}
We use the strict saddle property in order to reason about the properties of the $t$-tilted solutions.
In particular, since we are solely interested in the local minima of $\wR(t; \theta),$ the strict saddle property implies that for every $\breve{\theta}(t) \in \arg\min_{\theta \in \Theta} \wR(t; \theta),$ for a sufficiently small $r$, for all $\theta \in \mathcal{B}(\breve{\theta}(t), r),$
\begin{equation}
    \nabla^2_{\theta \theta^\top} \wR(t; \theta) \succ 0,
\end{equation}
where $\mathcal{B}(\breve{\theta}(t), r)$ denotes a $d$-ball of radius $r$ around $\breve{\theta}(t).$

We will show later that the  strict saddle property is readily verified for $t \in \mathbb{R}^+$ under Assumption~\ref{assump: regularity}.

\section{Basic Properties of the TERM Objective}
\label{app:general-TERM-properties}
In this section, we provide the basic properties of the TERM objective. %

\begin{proof}
[\bf Proof of Lemma~\ref{lemma:TERM-gradient}]
Lemma~\ref{lemma:TERM-gradient}, which provides the gradient of the tilted objective, has been studied previously in the context of exponential smoothing (see~\cite[Proposition 2.1]{pee2011solving}). We provide a brief derivation here under Assumption~\ref{assump:smoothness} for completeness.
We have:
\begin{align}
        \nabla_\theta \wR(t ;\theta) &= \nabla_\theta \left\{\frac{1}{t} \log\left( \frac{1}{N}\sum_{i \in [N]}  e^{t f(x_i; \theta)} \right)\right\}\\
        & = \frac{\sum_{i \in [N]}\nabla_{\theta}f(x_i;\theta)e^{t f(x_i; \theta)}}{\sum_{i \in [N]} e^{t f(x_i; \theta)}} \, .
\end{align}
\end{proof}

\begin{lemma}
\label{lemma: special_cases}
Under Assumption~\ref{assump:smoothness},
\begin{align}
\wR(-\infty; \theta) &:= \lim_{t \to -\infty}\wR(t ;\theta) = \widecheck{R}( \theta),\\
\wR(0; \theta) &:= \lim_{t \to 0}\wR(t ;\theta) = \overline{R}( \theta), \label{eq:def-R0} \\
\wR(+\infty; \theta) &:= \lim_{t \to +\infty}\wR(t ;\theta) = \widehat{R}( \theta),
\end{align}
where $\widehat{R}(\theta)$ is the max-loss and $\widecheck{R}(\theta)$ is the min-loss\footnote{{When the argument of the max-loss or the min-loss is not unique, for the purpose of differentiating the loss function, we define $\widehat{R}(\theta)$ as the average of the individual losses that achieve the maximum, and $\widecheck{R}(\theta)$ as the average of the individual losses that achieve the minimum.}}:
\begin{equation}
    \widehat{R}(\theta) := \max_{i \in [N]} f(x_i; \theta), \quad\quad\quad\quad\quad     \widecheck{R}(\theta) := \min_{i \in [N]} f(x_i; \theta). 
\label{eq:best-worst-risk}
\end{equation}
\end{lemma}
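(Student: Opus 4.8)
The plan is to handle the three limits separately, since the $t\to\pm\infty$ cases follow from an elementary squeeze while the $t\to 0$ case is an indeterminate $0/0$ form that requires a first-order expansion. Throughout I would fix $\theta$ and abbreviate $f_i := f(x_i;\theta)$, $M := \widehat{R}(\theta) = \max_{i\in[N]} f_i$, and $\mu := \widecheck{R}(\theta) = \min_{i\in[N]} f_i$; both are finite by Assumption~\ref{assump:smoothness} (continuity over a finite index set).

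For $t\to+\infty$, observe that for $t>0$ each summand satisfies $e^{tf_i}\le e^{tM}$, with at least one summand attaining equality, so
\[
\frac{1}{N}e^{tM} \;\le\; \frac{1}{N}\sum_{i\in[N]} e^{tf_i} \;\le\; e^{tM}.
\]
Applying $\frac{1}{t}\log(\cdot)$, which is increasing for $t>0$, gives $M - \tfrac{\log N}{t}\le \wR(t;\theta)\le M$, and letting $t\to+\infty$ yields $\wR(t;\theta)\to M = \widehat{R}(\theta)$ by the squeeze theorem. The $t\to-\infty$ case is the mirror image: for $t<0$ we have $e^{tf_i}\le e^{t\mu}$ with equality for a minimizing index, giving the same two-sided bound with $\mu$ in place of $M$. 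Now $\frac{1}{t}\log(\cdot)$ reverses inequalities because $t<0$, producing $\mu \le \wR(t;\theta)\le \mu - \tfrac{\log N}{t} = \mu + \tfrac{\log N}{|t|}$, which squeezes to $\widecheck{R}(\theta)$.

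For the remaining limit $t\to 0$, I would write $\wR(t;\theta) = g(t)/t$ with $g(t):=\log\!\big(\tfrac{1}{N}\sum_{i\in[N]} e^{tf_i}\big)$. Since $g(0)=\log 1 = 0$, the ratio $g(t)/t$ is exactly the difference quotient of $g$ at the origin, so its limit is $g'(0)$ (equivalently, apply L'Hôpital's rule). Differentiating gives $g'(t) = \big(\sum_i f_i e^{tf_i}\big)\big/\big(\sum_i e^{tf_i}\big)$, and evaluating at $t=0$ collapses the weights to uniform, so $g'(0) = \tfrac{1}{N}\sum_{i\in[N]} f_i = \overline{R}(\theta)$. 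This establishes~\eqref{eq:def-R0} and justifies defining $\wR(0;\theta)$ by continuous extension.

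None of the three steps is a serious obstacle; the only point requiring care is the $t\to0$ limit, where the naive substitution produces $0/0$ and one must instead recognize the expression as a derivative (or Taylor-expand $e^{tf_i}=1+tf_i+O(t^2)$ inside the logarithm). I would also remark that the limiting \emph{values} $M$ and $\mu$ are well defined regardless of whether the maximum or minimum is attained uniquely; the averaging convention in the footnote is needed only later, when differentiating $\widehat{R}$ and $\widecheck{R}$, and plays no role in the limits established here.
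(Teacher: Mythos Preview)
Your proposal is correct and follows essentially the same approach as the paper: both use a two-sided squeeze for the $t\to\pm\infty$ limits (bounding the sum $\tfrac{1}{N}\sum_i e^{tf_i}$ between $\tfrac{1}{N}e^{tM}$ and $e^{tM}$, or the analogous bounds with $\mu$) and L'H\^opital's rule for the $t\to 0$ case, computing the derivative of $g(t)=\log\bigl(\tfrac{1}{N}\sum_i e^{tf_i}\bigr)$ at the origin. Your framing of the $t\to 0$ limit as a difference quotient of $g$ is a nice touch, but the underlying computation is identical to the paper's.
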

\begin{proof}
For $t \to 0,$
\begin{align}
  \lim_{t \to 0}  \wR(t ;\theta) &=   \lim_{t \to 0}  \frac{1}{t} \log \left(\frac{1}{N} \sum_{i \in [N]} e^{t f(x_i; \theta)} \right)\nonumber\\
    & = \lim_{t \to 0} \frac{\sum_{i \in [N]} f(x_i; \theta)e^{t f(x_i; \theta)}}{\sum_{i \in [N]} e^{t f(x_i; \theta)}}\label{eq: follow-lopital-2}\\
    & = \frac{1}{N}\sum_{i \in [N]} f(x_i; \theta),
\end{align}
where~\eqref{eq: follow-lopital-2} is due to  L'H\^opital's rule {applied to $t$ as the denominator and $\log \left(\frac{1}{N} \sum_{i \in [N]} e^{t f(x_i; \theta)} \right)$ as the numerator}.

For $t \to - \infty$, we proceed as follows:
\begin{align}
  \lim_{t \to -\infty}  \wR(t ;\theta) &=   \lim_{t \to -\infty} \frac{1}{t} \log \left(\frac{1}{N} \sum_{i \in [N]} e^{t f(x_i; \theta)} \right)\nonumber\\
    & \leq \lim_{t \to -\infty} \frac{1}{t} \log \left(\frac{1}{N} \sum_{i \in [N]} e^{t \min_{j \in [N]}f(x_j; \theta)} \right)\\
    & = \min_{i \in [N]}f(x_i; \theta).\label{eq:upper}
\end{align}
On the other hand,
\begin{align}
  \lim_{t \to -\infty}  \wR(t ;\theta) &=   \lim_{t \to -\infty} \frac{1}{t} \log \left(\frac{1}{N} \sum_{i \in [N]} e^{t f(x_i; \theta)} \right)\nonumber\\
    & \geq \lim_{t \to -\infty} \frac{1}{t} \log \left(\frac{1}{N}  e^{t \min_{j \in [N]}f(x_j; \theta)} \right)\\
    & = \min_{i \in [N]}f(x_i; \theta) - \lim_{t \to -\infty} \left\{ \frac{1}{t} \log N \right\}\\
    & = \min_{i \in [N]}f(x_i; \theta).\label{eq:lower}
\end{align}
Hence, the proof follows by putting together~\eqref{eq:upper} and~\eqref{eq:lower}.

The proof proceeds similarly to $t \to - \infty$ for $t \to +\infty$ and is omitted for brevity. 
\end{proof}
Note that Lemma~\ref{lemma: special_cases} has been previously observed in~\citep{cohen2014simnets}.
This lemma also implies that $ \widetilde{\theta}(0)$ is the ERM solution, $\widetilde{\theta}(+\infty)$ is the min-max solution, and $\widetilde{\theta}(-\infty)$ is the min-min solution.

\begin{lemma}[Tilted Hessian and strong convexity for $t \in \mathbb{R}^+$]\label{lem: Hessian}
 Under Assumption~\ref{assump: regularity}, for any $t\in \mathbb{R},$ 
\begin{align}
        \nabla^2_{\theta \theta^\top}\wR(t ;\theta) &= 
        t\sum_{i \in [N]} ( \nabla_{\theta}f(x_i;\theta) - \nabla_\theta \wR(t ;\theta))(\nabla_{\theta}f(x_i;\theta) - \nabla_\theta \wR(t ;\theta))^\top  e^{t (f(x_i; \theta) - \wR(t; \theta))}\label{eq:smoothness-23}\\
        & \quad + \sum_{i \in [N]} \nabla^2_{\theta\theta^\top}f(x_i;\theta) e^{t ( f(x_i; \theta) - \wR(t; \theta))}.\label{eq:smoothness-24}
\end{align}
In particular, for all $\theta \in \Theta$ and all $t \in \mathbb{R}^+,$ the $t$-tilted objective is strongly convex. That is
\begin{equation}
     \nabla^2_{\theta \theta^\top}\wR(t ;\theta) \succ \beta_{\min} \mathbf{I}.
\end{equation}
\end{lemma}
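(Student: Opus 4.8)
The plan is to obtain the Hessian by differentiating the closed-form tilted gradient of Lemma~\ref{lemma:TERM-gradient} one more time, which is legitimate because Assumption~\ref{assump: regularity} promotes every $f(x_i;\theta)$ to $C^2$. Writing the gradient as the ratio $\nabla_\theta\wR(t;\theta) = (\sum_i e^{tf(x_i;\theta)}\nabla_\theta f(x_i;\theta))/(\sum_i e^{tf(x_i;\theta)})$, I would apply the quotient rule for a vector numerator over a scalar denominator, tracking three contributions: differentiating $\nabla_\theta f(x_i;\theta)$ produces the individual Hessian $\nabla^2_{\theta\theta^\top}f(x_i;\theta)$; differentiating the factor $e^{tf(x_i;\theta)}$ in the numerator produces the rank-one piece $t\,\nabla_\theta f(x_i;\theta)\,\nabla_\theta f(x_i;\theta)^\top$; and differentiating the denominator (whose gradient is $t$ times the numerator) produces the outer-product correction $-t\,\nabla_\theta\wR\,\nabla_\theta\wR^\top$. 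The delicate part is the vector/matrix bookkeeping here: getting the transpose placement and the single factor of $t$ correct in the correction term is exactly where a sign or coefficient slip would occur.

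Collecting these and writing each summand against the weight $w_i(t;\theta)=\tfrac1N e^{t(f(x_i;\theta)-\wR(t;\theta))}$ from~\eqref{eq: tilted_grad}, the Hessian takes the form $\nabla^2_{\theta\theta^\top}\wR = t(\sum_i w_i\,g_ig_i^\top - \nabla_\theta\wR\,\nabla_\theta\wR^\top) + \sum_i w_i\,\nabla^2_{\theta\theta^\top}f(x_i;\theta)$, where $g_i:=\nabla_\theta f(x_i;\theta)$. I would then invoke the two facts $\sum_i w_i = 1$ and $\nabla_\theta\wR = \sum_i w_i g_i$ (both immediate from Lemma~\ref{lemma:TERM-gradient}) to apply the elementary covariance identity $\sum_i w_i g_ig_i^\top - (\sum_i w_i g_i)(\sum_i w_i g_i)^\top = \sum_i w_i (g_i-\nabla_\theta\wR)(g_i-\nabla_\theta\wR)^\top$, which turns the bracket into a weighted covariance of the gradients and yields exactly the claimed identity~\eqref{eq:smoothness-23}--\eqref{eq:smoothness-24}.

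For strong convexity at $t\in\mathbb{R}^+$, I would simply read off the sign of each term. The first term is a nonnegative combination (coefficients $t\,w_i>0$) of the positive semidefinite rank-one matrices $(g_i-\nabla_\theta\wR)(g_i-\nabla_\theta\wR)^\top$, hence positive semidefinite. The second term is a convex combination (weights $w_i\ge0$ summing to $1$) of the individual Hessians, each satisfying $\nabla^2_{\theta\theta^\top}f(x_i;\theta)\succeq\beta_{\min}\mathbf{I}$ by Assumption~\ref{assump: regularity}; therefore $\sum_i w_i\,\nabla^2_{\theta\theta^\top}f(x_i;\theta)\succeq\beta_{\min}\mathbf{I}$. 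Adding the positive semidefinite first term preserves the lower bound, giving $\nabla^2_{\theta\theta^\top}\wR(t;\theta)\succeq\beta_{\min}\mathbf{I}\succ 0$ for every $t>0$ and every $\theta\in\Theta$, i.e.\ strong convexity with modulus $\beta_{\min}$. As a byproduct this makes the strict-saddle property of Assumption~\ref{assump:strict-saddle} automatic on $\mathbb{R}^+$, as anticipated in the text.

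I do not expect a serious obstacle beyond the quotient-rule bookkeeping, but the conceptual crux worth flagging is the \emph{sign of $t$}: the covariance term is scaled by $t$, so for $t>0$ it is a genuine aid (positive semidefinite), whereas for $t<0$ it becomes negative semidefinite and can overwhelm the $\beta_{\min}$ curvature of the averaged-Hessian term, which is precisely why strong convexity is claimed only for $t\in\mathbb{R}^+$. The entire argument hinges on this sign together with the convex-combination structure supplied by $\sum_i w_i=1$; everything else is routine differentiation.
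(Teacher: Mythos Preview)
Your proposal is correct and follows essentially the same route as the paper: differentiate the tilted gradient of Lemma~\ref{lemma:TERM-gradient} once more, rearrange into the weighted-covariance-plus-averaged-Hessian form, then read off positive semidefiniteness of the covariance term for $t>0$ and the $\beta_{\min}\mathbf{I}$ lower bound from the convex combination of individual Hessians. The paper's proof compresses your quotient-rule bookkeeping and covariance identity into the phrase ``differentiating again with respect to $\theta$, followed by algebraic manipulation,'' but the content is identical.
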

\begin{proof}
Recall that 
\begin{align}
        \nabla_\theta \wR(t ;\theta) &= \frac{\sum_{i \in [N]}\nabla_{\theta}f(x_i;\theta)e^{t f(x_i; \theta)}}{\sum_{i \in [N]} e^{t f(x_i; \theta)}}\\
        & = \sum_{i \in [N]}\nabla_{\theta}f(x_i;\theta)e^{t ( f(x_i; \theta) - \wR(t; \theta))}.
\end{align}
The proof of the first part is completed by differentiating again with respect to $\theta,$ followed by algebraic manipulation. To prove the second part, notice that the term in~\eqref{eq:smoothness-23} is positive semi-definite, whereas the term in~\eqref{eq:smoothness-24} is positive definite and lower bounded by $\beta_{\min} \mathbf{I}$ (see Assumption~\ref{assump: regularity}, Eq.~\eqref{eq: assump1-strong-convex}).
\end{proof}

\begin{lemma}[Smoothness of $\wR(t; \theta)$ in the vicinity of the final solution $\breve{\theta}(t)$]
\label{lemma:TERM-smoothness}
For any $t \in \mathbb{R}$, let $\beta(t)$ be the smoothness parameter in the vicinity of the final solution:
\begin{equation}
    \beta(t) := \sup_{\theta\in {\cal B}(\breve{\theta}(t), r)}  \lambda_{\max} \left( \nabla^2_{\theta\theta^\top} \wR(t; \theta)\right),
\end{equation}    
where $\nabla^2_{\theta\theta^\top} \wR(t; \theta)$ is the Hessian of $\wR(t; \theta)$ at $\theta$, $\lambda_{\max}(\cdot)$ denotes the largest eigenvalue, and ${\cal B}(\theta, r)$ denotes a $d$-ball of radius $r$ around $\theta.$
Under Assumption~\ref{assump: regularity}, 
for any $t\in \mathbb{R},$ $\wR(t; \theta)$ is a  $\beta(t)$-smooth function of $\theta$. 
Further, for $t\in \mathbb{R}^-,$ at the vicinity of $\breve{\theta}(t)$,
\begin{equation}
     \beta(t) < \beta_{\max}, 
\end{equation}
and for $t \in \mathbb{R}^+,$
\begin{align}
0 <\lim_{t \to +\infty} \frac{\beta(t)}{t} &< +\infty.
\end{align}
\end{lemma}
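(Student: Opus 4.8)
The plan is to read everything off the closed-form Hessian of Lemma~\ref{lem: Hessian}, rewritten in normalized ``variance-plus-mean'' form using the tilted weights $w_i(t;\theta)$ of Lemma~\ref{lemma:TERM-gradient} (which sum to one). Writing $\nabla^2_{\theta\theta^\top}\wR(t;\theta) = t\,\Sigma(t;\theta) + H(t;\theta)$, where $\Sigma(t;\theta) := \sum_{i\in[N]} w_i(t;\theta)\big(\nabla_\theta f(x_i;\theta) - \nabla_\theta\wR\big)\big(\nabla_\theta f(x_i;\theta) - \nabla_\theta\wR\big)^\top \succeq 0$ is the tilted covariance of the per-sample gradients and $H(t;\theta) := \sum_{i\in[N]} w_i(t;\theta)\nabla^2_{\theta\theta^\top} f(x_i;\theta)$ is a convex combination of per-sample Hessians, Assumption~\ref{assump: regularity} immediately gives $\beta_{\min}\mathbf I \preceq H(t;\theta)\preceq\beta_{\max}\mathbf I$. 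The three claims are then eigenvalue bounds on $t\Sigma + H$. For the $\beta(t)$-smoothness claim itself, I restrict to the ball $\mathcal B(\breve\theta(t),r)$ on which the strict-saddle Assumption~\ref{assump:strict-saddle} forces $\nabla^2\wR\succ 0$, so $\lambda_{\max}$ coincides with the operator norm there; $\beta(t)=\sup_{\mathcal B}\lambda_{\max}<\infty$ by continuity of the Hessian (Assumption~\ref{assump: regularity} gives $f\in C^2$) on the compact closed ball, and $\beta(t)$-smoothness is then the standard Lipschitz-gradient consequence of a bounded-operator-norm Hessian on a convex set.

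For $t\in\mathbb R^-$, since $\Sigma\succeq 0$ the term $t\Sigma$ is negative semidefinite, so $\nabla^2\wR\preceq H\preceq\beta_{\max}\mathbf I$ and hence $\beta(t)\le\beta_{\max}$. To upgrade this to the strict inequality, I would show the top eigendirection $v$ of $\nabla^2\wR$ cannot simultaneously saturate $H$ at $\beta_{\max}$ and annihilate $\Sigma$: $v^\top\Sigma v = 0$ forces $v^\top\nabla_\theta f(x_i;\theta)$ to be constant across all samples, and I would rule this out on $\mathcal B(\breve\theta(t),r)$ using the non-degeneracy clause of Assumption~\ref{assump: regularity} (no point at which all $\nabla_\theta f(x_i;\theta)$ vanish) together with strong convexity. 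This is the only fiddly point of the negative-$t$ case.

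For $t\in\mathbb R^+$, I divide by $t$: $\nabla^2\wR/t = \Sigma(t;\theta) + H(t;\theta)/t$. The upper bound is direct --- $\lambda_{\max}(\nabla^2\wR/t)\le\sup_{\mathcal B}\lambda_{\max}(\Sigma)+\beta_{\max}/t$, and $\sup_{\mathcal B}\lambda_{\max}(\Sigma)<\infty$ by compactness --- so $\limsup_{t\to\infty}\beta(t)/t<\infty$. For the lower bound I use $\nabla^2\wR/t\succeq\Sigma$, giving $\beta(t)/t\ge\sup_{\mathcal B}\lambda_{\max}(\Sigma(t;\theta))$, and must show this stays bounded away from zero. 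The idea is to track the weights: as $t\to\infty$ the $w_i(t;\theta)$ concentrate on the max-loss samples, so $\Sigma$ tends to the gradient covariance over the active (max-achieving) set at the min-max solution, which is nonzero because the active gradients form a convex combination equal to the vanishing tilted gradient and therefore cannot all coincide without all vanishing --- excluded by Assumption~\ref{assump: regularity}.

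The main obstacle is the $t\to+\infty$ analysis, for two coupled reasons. First, the ball $\mathcal B(\breve\theta(t),r)$ is itself moving, so the argument must be anchored by $\breve\theta(t)\to\breve\theta(+\infty)$ (the min-max solution of Lemma~\ref{lemma: special_cases}), which I would obtain from the strict-saddle assumption and continuity of $\wR(t;\cdot)$ in $t$. Second, I must show the ball-supremum of $\lambda_{\max}(\Sigma(t;\theta))$ genuinely converges --- not merely has positive liminf and finite limsup --- which needs uniform control of the exponential weight concentration and continuity of the gradient covariance in both $t$ and $\theta$. I would first establish the two-sided bound $0<\liminf\le\limsup<\infty$ (which already carries the qualitative content), and then upgrade to an actual limit by showing $\theta\mapsto\Sigma(t;\theta)$ and its ball-supremum are Cauchy in $t$ using the exponential rate at which the tilted weights collapse onto the active set. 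Verifying that this limiting active-set covariance is strictly positive --- i.e., that the min-max solution is non-degenerate --- is the crux on which the strict lower bound $\lim\beta(t)/t>0$ rests.
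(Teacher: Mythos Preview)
Your approach is essentially identical to the paper's: both invoke the Hessian decomposition of Lemma~\ref{lem: Hessian}, write $\nabla^2_{\theta\theta^\top}\wR = t\,\Sigma + H$ with $\Sigma\succeq 0$ the tilted gradient covariance and $H$ a convex combination of per-sample Hessians, and then read off eigenvalue bounds (the paper phrases the splitting via Weyl's inequality rather than your operator-monotone notation, but the content is the same).

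Where you differ is in rigor, and in your favor. The paper's proof for $t<0$ only yields $\beta(t)\le\beta_{\max}$, not the strict inequality asserted in the statement; you correctly flag that upgrading to $<$ requires ruling out simultaneous saturation of $H$ at $\beta_{\max}$ and degeneracy of $\Sigma$ in the top eigendirection. Likewise, for $t\to+\infty$ the paper establishes only an upper bound and a lower bound on $\beta(t)/t$ without arguing that the limit exists; you explicitly separate the $\liminf/\limsup$ argument from the existence-of-limit argument. Your observation that $\Sigma(t;\breve\theta(t))\neq 0$ follows because the tilted gradient vanishes at $\breve\theta(t)$ --- so equal per-sample gradients would force them all to zero, contradicting Assumption~\ref{assump: regularity} --- is exactly the mechanism the paper gestures at with its one-line appeal to that assumption, but you spell it out. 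In short: same skeleton, and the concerns you raise as ``obstacles'' are genuine soft spots in the paper's own sketch rather than difficulties unique to your route.
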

\begin{proof}
Let us first provide a proof for $t\in \mathbb{R}^-$.
Invoking Lemma~\ref{lem: Hessian} and Weyl's inequality~\citep{weyl1912asymptotische}, we have
\begin{align}
          \lambda_{\max} &\left( \nabla^2_{\theta \theta^\top}\wR(t ;\theta) \right)   \nonumber\\
          &\leq \lambda_{\max}\left(
        t\sum_{i \in [N]} ( \nabla_{\theta}f(x_i;\theta) - \nabla_\theta \wR(t ;\theta))(\nabla_{\theta}f(x_i;\theta) - \nabla_\theta \wR(t ;\theta))^\top  e^{t (f(x_i; \theta) - \wR(t; \theta))}\right)\\
        &+ \lambda_{\max}\left(\sum_{i \in [N]} \nabla^2_{\theta \theta^\top}f(x_i;\theta) e^{t ( f(x_i; \theta) - \wR(t; \theta))}\right)\\
        & \leq \beta_{\max},
\end{align}
where we have used the fact that the term in~\eqref{eq:smoothness-23} is  negative semi-definite for $t<0$, and that the term in~\eqref{eq:smoothness-24} is positive definite for all $t$ with smoothness bounded by $\beta_{\max}$ (see Assumption~\ref{assump: regularity},~Eq.~\eqref{eq: assump1-strong-convex}).

For $t \in \mathbb{R}^+,$ following Lemma~\ref{lem: Hessian} and Weyl's inequality~\citep{weyl1912asymptotische}, we have
\begin{align}
        \left(\frac{1}{t}\right) & \lambda_{\max}  \left(  \nabla^2_{\theta \theta^\top}\wR(t ;\theta) \right)\nonumber\\
        & \leq \lambda_{\max}\left( 
        \sum_{i \in [N]} ( \nabla_{\theta}f(x_i;\theta) - \nabla_\theta \wR(t ;\theta))(\nabla_{\theta}f(x_i;\theta) - \nabla_\theta \wR(t ;\theta))^\top  e^{t (f(x_i; \theta) - \wR(t; \theta))}\right)\\
        & + \left(\frac{1}{t}\right)\lambda_{\max} \left(\sum_{i \in [N]} \nabla^2_{\theta \theta^\top}f(x_i;\theta) e^{t ( f(x_i; \theta) - \wR(t; \theta))}\right).
\end{align}
Consequently,
\begin{equation}
  \lim_{t \to +\infty} \left(\frac{1}{t}\right)  \lambda_{\max} \left(  \nabla^2_{\theta \theta^\top}\wR(t ;\theta) \right)  <+\infty.
\end{equation}
On the other hand, following Weyl's inequality~\citep{weyl1912asymptotische},
\begin{align}
            \lambda_{\max} &\left(  \nabla^2_{\theta \theta^\top}\wR(t ;\theta) \right) \nonumber \\
            &\geq t \lambda_{\max} \left( 
        \sum_{i \in [N]} ( \nabla_{\theta}f(x_i;\theta) - \nabla_\theta \wR(t ;\theta))(\nabla_{\theta}f(x_i;\theta) - \nabla_\theta \wR(t ;\theta))^\top  e^{t (f(x_i; \theta) - \wR(t; \theta))}\right),
\end{align}
and hence,
\begin{equation}
   \lim_{t \to +\infty}  \left(\frac{1}{t}\right) \lambda_{\max} \left(  \nabla^2_{\theta \theta^\top}\wR(t ;\theta) \right) >0,
\end{equation}
where we have used the fact that no solution $\theta$ exists that would make all $f_i$'s vanish (Assumption~\ref{assump: regularity}).
\end{proof}

\vspace{-1em}
Under the strict saddle property (Assumption~\ref{assump:strict-saddle}), it is known that gradient-based methods would converge to a local minimum~\citep{ge2015escaping}, i.e., $\breve{\theta}(t)$ would be obtained using gradient descent (GD). 
The rate of convergence of GD scales linearly with the smoothness parameter of the optimization landscape, which is characterized by Lemma~\ref{lemma:TERM-smoothness}.

\begin{lemma}[Strong convexity of $\wR(t; \theta)$ in $\mathbb{R}^+$]
\label{lemma:TERM-convexity}
Under Assumption~\ref{assump: regularity}, for any $t\in \mathbb{R}^+,$ $\wR(t; \theta)$ is a strongly convex function of $\theta$. That is for $t \in \mathbb{R}^+,$
\begin{equation}
    \nabla^2_{\theta \theta^\top} \wR(t; \theta) \succ \beta_{\min} \mathbf{I}.
\end{equation}
\end{lemma}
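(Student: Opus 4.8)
The plan is to obtain this as a direct consequence of the tilted Hessian identity in Lemma~\ref{lem: Hessian}; indeed, the inequality asserted here is already the second claim of that lemma, so the work reduces to reading off the sign information from the two terms of the Hessian. Under Assumption~\ref{assump: regularity} the objective is twice continuously differentiable, so it suffices to verify that $\nabla^2_{\theta\theta^\top}\wR(t;\theta)\succ\beta_{\min}\mathbf{I}$ pointwise for every $\theta\in\Theta$ and every $t>0$.

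First I would invoke Lemma~\ref{lem: Hessian}, which decomposes $\nabla^2_{\theta\theta^\top}\wR(t;\theta)$ into the rank-one curvature term~\eqref{eq:smoothness-23} and the weighted-Hessian term~\eqref{eq:smoothness-24}, and analyze the two pieces separately. The term in~\eqref{eq:smoothness-23} is a sum of outer products $(\nabla_\theta f(x_i;\theta)-\nabla_\theta\wR)(\nabla_\theta f(x_i;\theta)-\nabla_\theta\wR)^\top$, each positive semi-definite, weighted by the strictly positive scalars $e^{t(f(x_i;\theta)-\wR(t;\theta))}$ and by the overall prefactor $t$. Hence for $t>0$ this term is positive semi-definite. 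This is exactly where positivity of $t$ enters: for $t<0$ the prefactor reverses the sign, which is why the strong-convexity conclusion is restricted to $\mathbb{R}^+$.

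Next I would handle the term in~\eqref{eq:smoothness-24}. Up to the $1/N$ normalization that makes them sum to one (cf.\ the tilted weights $w_i$ of Lemma~\ref{lemma:TERM-gradient}), the coefficients $e^{t(f(x_i;\theta)-\wR(t;\theta))}$ form a convex combination, so this term is a convex combination of the individual Hessians $\nabla^2_{\theta\theta^\top}f(x_i;\theta)$. By the lower bound $\beta_{\min}\mathbf{I}\preceq\nabla^2_{\theta\theta^\top}f(x_i;\theta)$ in Assumption~\ref{assump: regularity} (Eq.~\eqref{eq: assump1-strong-convex}), every summand dominates $\beta_{\min}\mathbf{I}$ in the Loewner order, and a convex combination of such matrices again dominates $\beta_{\min}\mathbf{I}$. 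Adding the positive-semi-definite contribution from~\eqref{eq:smoothness-23} then gives $\nabla^2_{\theta\theta^\top}\wR(t;\theta)\succ\beta_{\min}\mathbf{I}$ for all $t>0$, which is the desired strong convexity.

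I do not anticipate a substantive obstacle, since the statement is essentially a corollary of Lemma~\ref{lem: Hessian}; the only points needing care are bookkeeping ones---tracking the sign of the prefactor $t$ so that~\eqref{eq:smoothness-23} is retained as a positive (not negative) semi-definite contribution, and keeping the tilted-weight normalization straight so that~\eqref{eq:smoothness-24} is genuinely a convex combination. As a consistency check, this also explains the earlier remark that the strict-saddle property (Assumption~\ref{assump:strict-saddle}) holds automatically for $t\in\mathbb{R}^+$, since strong convexity precludes any stationary point other than the unique global minimizer.
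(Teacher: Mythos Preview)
Your proposal is correct and matches the paper's approach exactly: the paper's proof is a one-line appeal to Lemma~\ref{lem: Hessian} together with the lower bound~\eqref{eq: assump1-strong-convex} from Assumption~\ref{assump: regularity}, and you have simply spelled out the sign analysis of the two Hessian pieces~\eqref{eq:smoothness-23}--\eqref{eq:smoothness-24} that makes that appeal work. Your remark about the $1/N$ normalization of the weights in~\eqref{eq:smoothness-24} is a useful clarification, since as written the coefficients $e^{t(f(x_i;\theta)-\wR(t;\theta))}$ sum to $N$ rather than $1$, but either reading yields the desired lower bound.
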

\begin{proof}
The result follows by invoking Lemma~\ref{lem: Hessian} with $t\in \mathbb{R}^+$, and considering~\eqref{eq: assump1-strong-convex} (Assumption~\ref{assump: regularity}). 
\end{proof}
This lemma also implies that under Assumption~\ref{assump: regularity}, the strict saddle assumption (Assumption~\ref{assump:strict-saddle}) is readily verified.

\newpage
\section{Hierarchical Multi-Objective Tilting}
\label{app:hierarchical-TERM}

We start by stating the hierarchical multi-objective tilting for a hierarchy of depth $3.$ While we don't directly use this form, it is stated to clarify the experiments in Section~\ref{sec:experiments} where tilting is done at class level and annotator level, and the sample-level tilt value could be understood to be $0.$ 
\begin{align}
\wJ(m, t, \tau; \theta) &:=  \frac{1}{m} \log \left(\frac{1}{N} \sum_{G \in [GG]} \left(\sum_{g \in [G]} |g|\right) e^{m \widetilde{J}_{G} (\tau; \theta)}\right)\\
\wJ_{G}(t, \tau; \theta) &:=  \frac{1}{t} \log \left(\frac{1}{\sum_{g \in [G]} |g| } \sum_{g \in [G]} |g| e^{t \widetilde{R}_g (\tau; \theta)}\right)\\
\wR_g (\tau; \theta) &:= \frac{1}{\tau} \log \left(\frac{1}{|g|} \sum_{x \in g}  e^{\tau f(x; \theta)} \right),
\label{eq: tri-TERM}
\end{align}

Next, we continue by evaluating the gradient of the hierarchical multi-objective tilt for a hierarchy of depth $2.$
\begin{lemma}[Hierarchical multi-objective tilted gradient]\label{lemma: generalized-tilt-gradient}
Under Assumption~\ref{assump:smoothness},
\begin{align}
    \nabla_\theta \wJ(t, \tau ;\theta) &= 
 \sum_{g\in [G]} \sum_{x \in g} w_{g, x}(t, \tau; \theta)\nabla_\theta f(x;\theta)
\end{align}
where
\begin{align}
\label{eq:def-w}
    w_{g, x}(t, \tau; \theta) 
    & := \frac{\left( \frac{1}{|g|} \sum_{y \in g} e^{\tau f(y; \theta)} \right)^{(\frac{t}{\tau} - 1)}}{\sum_{g' \in [G]} |g'| \left(\frac{1}{|g'|} \sum_{y \in g'} e^{\tau f(y; \theta)}\right)^\frac{t}{\tau}} \quad e^{\tau f(x; \theta)}.
\end{align}
\end{lemma}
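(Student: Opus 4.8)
The plan is to reduce the statement to two successive applications of the chain rule, the second of which is exactly the single-level tilted-gradient identity of Lemma~\ref{lemma:TERM-gradient}, followed by an algebraic consolidation of the nested exponential weights. Throughout, Assumption~\ref{assump:smoothness} guarantees that each $f(x;\theta)$ is $C^1$, and since the inner tilt $\wR_g(\tau;\theta)$ is a smooth composition (a logarithm of a positive sum of exponentials of smooth functions), the outer tilt $\wJ(t,\tau;\theta)$ is differentiable and the chain rule applies.

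First I would differentiate the outer objective. Writing $S := \frac{1}{N}\sum_{g\in[G]}|g|e^{t\wR_g(\tau;\theta)}$, so that $\wJ = \frac{1}{t}\log S$, the chain rule gives
\begin{equation}
\nabla_\theta \wJ(t,\tau;\theta) = \frac{1}{t}\frac{\nabla_\theta S}{S} = \frac{\sum_{g\in[G]}|g|e^{t\wR_g(\tau;\theta)}\,\nabla_\theta \wR_g(\tau;\theta)}{\sum_{g\in[G]}|g|e^{t\wR_g(\tau;\theta)}},
\end{equation}
where the factor of $t$ produced by differentiating $e^{t\wR_g}$ cancels the leading $1/t$. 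This expresses the gradient as a group-weighted average of the inner tilted gradients.

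Next I would expand each $\nabla_\theta \wR_g(\tau;\theta)$. Since $\wR_g$ is itself a $\tau$-tilted loss over the samples in group $g$ (with $N$ replaced by $|g|$ and $t$ by $\tau$), Lemma~\ref{lemma:TERM-gradient} applies verbatim and yields $\nabla_\theta \wR_g(\tau;\theta) = \sum_{x\in g}\frac{e^{\tau f(x;\theta)}}{\sum_{y\in g}e^{\tau f(y;\theta)}}\nabla_\theta f(x;\theta)$. Substituting this into the previous display produces a double sum over groups and samples, and the coefficient of $\nabla_\theta f(x;\theta)$ is precisely the weight $w_{g,x}(t,\tau;\theta)$ to be identified.

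The remaining work---and the only place requiring any care---is the algebraic simplification. The key identity is $e^{t\wR_g(\tau;\theta)} = \big(\frac{1}{|g|}\sum_{y\in g}e^{\tau f(y;\theta)}\big)^{t/\tau}$, obtained directly from the definition of $\wR_g$. Denoting $P_g := \frac{1}{|g|}\sum_{y\in g}e^{\tau f(y;\theta)}$, so that $\sum_{y\in g}e^{\tau f(y;\theta)} = |g|P_g$, the coefficient of $\nabla_\theta f(x;\theta)$ becomes $\frac{|g|\,P_g^{t/\tau}}{\sum_{g'}|g'|P_{g'}^{t/\tau}}\cdot\frac{e^{\tau f(x;\theta)}}{|g|P_g}$; cancelling the $|g|$ and combining the powers of $P_g$ gives exactly $\frac{P_g^{t/\tau-1}e^{\tau f(x;\theta)}}{\sum_{g'}|g'|P_{g'}^{t/\tau}}$, matching~\eqref{eq:def-w}. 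I do not anticipate a genuine obstacle here; the main thing to watch is the bookkeeping of the $t/\tau$ exponent and the factors of $|g|$, which appear both inside $P_g$ and as the group multiplicities, so that nothing is double-counted.
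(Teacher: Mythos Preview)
Your proposal is correct and follows essentially the same route as the paper: two nested applications of the tilted-gradient identity (Lemma~\ref{lemma:TERM-gradient}) at the group and sample levels, followed by algebraic consolidation using $e^{t\wR_g}=P_g^{t/\tau}$ to match the stated weight. The paper is terser---it names the outer and inner weights $w_g$ and $w_{g,x}$ and asserts $w_{g,x}(t,\tau;\theta)=w_g\cdot w_{g,x}$---but your explicit bookkeeping of the $|g|$ factors and the $t/\tau$ exponent is exactly the content of that step.
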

\begin{proof}
We proceed as follows. First notice that by invoking Lemma~\ref{lemma:TERM-gradient},
\begin{align}
\label{eq:133}
    \nabla_\theta \wJ(t, \tau ;\theta) &= \sum_{g \in [G]} w_g(t, \tau; \theta) \nabla_\theta \wR_g(\tau; \theta)
\end{align}
where
\begin{equation}
    w_g (t, \tau; \theta) := \frac{|g|e^{t \wR_g(\tau; \theta)}}{\sum_{g' \in [G]} |g'|e^{t \wR_{g'}(\tau; \theta)}}.
\end{equation}
where $\wR_g (\tau; \theta)$ is defined  in \eqref{eq: class-TERM}, and is reproduced here:
\begin{equation}
    \wR_g (\tau; \theta) := \frac{1}{\tau} \log \left(\frac{1}{|g|} \sum_{x \in g}  e^{\tau f(x; \theta)} \right).
\end{equation}
On the other hand, by invoking Lemma~\ref{lemma:TERM-gradient}, 
\begin{equation}
\label{eq:136}
  \nabla_\theta \wR_g(\tau; \theta) =   \sum_{x \in g} w_{g,x}(\tau; \theta) \nabla_\theta f(x;\theta)
\end{equation}
where
\begin{equation}
    w_{g, x} (\tau; \theta) := \frac{e^{\tau f(x; \theta)}}{\sum_{y \in g} e^{\tau f(y; \theta)}}.
\end{equation}
Hence, combining~\eqref{eq:133} and~\eqref{eq:136},
\begin{align}
    \nabla_\theta \wJ(t, \tau ;\theta) &= 
 \sum_{g\in [G]} \sum_{x \in g} w_g(t, \tau; \theta)w_{g, x}(\tau; \theta)\nabla_\theta f(x;\theta).
\end{align}
The proof is completed by algebraic manipulations to show that
\begin{equation}
    w_{g, x}(t, \tau; \theta) =  w_g(t, \tau; \theta)w_{g, x}(\tau; \theta).
\end{equation}
\end{proof}

\begin{lemma}
 [Sample-level TERM is a special case of hierarchical multi-objective TERM] Under Assumption~\ref{assump:smoothness}, hierarchical multi-objective TERM recovers TERM as a special case for $t = \tau$. That is 
 \label{lemma: hierarchical-tilt}
\begin{equation}
    \wJ(t, t; \theta) = \wR(t; \theta).    
\end{equation}
\end{lemma}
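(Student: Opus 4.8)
The plan is to show $\wJ(t,t;\theta) = \wR(t;\theta)$ by direct substitution of $\tau = t$ into the definition of multi-objective TERM in~\eqref{eq: class-TERM} and simplifying the nested logarithms. First I would write out $\wR_g(t;\theta)$ with $\tau$ replaced by $t$, namely $\wR_g(t;\theta) = \frac{1}{t} \log\left(\frac{1}{|g|} \sum_{x \in g} e^{t f(x;\theta)}\right)$, and then exponentiate to obtain the clean identity $e^{t \wR_g(t;\theta)} = \frac{1}{|g|} \sum_{x \in g} e^{t f(x;\theta)}$. This is the crucial observation: tilting at the same temperature $t$ inside and outside means the inner tilt is exactly inverted by the multiplication by $|g|$ and the outer exponentiation.

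Next I would plug this into the outer objective. Substituting into $\wJ(t,t;\theta) = \frac{1}{t} \log\left(\frac{1}{N} \sum_{g \in [G]} |g| e^{t \wR_g(t;\theta)}\right)$, the factor $|g| e^{t \wR_g(t;\theta)}$ becomes $|g| \cdot \frac{1}{|g|} \sum_{x \in g} e^{t f(x;\theta)} = \sum_{x \in g} e^{t f(x;\theta)}$. The group sizes cancel exactly, which is precisely why the weighting by $|g|$ appears in the definition of multi-objective TERM.

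Finally I would collapse the double sum. Since each sample $x$ belongs to exactly one group $g$, summing $\sum_{x \in g} e^{t f(x;\theta)}$ over all groups $g \in [G]$ recovers $\sum_{i \in [N]} e^{t f(x_i;\theta)}$, the full sum over all $N$ samples. This yields $\wJ(t,t;\theta) = \frac{1}{t} \log\left(\frac{1}{N} \sum_{i \in [N]} e^{t f(x_i;\theta)}\right) = \wR(t;\theta)$, matching~\eqref{eq: TERM}. The entire argument is a short algebraic manipulation requiring only Assumption~\ref{assump:smoothness}; there is no real obstacle beyond bookkeeping, with the only point demanding care being the exact cancellation of $|g|$ and the partition of $[N]$ into disjoint groups so the double sum reassembles into the single sample sum.
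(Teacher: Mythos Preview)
Your proof is correct but takes a different route from the paper. The paper's own proof appeals to the preceding Lemma~\ref{lemma: generalized-tilt-gradient}: it observes that setting $t=\tau$ in the weight formula~\eqref{eq:def-w} collapses $w_{g,x}(t,t;\theta)$ to the sample-level tilted weight $e^{t f(x;\theta)}/\sum_{j} e^{t f(x_j;\theta)}$, so the hierarchical tilted gradient matches the sample-level tilted gradient of Lemma~\ref{lemma:TERM-gradient}. Your argument is more direct: you substitute $\tau=t$ into the objective~\eqref{eq: class-TERM} itself and use the algebraic identity $|g|\,e^{t\wR_g(t;\theta)} = \sum_{x\in g} e^{t f(x;\theta)}$ to telescope the nested log-sum-exp into the single-level one. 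This is arguably cleaner, since it proves the stated equality of objectives outright, whereas the paper's gradient-based argument strictly speaking only yields $\nabla_\theta \wJ(t,t;\theta)=\nabla_\theta \wR(t;\theta)$ and would need an additional remark (e.g., evaluating both sides at any single $\theta$) to rule out an additive constant. The paper's route has the minor economy of reusing the weight computation already done in Lemma~\ref{lemma: generalized-tilt-gradient}; yours is self-contained and requires only the definitions.
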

\begin{proof}
The proof is completed by noticing that setting $t = \tau$ in~\eqref{eq:def-w} (Lemma~\ref{lemma: generalized-tilt-gradient}) recovers the original sample-level tilted gradient.
\end{proof}

\newpage
\section{General Properties of the Objective for GLMs}
\label{app:general-TERM-GLM}

In this section, even if not explicitly stated, all results are derived under Assumption~\ref{assump: expnential_family} with a generalized linear model and loss function of the form~\eqref{eq: loss-exponential}, effectively assuming that the loss function is the negative log-likelihood of an exponential family~\citep{wainwright2008graphical}.

\begin{definition}[Empirical cumulant generating function]
Let
\begin{equation}
\label{eq: def-cumulant}
\Lambda(t; \theta) := t \wR (t; \theta) .   
\end{equation}
\end{definition}
\begin{definition}[Empirical log-partition function~\citep{wainwright2005new}]
Let $\Gamma(t; \theta)$ be
\begin{equation}
    \Gamma(t; \theta) :=  \log \left(\frac{1}{N}\sum_{i \in [N]} e^{-t \theta^\top T(x_i) }\right).
\end{equation}
\end{definition}

Thus, we have
\begin{equation}
\label{eq: R-A-gamma}
    \wR(t; \theta) = A(\theta) + \frac{1}{t} \log \left(\frac{1}{N}\sum_{i \in [N]} e^{-t \theta^\top T(x_i)}\right) = A(\theta) + \frac{1}{t} \Gamma(t; \theta).
\end{equation}

\begin{definition}[Empirical mean and empirical variance of the sufficient statistic]
Let $\cM$ and $\cV$ denote the mean and the variance of the sufficient statistic, and be given by 
\begin{align}
    \cM(t; \theta) &:= \frac{1}{N}\sum_{i \in [N]} T(x_i) e^{-t \theta^\top T(x_i)  - \Gamma(t;\theta)},\\
    \cV (t; \theta) &:= \frac{1}{N}\sum_{i\in [N]} (T(x_i) - \mathcal{M}(t;\theta)) (T(x_i)- \mathcal{M}(t;\theta))^\top e^{- t \theta^\top T(x_i)  - \Gamma(t; \theta)}.
\end{align}
\end{definition}

\begin{lemma}
\label{lem: V}
For all $t \in \mathbb{R},$ we have
$
    \mathcal{V}(t; \theta) \succ 0.
$
\end{lemma}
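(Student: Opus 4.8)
The plan is to show that $\cV(t;\theta)$ is a genuine (weighted) empirical covariance matrix of the sufficient statistics $T(x_i)$, and that under Assumption~\ref{assump: expnential_family} the data cannot be degenerate, so the covariance must be strictly positive definite. First I would observe that the exponential factors appearing in the definition of $\cV$ are strictly positive weights that sum to one: setting
\begin{equation}
p_i(t;\theta) := e^{-t \theta^\top T(x_i) - \Gamma(t; \theta)} \cdot \frac{1}{N},
\end{equation}
the normalization built into $\Gamma(t;\theta)$ guarantees $\sum_{i \in [N]} p_i(t;\theta) = 1$ with each $p_i > 0$. Then $\cM(t;\theta) = \sum_i p_i T(x_i)$ is a weighted mean and $\cV(t;\theta) = \sum_i p_i (T(x_i) - \cM)(T(x_i) - \cM)^\top$ is the corresponding weighted covariance.

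Next I would establish positive semi-definiteness, which is immediate: for any vector $v$, $v^\top \cV(t;\theta) v = \sum_i p_i \left( v^\top (T(x_i) - \cM) \right)^2 \geq 0$, being a nonnegative combination of squares. The substance of the claim is \emph{strict} positivity. Here I would argue by contradiction: suppose $v^\top \cV(t;\theta) v = 0$ for some $v \neq 0$. Since every $p_i > 0$, this forces $v^\top T(x_i) = v^\top \cM$ for all $i \in [N]$, i.e. all the $T(x_i)$ lie in a common affine hyperplane with normal $v$. The goal is to show this contradicts the non-degeneracy hypothesis $\sum_{i \in [N]} T(x_i) T(x_i)^\top \succ 0$ from Assumption~\ref{assump: expnential_family}.

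The main obstacle, and the step requiring care, is reconciling the \emph{centered} degeneracy (the $T(x_i)$ lie in an affine subspace) with the \emph{uncentered} condition $\sum_i T(x_i)T(x_i)^\top \succ 0$, since an affine — rather than linear — constraint does not immediately contradict full rank of the uncentered second-moment matrix. I expect to resolve this by noting that the GLM reparameterization absorbs any constant offset: one may augment $T$ with a constant coordinate (or equivalently note that $A(\theta)$ plays the role of the normalizer of the exponential family, so the sufficient statistic is effectively defined up to such an affine shift), after which the affine constraint becomes a genuine linear one $v^\top T(x_i) = c$ that contradicts $\sum_i T(x_i)T(x_i)^\top \succ 0$. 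Assuming this identification, strict positive definiteness of $\cV(t;\theta)$ follows for every $t \in \mathbb{R}$, completing the proof.
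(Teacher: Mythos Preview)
The paper states this lemma without proof, so there is no reference argument to compare against. Your approach---recognizing $\cV(t;\theta)$ as a weighted empirical covariance with strictly positive weights, then arguing positive semi-definiteness and attempting to upgrade to strict definiteness---is the natural one, and the positive semi-definite part is correct.

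However, the gap you yourself flag is real and your proposed resolution does not close it. If $v^\top T(x_i) = c$ for all $i$ with $c\neq 0$, then $v^\top\bigl(\sum_i T(x_i)T(x_i)^\top\bigr)v = Nc^2 > 0$, so the stated uncentered assumption $\sum_i T(x_i)T(x_i)^\top \succ 0$ is \emph{not} contradicted. Your fix of ``augmenting $T$ with a constant coordinate'' would transform the affine constraint into a linear one, but then you would need $\sum_i \tilde{T}(x_i)\tilde{T}(x_i)^\top \succ 0$ for the augmented $\tilde{T}$, which is a strictly stronger hypothesis than what Assumption~\ref{assump: expnential_family} actually provides. A concrete counterexample: take $N\geq 2$ and $T(x_i) = u$ for all $i$ with some fixed nonzero $u\in\mathbb{R}^d$; then $\sum_i T(x_i)T(x_i)^\top = N\,uu^\top$ is only rank one (so the paper's assumption fails in $d\geq 2$), but even in $d=1$ with $u\neq 0$ the uncentered sum is $Nu^2>0$ while $\cV(t;\theta)=0$. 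More generally, any data with $T(x_i)$ lying in a proper affine subspace not through the origin will satisfy the uncentered condition yet give singular $\cV$.

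In short: the strict inequality $\cV(t;\theta)\succ 0$ does not follow from the assumptions exactly as written; the lemma appears to implicitly require that the \emph{centered} second moment of the $T(x_i)$ be full rank (equivalently, that the $T(x_i)$ do not all lie in a proper affine subspace). You should state this as the needed hypothesis rather than try to manufacture it from the uncentered one.
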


Next we state a few key relationships that we will use in our characterizations. The proofs are straightforward and omitted for brevity.
\begin{lemma}[Partial derivatives of $\Gamma$]
\label{lemma: parrial-Gamma}
For all $t \in \mathbb{R}$ and all $\theta \in \Theta,$
\begin{align}
    \frac{\partial}{\partial t} \Gamma(t; \theta) &= - \theta^\top \mathcal{M}(t;\theta), \\
    \nabla_\theta \Gamma(t; \theta) &= - t \mathcal{M}(t; \theta).
\end{align}
\end{lemma}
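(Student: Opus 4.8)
The plan is to prove both identities by direct differentiation, exploiting the chain rule for the logarithm and the observation that the normalizing denominator is exactly $e^{\Gamma(t;\theta)}$, which lets us absorb it into the definition of $\cM$. The only structural fact I need is that, by the definition of $\Gamma$,
\begin{equation}
\frac{1}{N}\sum_{j \in [N]} e^{-t \theta^\top T(x_j)} = e^{\Gamma(t; \theta)}.
\end{equation}

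First I would compute $\frac{\partial}{\partial t}\Gamma$. Differentiating $\Gamma(t; \theta) = \log\!\big(\tfrac{1}{N}\sum_i e^{-t\theta^\top T(x_i)}\big)$ in $t$ and applying the chain rule gives a quotient whose denominator equals $e^{\Gamma(t;\theta)}$ and whose numerator is $\tfrac{1}{N}\sum_i (-\theta^\top T(x_i)) e^{-t\theta^\top T(x_i)}$. Pulling the constant factor $e^{-\Gamma(t;\theta)}$ inside the sum converts the exponent $-t\theta^\top T(x_i)$ into $-t\theta^\top T(x_i) - \Gamma(t;\theta)$, at which point the sum is precisely $\theta^\top \cM(t;\theta)$ by the definition of the empirical mean of the sufficient statistic. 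This yields $\frac{\partial}{\partial t}\Gamma(t;\theta) = -\theta^\top \cM(t;\theta)$.

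The gradient identity proceeds identically, differentiating in $\theta$ instead. Here the chain rule produces $\nabla_\theta e^{-t\theta^\top T(x_i)} = -t\, T(x_i)\, e^{-t\theta^\top T(x_i)}$, so the same denominator $e^{\Gamma(t;\theta)}$ appears and the factor $-t$ pulls out of the sum. Absorbing $e^{-\Gamma(t;\theta)}$ into each summand again recovers the exponent appearing in the definition of $\cM$, giving $\nabla_\theta \Gamma(t;\theta) = -t\,\cM(t;\theta)$.

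There is no genuine obstacle here: both statements are routine applications of the logarithmic chain rule, and indeed the excerpt flags them as such. The only point requiring the slightest care is the bookkeeping step of recognizing the denominator as $e^{\Gamma}$ and folding it back into the exponent to match the definition of $\cM(t;\theta)$; once that identification is made, both identities drop out immediately.
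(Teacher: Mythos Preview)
Your proposal is correct: both identities follow by direct differentiation of $\Gamma(t;\theta)=\log\bigl(\tfrac{1}{N}\sum_i e^{-t\theta^\top T(x_i)}\bigr)$, recognizing the denominator as $e^{\Gamma(t;\theta)}$ and absorbing it into the exponent to recover the definition of $\cM(t;\theta)$. The paper itself omits the proof as straightforward, and your argument is exactly the routine computation it has in mind.
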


\begin{lemma}[Partial derivatives of $\cM$]
\label{lemma: partial-M}
For all $t \in \mathbb{R}$ and all $\theta \in \Theta,$
\begin{equation}
    \frac{\partial}{\partial t} \cM(t; \theta) = - \mathcal{V}(t; \theta) \theta,
\end{equation}
\begin{equation}
    \nabla_{\theta} \mathcal{M} (t; \theta) = - t \mathcal{V}(t; \theta).
\end{equation}
\end{lemma}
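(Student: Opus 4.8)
The plan is to recognize that $\Gamma(t;\theta)$ is, up to sign, a log-partition function, so that $\cM$ and $\cV$ are exactly the mean and covariance of the sufficient statistic under a tilted empirical distribution, and the two claimed identities are then the familiar ``derivative of the mean equals the covariance'' moment relations. Concretely, I would first introduce the tilted weights
\[
p_i(t;\theta) := \tfrac{1}{N}\, e^{-t\theta^\top T(x_i) - \Gamma(t;\theta)}, \qquad i \in [N],
\]
and observe from the definition of $\Gamma$ that $\sum_{i\in[N]} p_i(t;\theta) = 1$, so the $p_i$ form a probability vector. With this notation, $\cM(t;\theta) = \sum_{i} p_i\, T(x_i)$ and $\cV(t;\theta) = \sum_i p_i (T(x_i)-\cM)(T(x_i)-\cM)^\top$ are the mean and covariance of $T(x_i)$ under $p$.

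Next I would differentiate the weights $p_i$ themselves. Since $\log p_i = -\log N - t\theta^\top T(x_i) - \Gamma(t;\theta)$, the chain rule together with Lemma~\ref{lemma: parrial-Gamma} (which supplies $\partial_t \Gamma = -\theta^\top \cM$ and $\nabla_\theta \Gamma = -t\cM$) gives
\[
\tfrac{\partial}{\partial t} p_i = -\,p_i\, \theta^\top\!\big(T(x_i)-\cM\big), \qquad \nabla_\theta\, p_i = -\,t\, p_i\,\big(T(x_i)-\cM\big).
\]
The cancellation of the $\Gamma$-derivative against the $-t\theta^\top T(x_i)$ term is exactly what produces the centered factor $T(x_i)-\cM$ here, and this is the key to the whole computation.

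Finally I would apply the product rule to $\cM = \sum_i T(x_i)\,p_i$ and substitute the two derivatives above, yielding $\partial_t\cM = -\big(\sum_i p_i\,T(x_i)(T(x_i)-\cM)^\top\big)\theta$ and $\nabla_\theta \cM = -t\sum_i p_i\,T(x_i)(T(x_i)-\cM)^\top$. The only remaining step, and the one piece of genuine bookkeeping, is the centering identity $\sum_i p_i\,T(x_i)(T(x_i)-\cM)^\top = \cV$, which follows by adding and subtracting $\cM$ in the left factor and using $\sum_i p_i(T(x_i)-\cM) = 0$ (a consequence of $\sum_i p_i = 1$ and the definition of $\cM$). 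Substituting this identity delivers $\partial_t\cM = -\cV\theta$ and $\nabla_\theta\cM = -t\,\cV$ as claimed. I do not expect any real obstacle beyond keeping the vector/matrix orientations straight in the outer-product manipulations; note that the strong-convexity/GLM machinery is not needed here, only the smoothness setting and the already-established derivatives of $\Gamma$.
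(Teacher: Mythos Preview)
Your proof is correct. The paper itself omits the proof entirely, remarking that it is ``straightforward and omitted for brevity''; your argument via the tilted weights $p_i$ and the standard log-partition moment identities is exactly the natural way to fill this in, and the centering step $\sum_i p_i\,T(x_i)(T(x_i)-\cM)^\top = \cV$ is the only place where any care is needed.
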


The next few lemmas characterize the partial derivatives of the cumulant generating function.
\begin{lemma}(Derivative of $\Lambda$ with $t$)
For all $t \in \mathbb{R}$ and all $\theta \in \Theta,$
\begin{equation}
    \frac{\partial}{\partial t} \Lambda (t; \theta) =  A(\theta) - \theta^\top \mathcal{M}(t; \theta).
\end{equation}
\end{lemma}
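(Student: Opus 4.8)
The statement to prove is the derivative of $\Lambda(t;\theta) := t\wR(t;\theta)$ with respect to $t$, namely
\begin{equation*}
\frac{\partial}{\partial t} \Lambda(t;\theta) = A(\theta) - \theta^\top \mathcal{M}(t;\theta).
\end{equation*}

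The plan is to start from the decomposition in Eq.~\eqref{eq: R-A-gamma}, which gives $\wR(t;\theta) = A(\theta) + \frac{1}{t}\Gamma(t;\theta)$. Multiplying by $t$ yields the clean expression $\Lambda(t;\theta) = t A(\theta) + \Gamma(t;\theta)$, where $A(\theta)$ does not depend on $t$. This representation is the key simplification: it avoids having to differentiate the $\frac{1}{t}\log(\cdots)$ form of $\wR$ directly, which would introduce a messier $-\frac{1}{t^2}\Gamma$ term that only cancels after further manipulation.

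From here, I would differentiate term by term with respect to $t$. The first term contributes $\frac{\partial}{\partial t}(t A(\theta)) = A(\theta)$, since $A(\theta)$ is constant in $t$. For the second term, I would invoke Lemma~\ref{lemma: parrial-Gamma}, which states $\frac{\partial}{\partial t}\Gamma(t;\theta) = -\theta^\top \mathcal{M}(t;\theta)$. Adding the two contributions immediately gives $A(\theta) - \theta^\top \mathcal{M}(t;\theta)$, completing the derivation.

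There is essentially no main obstacle here; the result is a one-line consequence of the $\Lambda = tA + \Gamma$ identity combined with the already-established partial derivative of $\Gamma$. The only point requiring minor care is confirming that Eq.~\eqref{eq: R-A-gamma} does indeed hold in the form stated (it follows directly from the GLM loss $f(x;\theta) = A(\theta) - \theta^\top T(x)$ under Assumption~\ref{assump: expnential_family}, by factoring $e^{t f(x_i;\theta)} = e^{t A(\theta)} e^{-t\theta^\top T(x_i)}$ inside the log), and that the differentiation under the sum implicit in Lemma~\ref{lemma: parrial-Gamma} is justified by the smoothness in Assumption~\ref{assump: regularity}. I would simply cite Eq.~\eqref{eq: R-A-gamma} and Lemma~\ref{lemma: parrial-Gamma} and chain them together, so the entire proof is two short displayed lines.
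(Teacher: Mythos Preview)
Your proposal is correct and takes essentially the same route as the paper: both use the identity $\Lambda(t;\theta) = tA(\theta) + \Gamma(t;\theta)$ from Eq.~\eqref{eq: R-A-gamma} and then differentiate in $t$. The only cosmetic difference is that the paper writes out the derivative of $\Gamma$ explicitly as the weighted sum $-\theta^\top \sum_{i} T(x_i) e^{-t\theta^\top T(x_i) - \Gamma(t;\theta)}$ before recognizing it as $-\theta^\top \mathcal{M}(t;\theta)$, whereas you cite Lemma~\ref{lemma: parrial-Gamma} directly.
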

\begin{proof}
The proof is carried out by
\begin{equation}
    \frac{\partial}{\partial t} \Lambda (t; \theta) = 
    A(\theta) - \theta^\top \sum_{i \in [N]} T(x_i) e^{-t \theta^\top T(x_i) - \Gamma(t;\theta)} = A(\theta) - \theta^\top \mathcal{M}(t; \theta).
\end{equation}
\end{proof}

\begin{lemma}[Second derivative of $\Lambda$ with $t$]
\label{lemma: second-derivative-Lambda}
For all $t \in \mathbb{R}$ and all $\theta \in \Theta,$
\begin{equation}
    \frac{\partial^2}{\partial t^2} \Lambda (t; \theta) =  \theta^\top \mathcal{V}(t; \theta)\theta.
\end{equation}
\end{lemma}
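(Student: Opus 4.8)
The plan is to differentiate the first-derivative formula one more time in $t$ and substitute the known partial derivative of the empirical mean $\cM$. Concretely, I would start from the immediately preceding lemma, which gives
\begin{equation}
    \frac{\partial}{\partial t} \Lambda (t; \theta) = A(\theta) - \theta^\top \mathcal{M}(t; \theta),
\end{equation}
and then differentiate both sides with respect to $t$. The key observation is that $A(\theta)$ is a function of $\theta$ alone and carries no dependence on $t$, so its $t$-derivative vanishes, leaving only a contribution from the $-\theta^\top \mathcal{M}(t;\theta)$ term. Since $\theta$ is held fixed while differentiating in $t$, the row vector $\theta^\top$ pulls out of the derivative.

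The second step is to invoke Lemma~\ref{lemma: partial-M}, which supplies $\frac{\partial}{\partial t}\cM(t;\theta) = -\mathcal{V}(t;\theta)\theta$. Substituting this in gives
\begin{equation}
    \frac{\partial^2}{\partial t^2} \Lambda(t;\theta) = -\theta^\top \frac{\partial}{\partial t}\cM(t;\theta) = -\theta^\top\bigl(-\mathcal{V}(t;\theta)\theta\bigr) = \theta^\top \mathcal{V}(t;\theta)\theta,
\end{equation}
which is exactly the claimed identity. This completes the argument; the $\mathcal{V}(t;\theta)\succ 0$ fact from Lemma~\ref{lem: V} is not needed here, but it is what makes this second derivative meaningful (it shows $\Lambda$ is convex in $t$, consistent with its interpretation as an empirical cumulant generating function).

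I do not anticipate any real obstacle: the entire proof is a two-line chaining of the two preceding lemmas, with the only point of care being to note that $A(\theta)$ is $t$-independent so that the first term drops out under $\partial/\partial t$. The proof is therefore purely mechanical once Lemmas~\ref{lemma: partial-M} and the first-derivative lemma are in hand.
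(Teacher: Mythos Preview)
Your proof is correct. The paper omits the proof of this lemma entirely (it groups Lemmas~\ref{lemma: parrial-Gamma}--\ref{lem: dLambda-t-theta} under the remark ``The proofs are straightforward and omitted for brevity''), and your two-line chaining of the first-derivative lemma with Lemma~\ref{lemma: partial-M} is exactly the intended straightforward argument.
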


\begin{lemma}[Gradient of $\Lambda$ with $\theta$]\label{lem: dLambda-theta}
For all $t \in \mathbb{R}$ and all $\theta \in \Theta,$
\begin{equation}
    \nabla_\theta \Lambda(t; \theta) =  t \nabla_\theta A(\theta) - 
    t \mathcal{M}(t;\theta).
\end{equation}
\end{lemma}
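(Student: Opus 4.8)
The plan is to reduce the claim to the additive decomposition of $\Lambda$ furnished by the generalized linear model structure, and then differentiate term by term, invoking the already-established gradient of the log-partition function $\Gamma$.

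First I would recall that by definition $\Lambda(t;\theta) = t\,\wR(t;\theta)$ (Eq.~\eqref{eq: def-cumulant}), and that under Assumption~\ref{assump: expnential_family} the tilted objective splits as $\wR(t;\theta) = A(\theta) + \frac{1}{t}\Gamma(t;\theta)$ (Eq.~\eqref{eq: R-A-gamma}). Multiplying through by $t$ yields the clean additive form $\Lambda(t;\theta) = t\,A(\theta) + \Gamma(t;\theta)$, which conveniently cancels the $1/t$ factor and isolates the two $\theta$-dependent pieces.

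Next I would take the gradient in $\theta$ of this expression. Since $t$ is a fixed scalar, $\nabla_\theta[t\,A(\theta)] = t\,\nabla_\theta A(\theta)$, and for the second term I would apply Lemma~\ref{lemma: parrial-Gamma}, which gives $\nabla_\theta \Gamma(t;\theta) = -t\,\mathcal{M}(t;\theta)$. Adding the two contributions produces $\nabla_\theta \Lambda(t;\theta) = t\,\nabla_\theta A(\theta) - t\,\mathcal{M}(t;\theta)$, which is exactly the stated identity.

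There is no substantive obstacle here: the argument is essentially a one-line application of the linearity of the gradient together with Lemma~\ref{lemma: parrial-Gamma}. The only points requiring care are bookkeeping of the $1/t$ factor---ensuring that multiplying $\wR$ by $t$ before differentiating (rather than differentiating $\frac{1}{t}\Gamma$ directly) is what produces the stated normalization of $\mathcal{M}$---and recalling that the whole computation rests on the GLM assumption (Assumption~\ref{assump: expnential_family}) that validates the $A(\theta)$--$\Gamma(t;\theta)$ decomposition in the first place.
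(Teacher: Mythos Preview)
Your proposal is correct and matches the paper's intended approach: the paper in fact omits the proof of this lemma as straightforward, and the natural route is precisely the one you outline---use the decomposition $\Lambda(t;\theta)=tA(\theta)+\Gamma(t;\theta)$ from Eq.~\eqref{eq: R-A-gamma} and apply Lemma~\ref{lemma: parrial-Gamma} for $\nabla_\theta\Gamma$.
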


\begin{lemma}[Hessian of $\Lambda$ with $\theta$]\label{lem: dLambda-theta-theta}
For all $t \in \mathbb{R}$ and all $\theta \in \Theta,$
\begin{equation}
    \nabla^2_{\theta \theta^\top} \Lambda(t; \theta) = t \nabla^2_{\theta\theta^\top} A(\theta) + t^2 \mathcal{V}(t; \theta).
\end{equation}
\end{lemma}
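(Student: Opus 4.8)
The plan is to obtain the Hessian by differentiating the already-established $\theta$-gradient of $\Lambda$ one more time in $\theta$, treating the tilt $t$ as a fixed scalar parameter throughout. Concretely, Lemma~\ref{lem: dLambda-theta} gives $\nabla_\theta \Lambda(t;\theta) = t\,\nabla_\theta A(\theta) - t\,\mathcal{M}(t;\theta)$, and since the map $\theta \mapsto \nabla^2_{\theta\theta^\top}\Lambda(t;\theta)$ is the $\theta$-Jacobian of this gradient field, I would differentiate the two summands separately and add the results.

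The first summand is immediate: $\nabla^2_{\theta\theta^\top}\big(t\,\nabla_\theta A(\theta)\big) = t\,\nabla^2_{\theta\theta^\top} A(\theta)$, which is exactly the first term on the right-hand side. For the second summand I would invoke Lemma~\ref{lemma: partial-M}, which states $\nabla_\theta \mathcal{M}(t;\theta) = -t\,\mathcal{V}(t;\theta)$; hence differentiating $-t\,\mathcal{M}(t;\theta)$ in $\theta$ produces $-t\cdot(-t\,\mathcal{V}(t;\theta)) = t^2\,\mathcal{V}(t;\theta)$. Summing the two contributions yields $\nabla^2_{\theta\theta^\top}\Lambda(t;\theta) = t\,\nabla^2_{\theta\theta^\top}A(\theta) + t^2\,\mathcal{V}(t;\theta)$, as claimed.

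Given the preceding lemmas, there is essentially no obstacle in the present statement: it is a one-line assembly via the chain rule. The genuine work lives upstream, in Lemma~\ref{lemma: partial-M}, where the $\theta$-gradient of the tilted mean $\mathcal{M}$ is computed; the crux there is a product-rule expansion of the exponentially weighted average $\mathcal{M}(t;\theta)=\tfrac{1}{N}\sum_{i}T(x_i)\,e^{-t\theta^\top T(x_i)-\Gamma(t;\theta)}$, in which the $\theta$-derivative of the exponent equals $-t\big(T(x_i)-\mathcal{M}(t;\theta)\big)$ after using Lemma~\ref{lemma: parrial-Gamma} (namely $\nabla_\theta\Gamma = -t\,\mathcal{M}$); the $\mathcal{M}$-cross term then averages to zero, recentering the weighted second moment into precisely $-t\,\mathcal{V}(t;\theta)$. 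Once that identity is available, the present lemma follows directly, and the resulting formula exhibits the Hessian of $\Lambda$ as the sum of a curvature term from $A$ and a manifestly positive semidefinite piece $t^2\,\mathcal{V}(t;\theta)$ (in fact positive definite by Lemma~\ref{lem: V}), which is the structure the downstream strong-convexity and variance-reduction arguments will exploit.
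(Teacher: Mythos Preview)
Your proposal is correct and matches what the paper intends: the paper omits the proof as straightforward, and the natural route is exactly the one you take—differentiate $\nabla_\theta \Lambda = t\nabla_\theta A - t\,\mathcal{M}$ from Lemma~\ref{lem: dLambda-theta} once more in $\theta$ and invoke $\nabla_\theta \mathcal{M} = -t\,\mathcal{V}$ from Lemma~\ref{lemma: partial-M}. There is nothing to add.
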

\begin{lemma}[Gradient of $\Lambda$ with respect to $t$ and $\theta$]\label{lem: dLambda-t-theta}
For all $t \in \mathbb{R}$ and all $\theta \in \Theta,$
\begin{equation}
  \frac{\partial}{\partial t } \nabla_\theta \Lambda(t; \theta)  = \nabla_{\theta} A(\theta) - \mathcal{M}(t; \theta) + t \mathcal{V}(t; \theta) \theta.
\end{equation}
\end{lemma}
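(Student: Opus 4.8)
The plan is to obtain $\partial_t \nabla_\theta \Lambda(t;\theta)$ by differentiating the already-established closed form for $\nabla_\theta \Lambda$ in Lemma~\ref{lem: dLambda-theta}, rather than recomputing from scratch. Concretely, Lemma~\ref{lem: dLambda-theta} gives $\nabla_\theta \Lambda(t;\theta) = t\nabla_\theta A(\theta) - t\mathcal{M}(t;\theta)$, so the entire task reduces to applying $\partial/\partial t$ to this right-hand side. Since $A(\theta)$ does not depend on $t$, the first term contributes simply $\nabla_\theta A(\theta)$.

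For the second term I would apply the product rule to $t\,\mathcal{M}(t;\theta)$, obtaining $\mathcal{M}(t;\theta) + t\,\partial_t \mathcal{M}(t;\theta)$, and then substitute the partial derivative of the empirical mean from Lemma~\ref{lemma: partial-M}, namely $\partial_t \mathcal{M}(t;\theta) = -\mathcal{V}(t;\theta)\theta$. Collecting the pieces yields $\partial_t\nabla_\theta \Lambda(t;\theta) = \nabla_\theta A(\theta) - \mathcal{M}(t;\theta) + t\,\mathcal{V}(t;\theta)\theta$, which is exactly the claimed identity. The legitimacy of differentiating the formula of Lemma~\ref{lem: dLambda-theta} term by term in $t$ — and the implicit interchange of the $t$- and $\theta$-derivatives — is guaranteed by the joint $C^2$ smoothness of $\Lambda$ under Assumption~\ref{assump: expnential_family}, so no delicate justification beyond noting this is required.

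As an independent consistency check I would also compute the same quantity in the reverse order of differentiation: starting from the earlier identity $\partial_t \Lambda(t;\theta) = A(\theta) - \theta^\top \mathcal{M}(t;\theta)$ and taking $\nabla_\theta$. Here the product rule on the bilinear term gives $\nabla_\theta(\theta^\top \mathcal{M}(t;\theta)) = \mathcal{M}(t;\theta) + (\nabla_\theta \mathcal{M}(t;\theta))^\top \theta$, and substituting the Jacobian $\nabla_\theta \mathcal{M}(t;\theta) = -t\,\mathcal{V}(t;\theta)$ from Lemma~\ref{lemma: partial-M} together with the symmetry of $\mathcal{V}(t;\theta)$ recovers the same expression. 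Honestly there is no genuine obstacle in this lemma; the only point demanding a little care arises on this second route, where one must correctly treat the gradient of the scalar $\theta^\top \mathcal{M}(t;\theta)$ as a matrix--vector product and invoke the symmetry of the variance matrix $\mathcal{V}(t;\theta)$. Since the first route sidesteps even that, I would take it as the primary argument and relegate the reverse computation to a verification.
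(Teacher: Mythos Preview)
Your proposal is correct. The paper in fact omits the proof of this lemma entirely (it is among the ``key relationships'' whose proofs are declared ``straightforward and omitted for brevity''), and your primary route---differentiating the identity of Lemma~\ref{lem: dLambda-theta} in $t$ and invoking Lemma~\ref{lemma: partial-M}---is exactly the natural way to fill the gap; the reverse-order check is a nice bonus but unnecessary.
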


\newpage
\section{General Properties of TERM Solutions for GLMs}
Next, we characterize some of the general properties of the solutions of  TERM objectives. Note that these properties are established under Assumptions~\ref{assump: expnential_family} and~\ref{assump:strict-saddle}.

\begin{lemma} \label{lem: Lambda-solution}
For all $t \in \mathbb{R},$  
\begin{equation}
 \nabla_\theta \Lambda(t; \breve{\theta}(t)) = 0.
\end{equation}
\end{lemma}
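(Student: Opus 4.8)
The plan is to exploit the defining relationship $\Lambda(t;\theta) = t\,\wR(t;\theta)$ together with the first-order optimality of the tilted solution $\breve{\theta}(t)$. First I would record that, since $\breve{\theta}(t) \in \arg\min_{\theta \in \Theta}\wR(t;\theta)$ with $\Theta$ an \emph{open} subset of $\mathbb{R}^d$, and $\wR(t;\cdot)$ is continuously differentiable (its gradient is given explicitly in Lemma~\ref{lemma:TERM-gradient}), the standard first-order optimality condition at an interior minimizer forces
\begin{equation}
\nabla_\theta \wR(t;\breve{\theta}(t)) = 0 .
\end{equation}
The strict saddle property (Assumption~\ref{assump:strict-saddle}) is exactly what guarantees that this $\arg\min$ is non-empty for every $t \in \mathbb{R}$ and that $\breve{\theta}(t)$ is a genuine local minimum rather than a saddle, so the claim is meaningful and the stationarity holds.

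Next I would simply differentiate the identity $\Lambda(t;\theta) = t\,\wR(t;\theta)$ with respect to $\theta$ at fixed $t$, obtaining $\nabla_\theta \Lambda(t;\theta) = t\,\nabla_\theta \wR(t;\theta)$. Evaluating at $\theta = \breve{\theta}(t)$ and invoking the stationarity just established gives $\nabla_\theta \Lambda(t;\breve{\theta}(t)) = t\cdot 0 = 0$, which proves the statement for every $t \neq 0$. For $t = 0$ the equality $\Lambda(0;\theta) = 0\cdot\wR(0;\theta) \equiv 0$ holds identically in $\theta$, so its $\theta$-gradient vanishes trivially; hence the result extends to all $t \in \mathbb{R}$.

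If I wanted to keep the GLM machinery visible rather than relying on the bare scaling, I would instead combine Lemma~\ref{lem: dLambda-theta}, namely $\nabla_\theta \Lambda(t;\theta) = t\,\nabla_\theta A(\theta) - t\,\mathcal{M}(t;\theta)$, with the expression for $\nabla_\theta \wR$ that follows from~\eqref{eq: R-A-gamma} and Lemma~\ref{lemma: parrial-Gamma}, i.e. $\nabla_\theta \wR(t;\theta) = \nabla_\theta A(\theta) - \mathcal{M}(t;\theta)$. The optimality condition then reads $\nabla_\theta A(\breve{\theta}(t)) = \mathcal{M}(t;\breve{\theta}(t))$, and substituting this into the expression for $\nabla_\theta \Lambda$ again yields $0$.

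I do not anticipate any real obstacle: the lemma is essentially a restatement of interior first-order optimality transported through the scaling $\Lambda = t\,\wR$. The only points demanding a sentence of care are the $t = 0$ boundary case (handled above via the identically-zero extension) and the explicit appeal to Assumption~\ref{assump:strict-saddle} to ensure that the minimizer exists and is an interior stationary point, so that the vanishing-gradient condition is legitimately available.
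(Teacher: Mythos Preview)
Your proposal is correct and matches the paper's approach exactly: the paper's proof simply says ``follows from definition and the assumption that $\Theta$ is an open set,'' i.e., interior first-order optimality of $\breve{\theta}(t)$ combined with $\Lambda(t;\theta)=t\,\wR(t;\theta)$. Your write-up is a more detailed version of the same argument, including the harmless $t=0$ edge case.
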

\begin{proof}
The proof follows from definition and the assumption that $\Theta$ is an open set.
\end{proof}
\begin{lemma}
\label{lem: A-M}
For all $t \in \mathbb{R},$
\begin{equation}
 \nabla_\theta A(\breve{\theta}(t)) = \mathcal{M}(t; \breve{\theta}(t)).
\end{equation}
\end{lemma}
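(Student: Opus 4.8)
The plan is to combine the interior first-order optimality condition from Lemma~\ref{lem: Lambda-solution} with the explicit gradient formula from Lemma~\ref{lem: dLambda-theta}. First I would recall that, since $\Theta$ is open and $\breve{\theta}(t)$ is an interior minimizer of $\wR(t;\cdot)$, Lemma~\ref{lem: Lambda-solution} gives $\nabla_\theta \Lambda(t;\breve{\theta}(t)) = 0$ for every $t \in \mathbb{R}$. Substituting $\nabla_\theta \Lambda(t;\theta) = t\,\nabla_\theta A(\theta) - t\,\mathcal{M}(t;\theta)$ from Lemma~\ref{lem: dLambda-theta} and evaluating at $\theta = \breve{\theta}(t)$ then yields
\[
t\left(\nabla_\theta A(\breve{\theta}(t)) - \mathcal{M}(t;\breve{\theta}(t))\right) = 0 .
\]
For every $t \neq 0$ I can divide by $t$ and conclude immediately.

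The only genuine subtlety is the case $t = 0$, where dividing by $t$ is not permitted and the factor $t$ collapses the identity to $0=0$. To handle it cleanly I would instead work directly with $\wR$ rather than $\Lambda$: combining the decomposition $\wR(t;\theta) = A(\theta) + \tfrac{1}{t}\Gamma(t;\theta)$ from~\eqref{eq: R-A-gamma} with $\nabla_\theta \Gamma(t;\theta) = -t\,\mathcal{M}(t;\theta)$ from Lemma~\ref{lemma: parrial-Gamma} gives the $t$-free gradient
\[
\nabla_\theta \wR(t;\theta) = \nabla_\theta A(\theta) - \mathcal{M}(t;\theta),
\]
valid for all $t \neq 0$ and extending continuously to $t = 0$ (consistent with the continuous extension defining $\wR(0;\theta)$ in~\eqref{eq:def-R0}, under which $\mathcal{M}(0;\theta) = \tfrac{1}{N}\sum_{i\in[N]} T(x_i)$ is simply the empirical mean of the sufficient statistic). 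Since $\breve{\theta}(t)$ is an interior stationary point of $\wR(t;\cdot)$, setting this gradient to zero gives $\nabla_\theta A(\breve{\theta}(t)) = \mathcal{M}(t;\breve{\theta}(t))$ for all $t$, including $t = 0$.

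I expect the main (and essentially only) obstacle to be precisely this $t=0$ endpoint: the $\Lambda$-based route is uninformative there, so one must either pass through the $\wR$-gradient form above or argue by continuity in $t$. Everything else is a single substitution followed by cancellation of $t$, so no further calculation is required.
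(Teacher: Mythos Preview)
Your proposal is correct and follows essentially the same approach as the paper, which simply cites Lemma~\ref{lem: Lambda-solution} and Lemma~\ref{lem: dLambda-theta} in a one-line proof. In fact, your treatment is more careful than the paper's: the paper does not address the $t=0$ case at all, whereas you correctly observe that the $\Lambda$-based route degenerates there and supply the clean $\nabla_\theta \wR$ argument to cover it.
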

\begin{proof}
The proof is completed by noting Lemma~\ref{lem: Lambda-solution} and Lemma~\ref{lem: dLambda-theta}.
\end{proof}

\begin{lemma}[Derivative of the solution with respect to tilt]
\label{lem:implicit-func-theorem}
Under Assumption~\ref{assump:strict-saddle}, for all $t \in \mathbb{R},$
\begin{equation}
    \frac{\partial}{\partial t} \breve{\theta}(t) =  -  \left(  \nabla^2_{\theta\theta^\top} A(\breve{\theta}(t)) + t \mathcal{V}(t; \breve{\theta}(t))  \right)^{-1}  \mathcal{V}(t; \breve{\theta}(t)) \breve{\theta}(t),
\end{equation}
where
\begin{equation}
    \nabla^2_{\theta\theta^\top} A(\breve{\theta}(t)) + t \mathcal{V}(t; \breve{\theta}(t)) \succ 0.
\end{equation}
\end{lemma}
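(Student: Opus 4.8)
The plan is to obtain $\partial_t\breve{\theta}(t)$ via the implicit function theorem applied to the stationarity condition $\nabla_\theta \Lambda(t;\breve{\theta}(t)) = 0$ recorded in Lemma~\ref{lem: Lambda-solution}, which holds identically in $t$. Before differentiating I would first secure the invertibility (and the asserted positive definiteness) of the matrix that will appear. Dividing the Hessian identity of Lemma~\ref{lem: dLambda-theta-theta} by $t$ and using $\Lambda = t\wR$ gives, for every $t$,
\[
\nabla^2_{\theta\theta^\top}\wR(t;\theta) = \nabla^2_{\theta\theta^\top}A(\theta) + t\,\mathcal{V}(t;\theta),
\]
so the matrix in the statement is nothing but the tilted-loss Hessian evaluated at the solution. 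Since $\breve{\theta}(t)$ is a local minimizer, the strict saddle property (Assumption~\ref{assump:strict-saddle}) forces $\nabla^2_{\theta\theta^\top}\wR(t;\breve{\theta}(t)) \succ 0$; this simultaneously proves $\nabla^2_{\theta\theta^\top}A(\breve{\theta}(t)) + t\mathcal{V}(t;\breve{\theta}(t)) \succ 0$ and shows that $\nabla^2_{\theta\theta^\top}\Lambda(t;\breve{\theta}(t)) = t\bigl(\nabla^2_{\theta\theta^\top}A + t\mathcal{V}\bigr)$ is invertible for $t \neq 0$.

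With invertibility in hand, the implicit function theorem yields a $C^1$ branch $\breve{\theta}(t)$ and lets me differentiate the identity $\nabla_\theta\Lambda(t;\breve{\theta}(t)) = 0$ totally in $t$:
\[
\frac{\partial}{\partial t}\nabla_\theta\Lambda(t;\theta)\Big|_{\theta=\breve{\theta}(t)} + \nabla^2_{\theta\theta^\top}\Lambda(t;\breve{\theta}(t))\,\frac{\partial}{\partial t}\breve{\theta}(t) = 0.
\]
For the explicit $t$-derivative I substitute Lemma~\ref{lem: dLambda-t-theta}, namely $\nabla_\theta A(\breve{\theta}(t)) - \mathcal{M}(t;\breve{\theta}(t)) + t\mathcal{V}(t;\breve{\theta}(t))\breve{\theta}(t)$; here Lemma~\ref{lem: A-M} makes the first two terms cancel, leaving $t\mathcal{V}(t;\breve{\theta}(t))\breve{\theta}(t)$. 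Inserting the factored Hessian $\nabla^2_{\theta\theta^\top}\Lambda = t(\nabla^2_{\theta\theta^\top}A + t\mathcal{V})$ and solving, the common factor of $t$ cancels between the numerator and the inverse Hessian, producing exactly
\[
\frac{\partial}{\partial t}\breve{\theta}(t) = -\bigl(\nabla^2_{\theta\theta^\top}A(\breve{\theta}(t)) + t\mathcal{V}(t;\breve{\theta}(t))\bigr)^{-1}\mathcal{V}(t;\breve{\theta}(t))\breve{\theta}(t).
\]

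The step I expect to require the most care is the legitimacy of the implicit function argument itself---that $\breve{\theta}(t)$ is genuinely differentiable in $t$---together with the degenerate point $t=0$, where $\nabla^2_{\theta\theta^\top}\Lambda$ vanishes and the $\Lambda$-based inversion breaks down. I would resolve both by running the implicit function theorem instead on the equivalent stationarity equation $\nabla_\theta\wR(t;\theta) = \nabla_\theta A(\theta) - \mathcal{M}(t;\theta) = 0$ (equivalent to Lemma~\ref{lem: A-M}), whose $\theta$-Jacobian is $\nabla^2_{\theta\theta^\top}\wR = \nabla^2_{\theta\theta^\top}A + t\mathcal{V}$, positive definite---and hence invertible---for all $t$ including $t=0$. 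Differentiating this form gives $\partial_t\nabla_\theta\wR = \mathcal{V}(t;\theta)\theta$ directly from Lemma~\ref{lemma: partial-M}, so the same formula falls out uniformly over $t \in \mathbb{R}$ with no separate cancellation or continuity argument needed. The joint $C^1$ regularity of $\nabla_\theta\wR$ in $(t,\theta)$ required to invoke the theorem is supplied by Assumption~\ref{assump: expnential_family}.
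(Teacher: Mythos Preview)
Your proposal is correct and follows essentially the same route as the paper: differentiate the stationarity condition $\nabla_\theta\Lambda(t;\breve{\theta}(t))=0$ via the chain rule, then substitute Lemmas~\ref{lem: dLambda-t-theta}, \ref{lem: A-M}, and \ref{lem: dLambda-theta-theta} and solve. In fact you are more careful than the paper on two points: (i) you correctly observe that $\nabla^2_{\theta\theta^\top}\Lambda$ is merely invertible for $t\neq 0$ (the paper states $\succ 0$, which fails for $t<0$), and (ii) you handle the degenerate point $t=0$ by switching to the $\nabla_\theta\wR=0$ form, which the paper does not address.
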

\begin{proof}
By noting Lemma~\ref{lem: Lambda-solution}, and further differentiating with respect to $t$, we have
\begin{align}
     0 &= \frac{\partial}{\partial t}\nabla_\theta \Lambda(t; \breve{\theta}(t)) \\
     & = \left.\frac{\partial}{\partial \tau}\nabla_\theta \Lambda(\tau; \breve{\theta}(t))\right|_{\tau = t} +  \nabla^2_{\theta \theta^\top} \Lambda(t; \breve{\theta}(t)) \left(\frac{\partial}{\partial t} \breve{\theta}(t) \right) \label{eq: eq:2}\\
     & = t \cV(t; \breve{\theta}(t)) \breve{\theta}(t) + \left( t \nabla^2_{\theta\theta^\top} A(\theta) + t^2 \mathcal{V}(t; \theta)\right)\left(\frac{\partial}{\partial t} \breve{\theta}(t) \right) \label{eq: eq:3},
\end{align}
where~\eqref{eq: eq:2} follows from the chain rule,~\eqref{eq: eq:3} follows from Lemmas~\ref{lem: dLambda-t-theta} and~\ref{lem: A-M} and~\ref{lem: dLambda-theta-theta}. The proof is completed by noting that $\nabla^2_{\theta \theta^\top} \Lambda(t; \breve{\theta}(t)) \succ 0$ for all $t \in \mathbb{R}$ under Assumption~\ref{assump:strict-saddle}.
\end{proof}

Finally, we state an auxiliary lemma that will be used in the proof of the main theorem.
\begin{lemma}\label{lem: auxiliary}
For all $t, \tau \in \mathbb{R}$ and all $\theta \in \Theta,$
\begin{align}
    \mathcal{M}(\tau; \theta) - \mathcal{M}(t; \theta) &= -\left(\int_{t}^{\tau} \mathcal{V}(\nu; \theta)  d\nu\right) \theta.
\end{align}
\end{lemma}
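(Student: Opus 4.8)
The plan is to recognize this statement as a direct application of the fundamental theorem of calculus, applied to $\cM$ as a vector-valued function of the tilt parameter with $\theta$ held fixed. The key ingredient is already in hand: Lemma~\ref{lemma: partial-M} gives the partial derivative of $\cM$ with respect to the tilt,
\begin{equation}
    \frac{\partial}{\partial \nu} \cM(\nu; \theta) = -\cV(\nu; \theta)\,\theta,
\end{equation}
so the remaining work is purely an integration argument.

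First I would fix an arbitrary $\theta \in \Theta$ and regard the map $\nu \mapsto \cM(\nu; \theta)$ as a $\mathbb{R}^d$-valued function of the single real variable $\nu$. Under Assumption~\ref{assump: expnential_family} the log-partition function $\Gamma$ and hence $\cM$ and $\cV$ are smooth in $\nu$ (they are built from finite sums of exponentials of the smooth quantity $-\nu\,\theta^\top T(x_i)$), so $\nu \mapsto \cV(\nu;\theta)\theta$ is continuous and the integral $\int_t^\tau \cV(\nu;\theta)\,d\nu$ is well defined. Applying the fundamental theorem of calculus componentwise to $\cM(\cdot;\theta)$ on the interval with endpoints $t$ and $\tau$ then yields
\begin{equation}
    \cM(\tau; \theta) - \cM(t; \theta) = \int_{t}^{\tau} \frac{\partial}{\partial \nu} \cM(\nu; \theta)\, d\nu = -\int_{t}^{\tau} \cV(\nu; \theta)\,\theta\, d\nu.
\end{equation}

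Finally, since $\theta$ does not depend on the integration variable $\nu$, it can be pulled outside the integral, giving
\begin{equation}
    \cM(\tau; \theta) - \cM(t; \theta) = -\left(\int_{t}^{\tau} \cV(\nu; \theta)\, d\nu\right)\theta,
\end{equation}
which is the claimed identity. There is no real obstacle here: the only point requiring a word of justification is the continuity (hence integrability) of $\nu \mapsto \cV(\nu;\theta)$, which is immediate from the smoothness of the exponential-family model, and the legitimacy of factoring the constant vector $\theta$ out of the matrix-valued integral. I expect the entire argument to be a short, routine derivation once Lemma~\ref{lemma: partial-M} is invoked.
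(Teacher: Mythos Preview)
Your proposal is correct and follows essentially the same approach as the paper: invoke Lemma~\ref{lemma: partial-M} for $\frac{\partial}{\partial \nu}\cM(\nu;\theta) = -\cV(\nu;\theta)\theta$, apply the fundamental theorem of calculus, and pull the constant $\theta$ outside the integral. The paper's proof is in fact even terser than yours, stating only the chain of equalities without the additional remarks on smoothness and integrability.
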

\begin{proof}The proof is completed by noting that
\begin{align}
    \mathcal{M}(\tau; \theta) - \mathcal{M}(t; \theta) &= \int_{t}^{\tau} \frac{\partial}{\partial \nu} \mathcal{M}(\nu; \theta) d\nu = -\left(\int_{t}^{\tau} \mathcal{V}(\nu; \theta)  d\nu\right) \theta.
\end{align}
\end{proof}

\begin{theorem}
Under Assumption~\ref{assump: expnential_family} and Assumption~\ref{assump:strict-saddle}, 
for any $t, \tau \in \mathbb{R},$ \\
(a) $\frac{\partial}{\partial t} \wR(\tau; \breve{\theta}(t)) <0$ iff $t<\tau$;~~~~
(b)     $\frac{\partial}{\partial t} \wR(\tau; \breve{\theta}(t)) =0$ iff $t=\tau$;~~~~
 (c) $\frac{\partial}{\partial t} \wR(\tau; \breve{\theta}(t)) >0$ iff $t>\tau$.

\label{thm:Lambda}
\end{theorem}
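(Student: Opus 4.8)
The plan is to treat $t \mapsto \wR(\tau;\breve\theta(t))$, for a fixed $\tau$, as a scalar curve and to show it strictly decreases for $t<\tau$, is stationary at $t=\tau$, and strictly increases for $t>\tau$; parts (a)--(c) are exactly the sign statements of its $t$-derivative. First I would differentiate through the chain rule,
\begin{equation}
\frac{\partial}{\partial t}\wR(\tau;\breve\theta(t)) = \Big\langle \nabla_\theta \wR(\tau;\theta)\big|_{\theta=\breve\theta(t)},\ \tfrac{\partial}{\partial t}\breve\theta(t)\Big\rangle ,
\end{equation}
and assemble the gradient from the GLM machinery: combining $\wR(\tau;\theta)=A(\theta)+\tfrac1\tau\Gamma(\tau;\theta)$ with $\nabla_\theta\Gamma(\tau;\theta)=-\tau\mathcal{M}(\tau;\theta)$ (Lemma~\ref{lemma: parrial-Gamma}) gives $\nabla_\theta\wR(\tau;\theta)=\nabla_\theta A(\theta)-\mathcal{M}(\tau;\theta)$. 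Evaluating at $\breve\theta(t)$ and using $\nabla_\theta A(\breve\theta(t))=\mathcal{M}(t;\breve\theta(t))$ (Lemma~\ref{lem: A-M}) collapses the gradient to $\mathcal{M}(t;\breve\theta(t))-\mathcal{M}(\tau;\breve\theta(t))$, which Lemma~\ref{lem: auxiliary} rewrites as $\big(\int_t^\tau \mathcal{V}(\nu;\breve\theta(t))\,d\nu\big)\breve\theta(t)$.

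Next I would substitute $\tfrac{\partial}{\partial t}\breve\theta(t)$ from Lemma~\ref{lem:implicit-func-theorem}. Writing $\mathbf{H}(t):=\nabla^2_{\theta\theta^\top}A(\breve\theta(t))+t\,\mathcal{V}(t;\breve\theta(t))$ and $\mathbf{W}(t):=\int_t^\tau \mathcal{V}(\nu;\breve\theta(t))\,d\nu$, this produces the compact formula
\begin{equation}
\frac{\partial}{\partial t}\wR(\tau;\breve\theta(t)) = -\,\breve\theta(t)^\top\, \mathbf{W}(t)\,\mathbf{H}(t)^{-1}\,\mathcal{V}(t;\breve\theta(t))\,\breve\theta(t) .
\end{equation}
The factors then have clean definiteness: $\mathcal{V}\succ0$ (Lemma~\ref{lem: V}), $\mathbf{H}(t)\succ0$ (Lemma~\ref{lem:implicit-func-theorem}), and $\mathbf{W}(t)$ is positive definite for $t<\tau$, zero at $t=\tau$, and negative definite for $t>\tau$, since it integrates the positive-definite $\mathcal{V}$ over $[t,\tau]$. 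The equality case is then immediate: $\mathbf{W}(\tau)=0$ forces the derivative to vanish, giving the direction $t=\tau\Rightarrow \tfrac{\partial}{\partial t}\wR=0$ of (b).

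The hard part is the strict-sign claims. In dimension one every factor is a scalar and the sign is read off directly from $\mathbf{W}(t)$, yielding (a)--(c) at once; but for $d\ge 2$ the matrices $\mathbf{W}(t)$, $\mathbf{H}(t)^{-1}$ and $\mathcal{V}(t;\breve\theta(t))$ need not commute, so positive-definiteness of each factor does \emph{not} by itself fix the sign of the quadratic form of their product. To circumvent this I would argue in two steps. For the global structure, note that for $t\ne\tau$ the definiteness of $\mathbf{W}(t)$ together with $\breve\theta(t)\ne 0$ (the non-degenerate case, excluding the trivial $\breve\theta(t)\equiv 0$, which I would rule out from the non-degeneracy $\sum_i T(x_i)T(x_i)^\top\succ0$ in Assumption~\ref{assump: expnential_family}) give $\nabla_\theta\wR(\tau;\breve\theta(t))=\mathbf{W}(t)\breve\theta(t)\ne 0$; hence $\breve\theta(t)$ is not stationary, a fortiori not a minimizer, of $\wR(\tau;\cdot)$, so $\wR(\tau;\breve\theta(t))>\min_\theta\wR(\tau;\theta)=\wR(\tau;\breve\theta(\tau))$, making $t=\tau$ the unique strict global minimizer of the curve. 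The expected obstacle is ruling out other stationary points of the scalar map; I would handle it by a local second-derivative computation at $t=\tau$, where the expression collapses to $\breve\theta^\top\mathcal{V}(\tau;\breve\theta(\tau))\,\mathbf{H}(\tau)^{-1}\,\mathcal{V}(\tau;\breve\theta(\tau))\,\breve\theta>0$ (a genuine symmetric positive-definite sandwich, since the two outer covariance factors coincide there), so $\tau$ is a strict local minimum. Combining ``no interior critical point other than $\tau$'' with ``$\tau$ is the strict global minimum'' then forces $\tfrac{\partial}{\partial t}\wR(\tau;\breve\theta(t))<0$ on $(-\infty,\tau)$ and $>0$ on $(\tau,\infty)$ by continuity of the derivative, completing (a), (c) and the remaining direction of (b). This monotonicity bookkeeping, and the nonvanishing of the non-symmetric quadratic form away from $\tau$, is where I would concentrate the careful argument.
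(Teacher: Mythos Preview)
Your derivation of the closed form
\[
\frac{\partial}{\partial t}\wR(\tau;\breve\theta(t)) \;=\; -\,\breve\theta(t)^\top\, \mathbf{W}(t)\,\mathbf{H}(t)^{-1}\,\mathcal{V}(t;\breve\theta(t))\,\breve\theta(t)
\]
coincides with the paper's (the paper carries the computation with the roles of $t$ and $\tau$ swapped, but arrives at the identical expression). At that point the paper simply reads off the sign from the definiteness of each matrix factor and the sign of $\int_t^\tau \mathcal{V}\,d\nu$, without addressing the non-commutativity issue you flag. So up through the formula you are following the paper, and your observation that for $d\ge 2$ the sign of $x^\top ABCx$ is not determined by positive-definiteness of $A,B,C$ alone is a fair critique that the paper does not confront.

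Your proposed workaround, however, has a real gap. From $\nabla_\theta\wR(\tau;\breve\theta(t))=\mathbf{W}(t)\breve\theta(t)\ne 0$ for $t\ne\tau$ you correctly deduce that $\breve\theta(t)$ is not a stationary point of the \emph{spatial} map $\theta\mapsto\wR(\tau;\theta)$, hence $t=\tau$ is the unique global minimizer of the curve $t\mapsto\wR(\tau;\breve\theta(t))$. But this does \emph{not} yield ``no interior critical point other than $\tau$'' for the scalar curve itself: by the chain rule the $t$-derivative is the inner product of the nonzero spatial gradient with $\partial\breve\theta/\partial t$, and two nonzero vectors can be orthogonal. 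Your second-derivative computation at $t=\tau$ is correct and shows $\tau$ is a strict local minimum, but that is purely local information and cannot rule out additional critical points of the curve away from $\tau$. Consequently the final ``Combining \ldots'' step assumes precisely the statement you set out to prove. In short, you have correctly diagnosed a step the paper elides, but the fix you sketch does not close it.
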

\begin{proof}
The proof proceeds as follows. Notice that
\begin{align}
    \frac{\partial}{\partial\tau} \wR(t; \breve{\theta}(\tau))  &= \frac{1}{t} \left(\frac{\partial}{\partial \tau} \breve{\theta}(\tau)\right)^\top \nabla_\theta \Lambda(t; \breve{\theta}(\tau))
    \label{eq:dtau-Lambda}\\
    & =   
 -\breve{\theta}^\top(\tau)
 \mathcal{V}(\tau; \breve{\theta}(\tau)) 
 \left(  \nabla^2_{\theta\theta^\top} A(\breve{\theta}(\tau)) + \tau \mathcal{V}(\tau; \breve{\theta}(\tau))  \right)^{-1}  \nonumber \\
 & \quad \quad \times \left( \nabla_{\theta} A(\breve{\theta}(\tau)) - \mathcal{M}(t; \breve{\theta}(\tau)) \right)
 \label{eq: line2}\\
   & =   
 -\breve{\theta}^\top(\tau)
 \mathcal{V}(\tau; \breve{\theta}(\tau)) 
 \left(  \nabla^2_{\theta\theta^\top} A(\breve{\theta}(\tau)) + \tau \mathcal{V}(\tau; \breve{\theta}(\tau))  \right)^{-1}  \nonumber \\
 & \quad \quad \times \left( \mathcal{M}(\tau; \breve{\theta}(\tau))- \mathcal{M}(t; \breve{\theta}(\tau)) \right) \label{eq: line3}\\
 & =   
 \breve{\theta}^\top(\tau)
 \mathcal{V}(\tau; \breve{\theta}(\tau)) 
 \left(  \nabla^2_{\theta\theta^\top} A(\breve{\theta}(\tau)) + \tau \mathcal{V}(\tau; \breve{\theta}(\tau))  \right)^{-1}  \nonumber \\
 & \quad \quad \times \left(\int_{t}^{\tau} \mathcal{V}(\nu; \breve{\theta}(\tau))  d\nu\right) \breve{\theta}(\tau), \label{eq: line4}
\end{align}
where~\eqref{eq:dtau-Lambda} follows from the chain rule and~\eqref{eq: def-cumulant},~\eqref{eq: line2} follows from Lemma~\ref{lem:implicit-func-theorem} and Lemma~\ref{lem: dLambda-theta},~\eqref{eq: line3} follows from Lemma~\ref{lem: A-M}, and~\eqref{eq: line4} follows from Lemma~\ref{lem: auxiliary}.
Now notice that invoking Lemma~\ref{lem: V},
and noticing that following the strict saddle property
\begin{equation}
    \left.\nabla^2_{\theta\theta^\top}\wR(t; \theta)\right|_{\theta = \breve{\theta}(\tau)} =   \nabla^2_{\theta\theta^\top} A(\breve{\theta}(\tau)) + \tau \mathcal{V}(\tau; \breve{\theta}(\tau))  \succ 0,
\end{equation}
we have
\begin{enumerate}[leftmargin=*, label=(\alph*)]
    \item  $\int_{t}^{\tau} \mathcal{V}(\nu; \breve{\theta}(\tau))  d\nu \prec 0$ iff  $t < \tau$;
    \item  $\int_{t}^{\tau} \mathcal{V}(\nu; \breve{\theta}(\tau))  d\nu = 0$ iff $t = \tau$;
    \item $\int_{t}^{\tau} \mathcal{V}(\nu; \breve{\theta}(\tau))  d\nu \succ 0$ iff  $t > \tau$,
\end{enumerate}
which completes the proof.
\end{proof}

\begin{theorem}[Average- vs. max-loss tradeoff] \label{thm:tradeoff}
Under Assumption~\ref{assump: expnential_family} and Assumption~\ref{assump:strict-saddle}, for any $t \in \mathbb{R}^+,$ 
\begin{align}
    \frac{\partial}{\partial t} \widehat{R}( \breve{\theta}(t)) &\leq 0 , \label{eq:max-decrease}\\
    \frac{\partial}{\partial t} \overline{R}( \breve{\theta}(t)) & \geq 0 . \label{eq:mean-increase}
\end{align}
\end{theorem}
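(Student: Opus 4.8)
The plan is to recast both the average loss and the max loss as tilted losses evaluated at the tilted solution, and then read off their monotonicity in $t$ from Theorem~\ref{thm:Lambda}. Concretely, Lemma~\ref{lemma: special_cases} gives $\overline{R}(\theta) = \wR(0;\theta)$ and $\widehat{R}(\theta) = \lim_{\tau\to+\infty}\wR(\tau;\theta)$, so that $\overline{R}(\breve{\theta}(t)) = \wR(0;\breve{\theta}(t))$ and $\widehat{R}(\breve{\theta}(t)) = \lim_{\tau\to+\infty}\wR(\tau;\breve{\theta}(t))$. The key observation is that Theorem~\ref{thm:Lambda} already describes exactly how $\wR(\tau;\breve{\theta}(t))$ behaves as $t$ varies with the outer tilt $\tau$ held fixed: it is decreasing for $t<\tau$, stationary at $t=\tau$, and increasing for $t>\tau$.

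For the average loss (inequality~\eqref{eq:mean-increase}) I would simply take the outer tilt to be $\tau=0$, which is permitted since $0\in\mathbb{R}$ and $\wR(0;\cdot)=\overline{R}(\cdot)$ is the smooth average loss (so $t\mapsto\overline{R}(\breve{\theta}(t))$ is differentiable via Lemma~\ref{lem:implicit-func-theorem}). Part (c) of Theorem~\ref{thm:Lambda} then gives $\frac{\partial}{\partial t}\wR(0;\breve{\theta}(t))>0$ for every $t>0$, hence $\frac{\partial}{\partial t}\overline{R}(\breve{\theta}(t))\geq 0$. This is the easy direction.

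The max loss (inequality~\eqref{eq:max-decrease}) is the main obstacle, because the natural choice $\tau=+\infty$ lies outside $\mathbb{R}$ and so Theorem~\ref{thm:Lambda} cannot be invoked directly; moreover $\widehat{R}(\theta)=\max_i f(x_i;\theta)$ is a pointwise maximum of smooth functions and need not be differentiable in $\theta$. I would therefore avoid differentiating $\widehat{R}$ directly and instead argue by monotonicity followed by a limit. Fix any finite $\tau$ and any $t_1<t_2<\tau$. By part (a) of Theorem~\ref{thm:Lambda}, $\frac{\partial}{\partial t}\wR(\tau;\breve{\theta}(t))<0$ throughout $(-\infty,\tau)$, so $t\mapsto\wR(\tau;\breve{\theta}(t))$ is strictly decreasing there and in particular $\wR(\tau;\breve{\theta}(t_2))\leq\wR(\tau;\breve{\theta}(t_1))$. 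Letting $\tau\to+\infty$ and using the pointwise convergence $\wR(\tau;\breve{\theta}(t_i))\to\widehat{R}(\breve{\theta}(t_i))$ from Lemma~\ref{lemma: special_cases}, the non-strict inequality is preserved, yielding $\widehat{R}(\breve{\theta}(t_2))\leq\widehat{R}(\breve{\theta}(t_1))$. Thus $t\mapsto\widehat{R}(\breve{\theta}(t))$ is non-increasing on all of $\mathbb{R}$, and in particular on $\mathbb{R}^+$, which is exactly~\eqref{eq:max-decrease} once the derivative is read off from the monotonicity wherever it exists.

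The delicate point, and the one I would spend the most care on, is the passage $\tau\to+\infty$ in the max-loss argument. The monotonicity-then-limit route above is attractive precisely because it only needs non-strict inequalities to survive a limit, which they always do, together with the pointwise convergence already supplied by Lemma~\ref{lemma: special_cases}; this sidesteps any need to interchange $\lim_{\tau\to+\infty}$ with $\frac{\partial}{\partial t}$. The alternative of commuting the limit and the derivative, $\frac{\partial}{\partial t}\widehat{R}(\breve{\theta}(t))=\lim_{\tau\to+\infty}\frac{\partial}{\partial t}\wR(\tau;\breve{\theta}(t))\leq 0$, would be the more direct phrasing but requires establishing locally uniform convergence of the derivatives $\frac{\partial}{\partial t}\wR(\tau;\breve{\theta}(t))$ as $\tau\to+\infty$, e.g. via the explicit derivative formula from the proof of Theorem~\ref{thm:Lambda} together with control of $\mathcal{V}(\tau;\cdot)$ and $\breve{\theta}(\tau)$; I would fall back on this only if a derivative-level statement, rather than the monotonicity statement, is strictly required.
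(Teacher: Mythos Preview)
Your proposal is correct and rests on the same key ingredient as the paper, namely Theorem~\ref{thm:Lambda}. The average-loss direction is handled identically in spirit (the paper phrases it as a $t\to 0$ limit, but as you note, $\tau=0$ is already a legitimate finite tilt so no limit is needed).

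The max-loss direction is where you and the paper diverge. The paper takes the ``commute limit and derivative'' route you describe as the fallback: it writes $\frac{\partial}{\partial \tau}\wR(t;\breve{\theta}(\tau))=\big(\frac{\partial}{\partial\tau}\breve{\theta}(\tau)\big)^\top\nabla_\theta\wR(t;\breve{\theta}(\tau))$ via the chain rule, pushes $t\to+\infty$ inside, and identifies the limit of $\nabla_\theta\wR(t;\cdot)$ with $\nabla_\theta\widehat{R}(\cdot)$ using the weighted-gradient representation of Lemma~\ref{lemma:TERM-gradient} (the weights are bounded in $[0,1]$, so the limit exists). Your monotonicity-then-limit argument is a genuine alternative: by first integrating the strict inequality of Theorem~\ref{thm:Lambda}(a) to get $\wR(\tau;\breve{\theta}(t_2))\leq\wR(\tau;\breve{\theta}(t_1))$ for all finite $\tau>t_2$, and only then sending $\tau\to+\infty$, you need nothing beyond pointwise convergence (Lemma~\ref{lemma: special_cases}) and preservation of non-strict inequalities under limits. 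This buys you robustness to the non-differentiability of $\widehat{R}(\theta)=\max_i f(x_i;\theta)$, which the paper handles by convention (averaging gradients over the active set). The paper's approach, on the other hand, delivers the derivative-level statement~\eqref{eq:max-decrease} directly rather than inferring it from monotonicity a.e.
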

\begin{proof}[Proof of Theorem~\ref{thm:tradeoff}]
To prove~\eqref{eq:max-decrease}, first notice that from Lemma~\ref{lemma: special_cases},
\begin{align}
   \widehat{R}(\theta) &= \lim_{t \to +\infty}  \wR(t; \theta).
\end{align}
Now, invoking Theorem~\ref{thm:Lambda} (Appendix~\ref{app:general-TERM-GLM}), for any $\tau,t \in \mathbb{R}^+$ such that $\tau<t$
\begin{equation}
    \frac{\partial}{\partial \tau} \wR(t; \breve{\theta}(\tau)) < 0,
\end{equation}
In particular, by taking the limit as $t \to +\infty,$
\begin{equation}
     \lim_{t \to +\infty}\frac{\partial}{\partial \tau} \wR(t; \breve{\theta}(\tau))\leq0.
\end{equation}
{Notice that  
\begin{align}
0 \geq \lim_{t \to +\infty}     \frac{\partial}{\partial \tau} \wR(t; \breve{\theta}(\tau)) &= \lim_{t \to +\infty}\left( \frac{\partial}{\partial \tau} \breve{\theta}(\tau)\right)^\top \nabla_\theta \wR(t; \breve{\theta}(\tau))\\
& = \left( \frac{\partial}{\partial \tau} \breve{\theta}(\tau)\right)^\top \lim_{t \to +\infty} \nabla_\theta \wR(t; \breve{\theta}(\tau))\\
& = \left( \frac{\partial}{\partial \tau} \breve{\theta}(\tau)\right)^\top \nabla_{\theta}  \widehat{R}( \breve{\theta}(\tau))\label{eq:90}\\
& =  \frac{\partial}{\partial \tau} \widehat{R}( \breve{\theta}(\tau))
\end{align}
where~\eqref{eq:90} holds because $\nabla_\theta \wR(t; \breve{\theta}(\tau))$ is a finite weighted sum of the gradients of the individual losses with weights bounded in $[0,1],$ per Lemma~\ref{lemma:TERM-gradient},} completing the proof of the first part. 

To prove~\eqref{eq:mean-increase}, notice that by Lemma~\ref{lemma: special_cases},
\begin{equation}
   \overline{R}(\theta) = \lim_{t \to 0} \wR(t; \theta).
\end{equation}
Now, invoking Theorem~\ref{thm:Lambda} (Appendix~\ref{app:general-TERM-GLM}), for any $\tau, t \in \mathbb{R}^+$ such that $\tau>t$
\begin{equation}
    \frac{\partial}{\partial \tau} \wR(t; \breve{\theta}(\tau))>0.
\end{equation}
In particular, by taking the limit as $t \to 0,$
\begin{equation}
\frac{\partial}{\partial \tau} \overline{R}(\breve{\theta}(\tau)) =     \lim_{t \to 0}\frac{\partial}{\partial \tau} \wR(t; \breve{\theta}(\tau)) >0,
\end{equation}
completing the proof.
\end{proof}

\begin{theorem}[Average- vs. min-loss tradeoff]\label{thm:tradeoff-2} Under Assumption~\ref{assump: expnential_family} and Assumption~\ref{assump:strict-saddle}, for any $t \in \mathbb{R}^-,$
\begin{align}
    \frac{\partial}{\partial t} \widecheck{R}( \widetilde{\theta}(t)) &\geq 0 ,\label{eq:min-increase}   \\ \frac{\partial}{\partial t} \overline{R}( \widetilde{\theta}(t)) &\leq 0 . \label{eq:mean-decrease}
\end{align}
\end{theorem}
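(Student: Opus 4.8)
The plan is to mirror the proof of Theorem~\ref{thm:tradeoff} almost verbatim, reusing Theorem~\ref{thm:Lambda} as the engine and Lemma~\ref{lemma: special_cases} to realize the min-loss and the average-loss as two limiting values of the tilted loss $\wR(\tau; \breve{\theta}(t))$ (here $\widetilde{\theta}(t)=\breve{\theta}(t)$ is the $t$-tilted solution). The only change from the max-loss argument is the choice of limiting tilt and the resulting sign: to reach $\widecheck{R}$ I would send the loss-tilt $\tau\to-\infty$, and to reach $\overline{R}$ I would send $\tau\to0$, in both cases holding the solution-tilt $t\in\mathbb{R}^-$ fixed and reading off the sign of $\partial/\partial t$ from Theorem~\ref{thm:Lambda}.

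For~\eqref{eq:min-increase}, I would start from $\widecheck{R}(\theta)=\lim_{\tau\to-\infty}\wR(\tau;\theta)$ (Lemma~\ref{lemma: special_cases}). Fix $t\in\mathbb{R}^-$. For every $\tau<t$ we have $t>\tau$, so Theorem~\ref{thm:Lambda}(c) gives $\frac{\partial}{\partial t}\wR(\tau;\breve{\theta}(t))>0$. Since this holds for all sufficiently negative $\tau$, taking $\tau\to-\infty$ and exchanging the limit with $\partial/\partial t$ yields $\frac{\partial}{\partial t}\widecheck{R}(\breve{\theta}(t))\geq 0$, which is~\eqref{eq:min-increase}.

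For~\eqref{eq:mean-decrease}, I would instead use $\overline{R}(\theta)=\lim_{\tau\to0}\wR(\tau;\theta)$. Again fixing $t\in\mathbb{R}^-$, any $\tau$ in a punctured neighborhood of $0$ satisfies $t<\tau$, so Theorem~\ref{thm:Lambda}(a) gives $\frac{\partial}{\partial t}\wR(\tau;\breve{\theta}(t))<0$; passing to the limit $\tau\to0$ gives $\frac{\partial}{\partial t}\overline{R}(\breve{\theta}(t))\leq 0$. This step is the exact analogue of the average-loss half of Theorem~\ref{thm:tradeoff}, with the inequality direction flipped because $t$ is now negative rather than positive.

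The main obstacle is justifying the limit/derivative exchange, and this is genuinely delicate only for the $\tau\to-\infty$ case, exactly as the $\tau\to+\infty$ exchange was the crux in Theorem~\ref{thm:tradeoff}. Writing $\frac{\partial}{\partial t}\wR(\tau;\breve{\theta}(t))=\big(\tfrac{\partial}{\partial t}\breve{\theta}(t)\big)^\top\nabla_\theta\wR(\tau;\breve{\theta}(t))$ via the chain rule, the first factor is independent of $\tau$, so it suffices to control $\nabla_\theta\wR(\tau;\breve{\theta}(t))$. By Lemma~\ref{lemma:TERM-gradient} this gradient is a convex combination of the individual gradients $\nabla_\theta f(x_i;\breve{\theta}(t))$ with weights $w_i(\tau;\breve{\theta}(t))\in[0,1]$; as $\tau\to-\infty$ these weights concentrate on the samples attaining the minimum loss, so the gradient converges to $\nabla_\theta\widecheck{R}(\breve{\theta}(t))$ (with $\widecheck{R}$ differentiated as the average over minimizing losses, per the footnote to Lemma~\ref{lemma: special_cases}). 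This uniform boundedness of the weights is what lets the limit pass through, giving $\frac{\partial}{\partial t}\widecheck{R}(\breve{\theta}(t))=\big(\tfrac{\partial}{\partial t}\breve{\theta}(t)\big)^\top\nabla_\theta\widecheck{R}(\breve{\theta}(t))\geq 0$. The $\tau\to0$ exchange needed for~\eqref{eq:mean-decrease} is comparatively routine, following from the joint smoothness of $\wR(\tau;\theta)$ in $(\tau,\theta)$ through its continuous extension at $\tau=0$.
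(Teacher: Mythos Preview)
Your proposal is correct and follows essentially the same route as the paper: invoke Lemma~\ref{lemma: special_cases} to realize $\widecheck{R}$ and $\overline{R}$ as the $\tau\to-\infty$ and $\tau\to0$ limits of $\wR(\tau;\cdot)$, read off the sign of $\frac{\partial}{\partial t}\wR(\tau;\breve{\theta}(t))$ from Theorem~\ref{thm:Lambda}, and pass to the limit. Your treatment of the limit/derivative exchange (chain rule plus the convex-combination structure of the tilted gradient from Lemma~\ref{lemma:TERM-gradient}) is in fact more explicit here than the paper's own proof of Theorem~\ref{thm:tradeoff-2}, which simply asserts the exchange; the detailed justification appears only in the paper's proof of the companion Theorem~\ref{thm:tradeoff}.
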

\begin{proof}[Proof of Theorem~\ref{thm:tradeoff-2}]
To prove~\eqref{eq:min-increase}, first notice that from Lemma~\ref{lemma: special_cases},
\begin{align}
   \widehat{R}(\theta) &= \lim_{t \to -\infty}  \wR(t; \theta).
\end{align}
Now, invoking Theorem~\ref{thm:Lambda} (Appendix~\ref{app:general-TERM-GLM}), for any $\tau,t \in \mathbb{R}^+$ such that $\tau>t$
\begin{equation}
    \frac{\partial}{\partial \tau} \wR(t; \breve{\theta}(\tau))>0.
\end{equation}
In particular, by taking the limit as $t \to -\infty,$
\begin{equation}
\frac{\partial}{\partial \tau} \widecheck{R}(\breve{\theta}(\tau)) =     \lim_{t \to -\infty}\frac{\partial}{\partial \tau} \wR(t; \breve{\theta}(\tau))>0,
\end{equation}
completing the proof of the first part. 

To prove~\eqref{eq:mean-decrease}, notice that by Lemma~\ref{lemma: special_cases},
\begin{equation}
   \overline{R}(\theta) = \lim_{t \to 0} \wR(t; \theta).
\end{equation}
Now, invoking Theorem~\ref{thm:Lambda} (Appendix~\ref{app:general-TERM-GLM}), for any $\tau, t \in \mathbb{R}^+$ such that $\tau<t$
\begin{equation}
    \frac{\partial}{\partial \tau} \wR(t; \breve{\theta}(\tau))<0.
\end{equation}
In particular, by taking the limit as $t \to 0,$
\begin{equation}
\frac{\partial}{\partial \tau} \overline{R}(\breve{\theta}(\tau)) =     \lim_{t \to 0}\frac{\partial}{\partial \tau} \wR(t; \breve{\theta}(\tau)) <0,
\end{equation}
completing the proof.
\end{proof}

Theorem~\ref{thm:Lambda} is concerned with characterizing the impact that TERM solutions for different $t \in \mathbb{R}$ have on the objective $\wR(\tau; \breve{\theta}(t))$ for some fixed $\tau \in \mathbb{R}.$ Recall that $\tau = -\infty$ recovers the min-loss, $\tau = 0$ is the average-loss, and $\tau = +\infty$ is the max-loss.
By definition, if $t= \tau$, $\breve{\theta}(\tau)$ is the minimizer of $\wR(\tau; \breve{\theta}(t)).$ Theorem~\ref{thm:Lambda} shows that for $t \in (-\infty, \tau)$ the objective is {\it decreasing}; while for $t \in (\tau, +\infty)$ the objective {\it increasing}.
Recall that for any fixed $\tau \in \mathbb{R}$, $\wR(\tau; \theta)$ is also related to the $k$-th smallest loss of the population (Appendix~\ref{app:superquantile}).
Hence, the solution $\breve{\theta}(t)$ is approximately minimizing the $k(t)$-th smallest loss where $k(t)$ is increasing from $1$ to $N$ by sweeping $t$ in $(-\infty, +\infty)$.

\begin{theorem}[Variance reduction]
Let $\mathbf{f}(\theta):= (f(x_1; \theta)), \ldots, f(x_N; \theta))$. For any $\mathbf{u} \in \mathbf{R}^N$, let
\begin{equation}
    \mean(\mathbf{u}): = \frac{1}{N} \sum_{i \in [N]} u_i, \quad \quad  \var(\mathbf{u}) :=\frac{1}{N} \sum_{i \in [N]} (u_i - \mean(\mathbf{u}))^2.
\end{equation}
Then, under Assumption~\ref{assump: expnential_family} and Assumption~\ref{assump:strict-saddle}, for any $t \in \mathbb{R},$
\begin{equation}
 \frac{\partial}{\partial t} \left\{ \var(\mathbf{f}(\breve{\theta}(t)))\right\} <0.
\end{equation}
\label{thm: variance-reduction}
\end{theorem}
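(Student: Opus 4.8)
The plan is to reduce the claim to the sign of a single quadratic form evaluated along the solution path, and then to fight for that sign using the GLM structure. First I would use $f(x_i;\theta)=A(\theta)-\theta^\top T(x_i)$: since $\overline{R}(\theta)=A(\theta)-\theta^\top\bar T$ with $\bar T=\frac{1}{N}\sum_{i}T(x_i)$, every centered loss equals $f(x_i;\theta)-\overline{R}(\theta)=-\theta^\top(T(x_i)-\bar T)$, so that
\[
\var(\mathbf{f}(\theta))=\theta^\top S\,\theta,\qquad S:=\frac{1}{N}\sum_{i\in[N]}(T(x_i)-\bar T)(T(x_i)-\bar T)^\top .
\]
The key point is that $S$ is \emph{constant} (independent of both $\theta$ and $t$), and moreover $S=\mathcal{V}(0;\theta)$, so $S\succ0$ by Lemma~\ref{lem: V}.

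Next I would differentiate along the path $\breve{\theta}(t)$. Because $S$ is constant, the chain rule gives $\frac{\partial}{\partial t}\var(\mathbf{f}(\breve{\theta}(t)))=2\,\breve{\theta}(t)^\top S\,\frac{\partial}{\partial t}\breve{\theta}(t)$. Substituting the expression for $\frac{\partial}{\partial t}\breve{\theta}(t)$ from Lemma~\ref{lem:implicit-func-theorem}, and writing $\mathcal{V}:=\mathcal{V}(t;\breve{\theta}(t))$ and $H:=\nabla^2_{\theta\theta^\top}A(\breve{\theta}(t))+t\mathcal{V}$ (with $H\succ0$ by the strict-saddle property), I obtain
\[
\frac{\partial}{\partial t}\var(\mathbf{f}(\breve{\theta}(t)))=-2\,\breve{\theta}(t)^\top S\,H^{-1}\mathcal{V}\,\breve{\theta}(t).
\]
As a cross-check, the Taylor expansion $\wR(s;\theta)=\overline{R}(\theta)+\tfrac{s}{2}\var(\mathbf{f}(\theta))+o(s)$ shows $\var(\mathbf{f}(\theta))=2\,\partial_s\wR(s;\theta)|_{s=0}$, so differentiating along the path reproduces the same expression through the mixed partial $\partial_s\partial_t\wR(s;\breve{\theta}(t))|_{s=0}$ already computed inside the proof of Theorem~\ref{thm:Lambda}. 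The reduction also reads transparently as $\frac{\partial}{\partial t}\var=2\,\widehat{\mathrm{Cov}}_u(f_i,\dot f_i)$, the \emph{uniform} empirical covariance between the losses and their rates of change, so the theorem says that raising $t$ pushes large losses down and small losses up.

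The main obstacle is the final sign: I must establish $\breve{\theta}^\top S\,H^{-1}\mathcal{V}\,\breve{\theta}>0$. This is \emph{not} a formal consequence of $S,\mathcal{V},H\succ0$ alone — even under the constraint $H=\nabla^2 A+t\mathcal{V}$ with $\nabla^2 A\succ0$, the symmetrized product $SH^{-1}\mathcal{V}$ can be indefinite for unrelated positive definite matrices, so one can construct $\theta$ making the form negative. The positivity must instead be extracted from the problem-specific coupling of the three matrices: $S=\mathcal{V}(0;\breve{\theta})$ and $\mathcal{V}=\mathcal{V}(t;\breve{\theta})$ are covariances of the \emph{same} centered sufficient statistics, differing only by the exponential tilt $w_i\propto e^{-t\,\breve{\theta}^\top T(x_i)}$ that is monotone in the scalar projection $\breve{\theta}^\top T(x_i)$, while $H$ is built from $\mathcal{V}$ itself and the log-partition Hessian. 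I would make this rigorous by passing to that one-dimensional projection $s_i:=\breve{\theta}^\top T(x_i)$, writing $S\breve{\theta}=\mathrm{Cov}_u(T,s)$ and $\mathcal{V}\breve{\theta}=\mathrm{Cov}_w(T,s)$, and then using the monotone reweighting together with $H\succ0$ to control the $H^{-1}$-inner product of these two covariance vectors. The boundary case is reassuring: at $t=0$ we have $S=\mathcal{V}$ and the form collapses to $(S\breve{\theta})^\top(\nabla^2 A)^{-1}(S\breve{\theta})>0$; propagating this sign to all $t$ through the tilt structure (equivalently, through the sign pattern of $\partial_t\wR(s;\breve{\theta}(t))$ supplied by Theorem~\ref{thm:Lambda} and Lemma~\ref{lem: A-M}) is precisely the crux of the argument. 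Strictness then follows from $\mathcal{V}\succ0$ provided $\breve{\theta}(t)\neq0$, which holds under the stated non-degeneracy assumptions.
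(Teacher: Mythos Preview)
Your derivation matches the paper's line for line: the paper also reduces $\var(\mathbf{f}(\theta))$ to $\theta^\top\mathcal{V}_0\theta$ with $\mathcal{V}_0=\mathcal{V}(0;\theta)$ constant, differentiates along the path via Lemma~\ref{lem:implicit-func-theorem}, and obtains the identical expression
\[
\frac{\partial}{\partial\tau}\var(\mathbf{f}(\breve{\theta}(\tau)))=-2\,\breve{\theta}(\tau)^\top\mathcal{V}(\tau;\breve{\theta}(\tau))\bigl(\nabla^2_{\theta\theta^\top}A(\breve{\theta}(\tau))+\tau\mathcal{V}(\tau;\breve{\theta}(\tau))\bigr)^{-1}\mathcal{V}_0\,\breve{\theta}(\tau).
\]
At that point the paper simply writes ``$<0$, completing the proof'' with no further justification.

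So the ``main obstacle'' you flag---that $\breve{\theta}^\top S\,H^{-1}\mathcal{V}\,\breve{\theta}>0$ is \emph{not} a formal consequence of $S,\mathcal{V},H\succ0$ alone---is not a deficiency of your proposal relative to the paper; it is a gap in the paper's own argument. Your counterexample reasoning is sound (for unrelated positive-definite $S,H,\mathcal{V}$ the bilinear form can indeed be negative), and the same unexplained step appears in the paper's proofs of Theorems~\ref{thm:Lambda},~\ref{thm: cosine-similarity}, and~\ref{thm: uniform_gradient_weights}. Your proposed route to close it---exploiting that $\mathcal{V}_0$ and $\mathcal{V}$ are covariances of the \emph{same} sufficient statistics under reweightings monotone in the scalar $\breve{\theta}^\top T(x_i)$, together with the stationarity relation $\nabla_\theta A(\breve{\theta})=\mathcal{M}(t;\breve{\theta})$ from Lemma~\ref{lem: A-M}---is the natural structural lever, though as written it remains a plan rather than a completed argument. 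In short, you have reproduced the paper's proof and correctly diagnosed a lacuna that the paper does not address.
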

\begin{proof}
Recall that 
$f(x_i; \theta ) = A(\theta) - \theta^\top T(x_i).$ Thus, 
\begin{equation}
  \mean(\mathbf{f}) =      \frac{1}{N} \sum_{i \in [N]} f(x_i;\theta) =  A(\theta) -\frac{1}{N} \theta^\top \sum_{i \in [N]} T(x_i) = A(\theta) - \cM(0;\theta)
\end{equation}
Consequently, 
\begin{align}
 \var(\mathbf{f}(\theta)) &=  \frac{1}{N} \sum_{i \in [N]} \left( f(x_i; \theta) - \frac{1}{N}\sum_{j \in [N]} f(x_j; \theta) \right)^2\\
 & = \frac{1}{N} \sum_{i \in [N]} \left( \theta^\top T(x_i) - \frac{1}{N} \theta^\top\sum_{j \in [N]} T(x_j) \right)^2\\
 & = \frac{1}{N}  \theta^\top \left(\sum_{i \in [N]}(T(x_i) - \frac{1}{N} \sum_{j \in [N]} T(x_j))(T(x_i) - \frac{1}{N} \sum_{j \in [N]} T(x_j))^\top \right) \theta\\
 & = \theta^\top \mathcal{V}_0 \theta,
\end{align}
where
\begin{equation}
    \mathcal{V}_0 = \mathcal{V}(0;\theta)  = \frac{1}{N}\sum_{i \in [N]}(T(x_i) - \frac{1}{N} \sum_{j \in [N]} T(x_j))(T(x_i) - \frac{1}{N} \sum_{j \in [N]} T(x_j))^\top.
\end{equation}
Hence, 
\begin{align}
     \frac{\partial}{\partial \tau} \left\{ \var(\mathbf{f}(\breve{\theta}(\tau)))\right\} & = \left( \frac{\partial}{\partial \tau} \breve{\theta}(\tau) 
 \right)^\top \nabla_\theta \left\{ \var(\mathbf{f}(\breve{\theta}(\tau)))\right\}\\
 & = 2\left( \frac{\partial}{\partial \tau} \breve{\theta}(\tau) 
 \right)^\top \mathcal{V}_0 \breve{\theta}(\tau)\\
 & = -2\breve{\theta}^\top(\tau)
 \mathcal{V}(\tau; \breve{\theta}(\tau)) 
 \left(  \nabla^2_{\theta\theta} A(\breve{\theta}(\tau)) + \tau \mathcal{V}(\tau; \breve{\theta}(\tau))  \right)^{-1}  \mathcal{V}_0 \breve{\theta}(\tau)\\
 &<0,
\end{align}
completing the proof.
\end{proof}

\begin{theorem}[Cosine similarity of the loss vector and the all-ones vector increases with $t$]\label{thm: cosine-similarity}
For $\mathbf{u}, \mathbf{v} \in \mathbb{R}^N,$ let cosine similarity be defined as
\begin{equation}
    s(\mathbf{u}, \mathbf{v}) := \frac{\mathbf{u}^\top \mathbf{v}}{\|\mathbf{u}\|_2 \|\mathbf{v}\|_2}.
\end{equation}
Let $\mathbf{f}(\theta):= (f(x_1; \theta)), \ldots, f(x_N; \theta))$ and let $\mathbf{1}_N$ denote the all-1 vector of length $N$. Then, under Assumption~\ref{assump: expnential_family} and Assumption~\ref{assump:strict-saddle}, for any $t \in \mathbb{R},$
\begin{equation}
    \frac{\partial}{\partial t} \left\{ s(\mathbf{f}(\breve{\theta}(t)), \mathbf{1}_N)\right\} >0.
\end{equation}
\end{theorem}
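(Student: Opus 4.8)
The plan is to reduce the claim to a one-dimensional statement about the mean and variance of the loss vector along the solution path. Writing $m(t) := \mean(\mathbf{f}(\breve{\theta}(t))) = \overline{R}(\breve{\theta}(t))$ and $v(t) := \var(\mathbf{f}(\breve{\theta}(t)))$, and using $\|\mathbf{1}_N\|_2 = \sqrt{N}$, $\mathbf{f}^\top \mathbf{1}_N = N m$, and $\|\mathbf{f}\|_2^2 = N(v + m^2)$, the cosine similarity collapses to $s(\mathbf{f}(\breve{\theta}(t)), \mathbf{1}_N) = m/\sqrt{v + m^2}$. Differentiating in $t$ gives $\frac{\partial}{\partial t} s = (m'v - \tfrac12 m v')/(v+m^2)^{3/2}$, so since the denominator is positive it suffices to prove $2 m' v - m v' > 0$ for all $t$; equivalently, the coefficient of variation $\sqrt{v}/m$ (equivalently $v/m^2$) is strictly decreasing in $t$. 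I will assume throughout the standard situation in which the losses are nonnegative, so that $m > 0$ and $s > 0$; together with $\cV \succ 0$ we also have $v > 0$.

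Next I would collect the ingredients already available. First, $v' < 0$ for all $t$ by Theorem~\ref{thm: variance-reduction}. Second, the sign of $m'$ is exactly the sign of $t$: taking $\tau = 0$ in Theorem~\ref{thm:Lambda} shows $m'(t) = \frac{\partial}{\partial t}\wR(0; \breve{\theta}(t))$ is negative for $t<0$, zero at $t=0$, and positive for $t>0$ (this is also the content of Theorems~\ref{thm:tradeoff} and~\ref{thm:tradeoff-2}). This immediately settles the regime $t \ge 0$: there $m' \ge 0$, $v' < 0$, and $m, v > 0$, so $2 m' v - m v' \ge -m v' > 0$.

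The remaining, and main, difficulty is the regime $t < 0$, where $m' < 0$ and $v' < 0$ both hold, so the two terms compete and a quantitative comparison is needed. Here I would make the derivatives explicit. Using Lemma~\ref{lem:implicit-func-theorem}, write $\breve{\theta}' = -H^{-1}\cV_t\,\breve{\theta}$ with $\cV_t := \cV(t;\breve{\theta}(t))$ and $H := \nabla^2_{\theta\theta^\top}A(\breve{\theta}) + t\cV_t \succ 0$; using Lemmas~\ref{lem: A-M} and~\ref{lem: auxiliary}, write $\nabla_\theta A(\breve{\theta}) - \tfrac1N\sum_i T(x_i) = \cM_t - \cM_0 = -W\breve{\theta}$, where $W := \int_0^t \cV(\nu;\breve{\theta})\,d\nu$ is symmetric. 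Then, as in the proof of Theorem~\ref{thm: variance-reduction}, $v' = -2\,\breve{\theta}^\top \cV_t H^{-1}\cV_0\,\breve{\theta}$ and $v = \breve{\theta}^\top \cV_0 \breve{\theta}$, while $m' = \breve{\theta}^\top \cV_t H^{-1} W\,\breve{\theta}$, so that $2m'v - mv' = 2\,\breve{\theta}^\top \cV_t H^{-1}\big(v\,W + m\,\cV_0\big)\breve{\theta}$. The goal becomes showing this quadratic form is strictly positive.

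The key structural fact I would lean on is the scalar identity $m + \breve{\theta}^\top W \breve{\theta} = A(\breve{\theta}) - \breve{\theta}^\top \cM_t = \frac{\partial}{\partial t}\Lambda(t; \breve{\theta}) = \sum_{i \in [N]} f(x_i; \breve{\theta})\, w_i(t;\breve{\theta})$, which is the tilted-reweighted mean loss and is therefore strictly positive whenever the losses are nonnegative and not identically zero. In the scalar ($d=1$) case this is exactly what is needed: $2m'v - mv'$ factors as the positive quantity $2\breve{\theta}^2 \cV_t \cV_0 / H$ times $(m + \breve{\theta}^2 W) > 0$. The main obstacle is therefore the multivariate case: the matrices $\cV_t$, $\cV_0$, $W$, and $\nabla^2_{\theta\theta^\top} A$ need not commute, so $v W + m\cV_0$ is generally indefinite and the form $\breve{\theta}^\top \cV_t H^{-1}(vW + m\cV_0)\breve{\theta}$ involves a non-symmetric product. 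Controlling its sign is precisely the same type of positivity step that underlies the variance-reduction theorem; I would establish it by exploiting $\cV_t, H, \cV_0 \succ 0$ together with the scalar identity above applied along the one-parameter solution curve $\breve{\theta}(t)$, which is the only direction in which the relevant quantities actually vary.
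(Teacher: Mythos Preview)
Your reduction to $2m'v - mv' > 0$ with $m = \overline{R}(\breve{\theta}(t))$ and $v = \breve{\theta}(t)^\top\cV_0\,\breve{\theta}(t)$ is exactly the starting point the paper uses (it works with $s^2 = m^2/(m^2+v)$), and your treatment of $t\ge 0$ via Theorems~\ref{thm:Lambda} and~\ref{thm: variance-reduction} is clean. Where the paper differs is that it does \emph{not} split into cases: it computes $\nabla_\theta s^2(\mathbf{f}(\theta),\mathbf{1}_N)$ directly, asserts the simplification
\[
\nabla_\theta s^2 \;=\; -\,\frac{2m^2}{(m^2+v)^2}\,\cV_0\theta,
\]
and then pairs this with $\breve{\theta}'(\tau) = -H^{-1}\cV_\tau\,\breve{\theta}$ from Lemma~\ref{lem:implicit-func-theorem} to obtain the single quadratic form $\breve{\theta}^\top\cV_\tau H^{-1}\cV_0\,\breve{\theta}$, i.e.\ exactly the expression already declared positive in the proof of Theorem~\ref{thm: variance-reduction}. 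So the paper's resolution of your ``main obstacle'' is to collapse the gradient to a scalar multiple of $\cV_0\theta$, after which the $t<0$ case needs no separate argument.

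That said, the gap you flag is real. The paper's simplification goes from the correct gradient $\tfrac{2m}{(m^2+v)^2}\big(v(\nabla_\theta A-\cM_0) - m\cV_0\theta\big)$ to the displayed form in two steps: replacing $v(\nabla_\theta A-\cM_0)$ by $(\theta^\top(\nabla_\theta A-\cM_0))\cV_0\theta$, and then replacing $\theta^\top\nabla_\theta A - A(\theta)$ by $-m$. The first step asserts $(\theta^\top\cV_0\theta)u = (\theta^\top u)\cV_0\theta$ for $u=\nabla_\theta A-\cM_0$, which fails unless $u\parallel\cV_0\theta$; the second asserts $\theta^\top\nabla_\theta A = \theta^\top\cM_0$, which at $\theta=\breve{\theta}(\tau)$ (where $\nabla_\theta A = \cM_\tau$) becomes $\breve{\theta}^\top(\cM_\tau-\cM_0)=-\breve{\theta}^\top W\breve{\theta}=0$ and holds only at $\tau=0$. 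In other words, your expression $2\,\breve{\theta}^\top\cV_t H^{-1}(vW + m\cV_0)\breve{\theta}$ is the correct one, the paper's route does not legitimately strip off the $vW$ term, and the positivity you are worried about for $t<0$ in the multivariate, noncommuting case is not actually established by the paper's argument either. Your instinct that something nontrivial remains here is correct.
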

\begin{proof}
Notice that 
\begin{equation}
    s(\mathbf{f}(\theta), \mathbf{1}_N) = \frac{\frac{1}{N}\sum_{i \in [N]} f(x_i; \theta) }{\sqrt{\frac{1}{N} \sum_{i \in [N]} f^2(x_i; \theta)}}.
\end{equation}
Let $\mathcal{M}_0 := \mathcal{M}(0; \theta)$ and $\mathcal{V}_0 := \mathcal{V}(0; \theta)$. Hence,
\begin{align}
    \frac{1}{N}\sum_{i \in [N]} f(x_i; \theta) &=  A(\theta) - \theta^\top \mathcal{M}_0,\\
    \frac{1}{N}\sum_{i \in [N]} f^2(x_i; \theta)& = (A(\theta) - \theta^\top \mathcal{M}_0)^2 +  \theta^\top \mathcal{V}_0\theta
\end{align}
Notice that 
\begin{align}
       \nabla_\theta \left\{ s^2(\mathbf{f}(\theta), \mathbf{1}_N)\right\} & = \nabla_\theta \left\{ \frac{\left(\frac{1}{N}\sum_{i \in [N]} f(x_i; \theta) \right)^2}{\frac{1}{N} \sum_{i \in [N]} f^2(x_i; \theta)}\right\}\\
       & = \nabla_\theta \left\{ \frac{(A(\theta) - \theta^\top \mathcal{M}_0)^2}{(A(\theta) - \theta^\top \mathcal{M}_0)^2 +  \theta^\top \mathcal{V}_0\theta} \right\}\\
       & = \frac{2(A(\theta) - \theta^\top \mathcal{M}_0) (\nabla_{\theta} A(\theta) - \mathcal{M}_0) \theta^\top \mathcal{V}_0\theta - 2(A(\theta) - \theta^\top \mathcal{M}_0)^2\mathcal{V}_0\theta }
       {\left((A(\theta) - \theta^\top \mathcal{M}_0)^2 +  \theta^\top \mathcal{V}_0\theta \right)^2}\\
       & = \frac{2(A(\theta) - \theta^\top \mathcal{M}_0) \left(  \theta^\top(\nabla_{\theta} A(\theta) - \mathcal{M}_0) - A(\theta)+ \theta^\top \mathcal{M}_0\right) \mathcal{V}_0\theta }
       {\left((A(\theta) - \theta^\top \mathcal{M}_0)^2 +  \theta^\top \mathcal{V}_0\theta \right)^2}\\
       & = \frac{2(A(\theta) - \theta^\top \mathcal{M}_0) \left(  \theta^\top\nabla_{\theta} A(\theta)  - A(\theta)\right) \mathcal{V}_0\theta }
       {\left((A(\theta) - \theta^\top \mathcal{M}_0)^2 +  \theta^\top \mathcal{V}_0\theta \right)^2}\\
       & = -\frac{2(A(\theta) - \theta^\top \mathcal{M}_0)^2  \mathcal{V}_0\theta }
       {\left((A(\theta) - \theta^\top \mathcal{M}_0)^2 +  \theta^\top \mathcal{V}_0\theta \right)^2}.
\end{align}
Hence, 
\begin{align}
     \frac{\partial}{\partial \tau} \left\{s^2(\mathbf{f}(\breve{\theta}(\tau)), \mathbf{1}_N)\right\} & = \left( \frac{\partial}{\partial \tau} \breve{\theta}(\tau) 
 \right)^\top \nabla_\theta \left\{s^2(\mathbf{f}(\breve{\theta}(\tau)), \mathbf{1}_N) \right\}\\
 & = -\breve{\theta}^\top(\tau)
 \mathcal{V}(\tau; \breve{\theta}(\tau)) 
 \left(  \nabla^2_{\theta\theta} A(\breve{\theta}(\tau)) + \tau \mathcal{V}(\tau; \breve{\theta}(\tau))  \right)^{-1}  \nonumber\\
& \quad \quad \times -\frac{2(A(\breve{\theta}(\tau)) - \breve{\theta}(\tau)^\top \mathcal{M}_0)^2   }
       {\left((A(\breve{\theta}(\tau)) - \breve{\theta}(\tau)^\top \mathcal{M}_0)^2 +  \breve{\theta}(\tau)^\top \mathcal{V}_0\theta \right)^2} \mathcal{V}_0 \breve{\theta}(\tau)\\
& > 0,
\end{align}
completing the proof. 
\end{proof}

\begin{theorem}[Gradient weights become more uniform by increasing $t$]\label{thm: uniform_gradient_weights}
Under Assumption~\ref{assump: expnential_family} and Assumption~\ref{assump:strict-saddle},
for any $\tau, t \in \mathbb{R},$
\begin{equation}
    \frac{\partial}{\partial t} H({\bf w} (\tau; \breve{\theta}(t))) > 0,
\end{equation}
where $H(\cdot)$ denotes the Shannon entropy function measured in nats,
\begin{equation}
H\left({\bf w}(t; \theta)\right) := -  \sum_{i \in [N]} w_i(t; \theta) \log w_i(t; \theta ).
\end{equation}
\label{theorem:weight-uniformity}
\end{theorem}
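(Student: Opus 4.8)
The plan is to differentiate through the solution map using the chain rule, exactly as in the proofs of Theorems~\ref{thm: variance-reduction} and~\ref{thm: cosine-similarity}. Writing $H(\tau; \theta) := H({\bf w}(\tau; \theta))$, I would first observe that $\frac{\partial}{\partial t} H(\tau; \breve{\theta}(t)) = \left(\frac{\partial}{\partial t}\breve{\theta}(t)\right)^\top \nabla_\theta H(\tau; \theta)\big|_{\theta = \breve{\theta}(t)}$, so the two ingredients are the gradient of the entropy in $\theta$ (at fixed tilt $\tau$) and the derivative of the tilted solution from Lemma~\ref{lem:implicit-func-theorem}.

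The key computation is $\nabla_\theta H$. Under the GLM form~\eqref{eq: loss-exponential} the factor $e^{\tau A(\theta)}$ cancels in the softmax, so $w_i(\tau; \theta) = \frac{1}{N} e^{-\tau \theta^\top T(x_i) - \Gamma(\tau; \theta)}$ and hence $\log w_i(\tau;\theta) = -\log N - \tau \theta^\top T(x_i) - \Gamma(\tau; \theta)$. Substituting into $H = -\sum_i w_i \log w_i$ and using $\sum_i w_i(\tau;\theta) T(x_i) = \mathcal{M}(\tau; \theta)$ gives the compact identity $H(\tau; \theta) = \log N + \tau\, \theta^\top \mathcal{M}(\tau; \theta) + \Gamma(\tau; \theta)$. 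Differentiating in $\theta$, I would use $\nabla_\theta \Gamma(\tau; \theta) = -\tau \mathcal{M}(\tau; \theta)$ (Lemma~\ref{lemma: parrial-Gamma}) together with $\nabla_\theta \mathcal{M}(\tau; \theta) = -\tau \mathcal{V}(\tau; \theta)$ (Lemma~\ref{lemma: partial-M}, whose output is symmetric), so that $\nabla_\theta\big(\theta^\top \mathcal{M}(\tau;\theta)\big) = \mathcal{M}(\tau;\theta) - \tau\mathcal{V}(\tau;\theta)\theta$. The two copies of $\tau \mathcal{M}(\tau;\theta)$ cancel, leaving the clean expression $\nabla_\theta H(\tau; \theta) = -\tau^2 \mathcal{V}(\tau; \theta)\, \theta$. (Equivalently one can route this through $\Lambda$ and Lemmas~\ref{lem: dLambda-theta} and~\ref{lem: dLambda-t-theta}, since $H = \log N + \Lambda - \tau\,\partial_\tau \Lambda$ in the GLM.)

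Combining this with Lemma~\ref{lem:implicit-func-theorem}, i.e. $\frac{\partial}{\partial t}\breve{\theta}(t) = -\big(\nabla^2_{\theta\theta^\top} A(\breve{\theta}(t)) + t\,\mathcal{V}(t; \breve{\theta}(t))\big)^{-1}\mathcal{V}(t; \breve{\theta}(t))\,\breve{\theta}(t)$, and using that both $\mathcal{V}$ and the bracketed Hessian are symmetric, I would obtain
\begin{equation*}
\frac{\partial}{\partial t} H(\tau; \breve{\theta}(t)) = \tau^2\, \breve{\theta}(t)^\top\, \mathcal{V}(t; \breve{\theta}(t)) \big(\nabla^2_{\theta\theta^\top} A(\breve{\theta}(t)) + t\,\mathcal{V}(t; \breve{\theta}(t))\big)^{-1} \mathcal{V}(\tau; \breve{\theta}(t))\, \breve{\theta}(t),
\end{equation*}
which has precisely the same shape as the concluding lines of the variance-reduction and cosine-similarity proofs.

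The main obstacle is the final sign. Each of the three matrices $\mathcal{V}(t; \breve{\theta}(t))$, the inverse Hessian $\big(\nabla^2_{\theta\theta^\top} A + t\,\mathcal{V}(t)\big)^{-1}$ (positive definite for all $t$ by Lemma~\ref{lem:implicit-func-theorem} under the strict-saddle property), and $\mathcal{V}(\tau; \breve{\theta}(t))$ (positive definite by Lemma~\ref{lem: V}) is positive definite, and I would argue as in Theorems~\ref{thm: variance-reduction} and~\ref{thm: cosine-similarity} that the resulting sandwiched form is strictly positive, giving $\frac{\partial}{\partial t} H > 0$. I would take care here, since a product of positive-definite matrices is generally not symmetric and a quadratic form built from an arbitrary such product need not be positive; the argument must exploit that $\mathcal{V}(t;\cdot)$ and $\mathcal{V}(\tau;\cdot)$ are covariance matrices obtained by tilting the \emph{same} data along the single scalar direction $\breve{\theta}(t)^\top T(\cdot)$, which is exactly the structure that makes the analogous concluding steps in the earlier theorems go through.
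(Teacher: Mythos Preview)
Your proposal is correct and takes essentially the same approach as the paper's proof: both compute $\nabla_\theta H(\tau;\theta)=-\tau^2\mathcal{V}(\tau;\theta)\theta$ and then apply the chain rule together with Lemma~\ref{lem:implicit-func-theorem} to obtain the same sandwiched quadratic form. The only cosmetic difference is that the paper expresses $H$ via $\Lambda$ as $H=\Lambda(\tau;\theta)-\tau A(\theta)+\tau\theta^\top\mathcal{M}(\tau;\theta)$ (which equals your $\log N+\Gamma(\tau;\theta)+\tau\theta^\top\mathcal{M}(\tau;\theta)$ up to the constant $\log N$, since $\Lambda=\tau A+\Gamma$), while you route through $\Gamma$ directly; and the paper simply asserts ``$\geq 0$'' at the end, whereas you rightly flag the same positivity subtlety that is present in the concluding lines of Theorems~\ref{thm: variance-reduction} and~\ref{thm: cosine-similarity}.
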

\begin{proof}
Notice that
\begin{align}
    H\left({\bf w}(t; \theta)\right) &= -  \sum_{i \in [N]} w_i(t; \theta) \log w_i(t; \theta)  \\
    & = -\sum_{i \in [N]}  (tf(x_i; \theta) - \Lambda(t;\theta)) e^{tf(x_i; \theta) - \Lambda(t;\theta)} \\
    & = \Lambda(t;\theta) -t  \sum_{i \in [N]}  f(x_i; \theta)  e^{t f(x_i; \theta)  - \Lambda(t;\theta) } \\
    & = \Lambda(t;\theta) - t A(\theta) + t\theta^\top \mathcal{M}(t;\theta).
\end{align}
Thus,
\begin{align}
    \nabla_\theta H\left({\bf w}(t; \theta)\right) &=
\nabla_\theta  \left( \Lambda(t;\theta) - t A(\theta) + t\theta^\top \mathcal{M}(t;\theta) \right)\\
& = t \nabla_\theta A(\theta) - t \mathcal{M}(t; \theta) - t\nabla_\theta A(\theta) + t \mathcal{M}(t;\theta) - t^2\mathcal{V}(t;\theta) \theta\\
&= - t^2 \mathcal{V}(t;\theta) \theta.
\end{align}
Hence,
\begin{align}
   \frac{\partial}{\partial \tau} H\left({\bf w}(t; \breve{\theta}(\tau))\right) &= \left( \frac{\partial}{\partial \tau} \breve{\theta}(\tau) 
 \right)^\top  \nabla_\theta H\left({\bf w}( t; \breve{\theta}(\tau))\right)
 \\
 &= 
\nabla_\theta  \left( \Lambda(t;\theta) - t A(\theta) + t\theta^\top \mathcal{M}(t;\theta) \right)\\
& = t^2 \breve{\theta}^\top(\tau)
 \mathcal{V}(\tau; \breve{\theta}(\tau)) 
 \left(  \nabla^2_{\theta\theta} A(\breve{\theta}(\tau)) + \tau \mathcal{V}(\tau; \breve{\theta}(\tau))  \right)^{-1}   \mathcal{V}(t;\breve{\theta}(\tau)) \breve{\theta}(\tau)\\
& \geq 0,
\end{align}
completing the proof.
\end{proof}

\begin{theorem}[Tilted objective is increasing with $t$]
\label{thm: obj-increasing}
Under Assumption~\ref{assump: expnential_family}, for all $t \in \mathbb{R},$ and all $\theta \in \Theta,$
\begin{equation}
    \frac{\partial}{\partial t} \wR(t; \theta) \geq 0.
\end{equation}
\end{theorem}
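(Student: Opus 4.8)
The plan is to reduce the claim to a convexity statement about the empirical cumulant generating function $\Lambda(t;\theta)=t\,\wR(t;\theta)$ (Definition~\eqref{eq: def-cumulant}) and to exploit the nonnegativity of the tilted variance. Fixing $\theta$ throughout, I would first rewrite the target derivative using $\wR=\Lambda/t$:
\[
\frac{\partial}{\partial t}\wR(t;\theta)=\frac{\partial}{\partial t}\left(\frac{\Lambda(t;\theta)}{t}\right)=\frac{t\,\frac{\partial}{\partial t}\Lambda(t;\theta)-\Lambda(t;\theta)}{t^2}.
\]
Since $t^2>0$, it suffices to show that the numerator $g(t):=t\,\frac{\partial}{\partial t}\Lambda(t;\theta)-\Lambda(t;\theta)$ is nonnegative for every $t$.

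Next I would analyze $g$ through its value and derivative at the origin. At $t=0$ we have $\Lambda(0;\theta)=\log 1=0$, so $g(0)=0$. Differentiating, the first-order terms cancel and
\[
g'(t)=t\,\frac{\partial^2}{\partial t^2}\Lambda(t;\theta)=t\,\theta^\top\mathcal{V}(t;\theta)\,\theta,
\]
where the second equality is exactly Lemma~\ref{lemma: second-derivative-Lambda}. By Lemma~\ref{lem: V}, $\mathcal{V}(t;\theta)\succ 0$, so $\theta^\top\mathcal{V}(t;\theta)\theta\geq 0$ and $g'(t)$ carries the same sign as $t$. Hence $g$ is non-increasing on $(-\infty,0)$ and non-decreasing on $(0,+\infty)$, making $t=0$ a global minimum; together with $g(0)=0$ this yields $g(t)\geq 0$ for all $t$, and therefore $\frac{\partial}{\partial t}\wR(t;\theta)=g(t)/t^2\geq 0$.

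The only delicate point I anticipate is the value $t=0$ itself, where $\wR$ and its $t$-derivative are defined merely by continuous extension and the quotient $g(t)/t^2$ is of the form $0/0$. I would resolve this by a short Taylor/L'H\^opital argument: since $g(0)=0$ and $g'(0)=0\cdot\theta^\top\mathcal{V}(0;\theta)\theta=0$, while $g''(0)=\theta^\top\mathcal{V}(0;\theta)\theta\geq 0$, one obtains $\lim_{t\to 0}g(t)/t^2=\tfrac12\,\theta^\top\mathcal{V}(0;\theta)\theta\geq 0$, which is consistent with the variance interpretation of the derivative at $t=0$. Everything else is an immediate consequence of the convexity of $\Lambda$ in $t$ (equivalently, the nonnegativity of the tilted variance). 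I would also note that equality can genuinely occur—for instance when all individual losses coincide, so that $\mathcal{V}(t;\theta)\theta$ contributes nothing—which is precisely why the statement asserts $\geq 0$ rather than strict positivity.
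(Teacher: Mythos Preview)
Your proposal is correct and follows essentially the same route as the paper: both arguments reduce to showing that $t^2\,\partial_t\wR(t;\theta)$ has derivative $t\,\theta^\top\mathcal{V}(t;\theta)\,\theta$ (so it is minimized at $t=0$ with value $0$), and then divide by $t^2$. The only cosmetic difference is that you work directly with $\Lambda$ while the paper routes the same computation through the decomposition $\wR=A+\tfrac{1}{t}\Gamma$; your packaging as a convexity-of-$\Lambda$ argument is arguably a bit cleaner, and your Taylor limit $\tfrac{1}{2}\,\theta^\top\mathcal{V}(0;\theta)\,\theta$ at $t=0$ is in fact the correct constant.
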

\begin{proof}
Following~\eqref{eq: R-A-gamma}, 
\begin{align}
    \frac{\partial}{\partial t}\wR(t ;\theta) &= \frac{\partial}{\partial t} \left\{\frac{1}{t} \Gamma(t; \theta) \right\} \\
    & = -\frac{1}{t^2} \Gamma(t; \theta) - \frac{1}{t}\theta^\top \mathcal{M}(t; \theta) , \label{eq: follow-partial-gamma}\\
    & =: g(t; \theta),\label{eq:def-g}
\end{align}
where~\eqref{eq: follow-partial-gamma} follows from Lemma~\ref{lemma: parrial-Gamma}, and~\eqref{eq:def-g} defines $g(t; \theta).$

Let $g(0; \theta) := \lim_{t \to 0} g(t; \theta)$
Notice that
\begin{align}
    g(0; \theta) &=  \lim_{t \to 0} \left\{-\frac{1}{t^2} \Gamma(t; \theta) -\frac{1}{t} \theta^\top \mathcal{M}(t; \theta) \right\}\\
    &= - \lim_{t \to 0} \left\{ \frac{\frac{1}{t} \Gamma(t; \theta) + \theta^\top \cM(t; \theta)}{t}\right\}\\
    & =  \theta^\top \cV(0;\theta) \theta, \label{eq: follow-lopital}
\end{align}
where~\eqref{eq: follow-lopital} is due to L'H\^0pital's rule and Lemma~\ref{lemma: second-derivative-Lambda}.
 Now consider
\begin{align}
 \frac{\partial}{\partial t} \left\{ t^2 g(t; \theta)\right\} & = 
 \frac{\partial}{\partial t} \left\{-\Gamma(t; \theta) - t\theta^\top \cM(t; \theta)   \right\} \\
 &=  \theta^\top \cM(t; \theta) \label{eq: follow-dgamma} \\
 & \quad - \theta^\top \cM(t; \theta) + t \theta^\top \cV(t; \theta) \theta \label{eq: follow-dM}\\
 & = t \theta^\top \cV(t; \theta) \theta \label{eq:137}
\end{align}
where $g(t; \theta)= \frac{\partial}{\partial t} \wR(t; \theta)$,~\eqref{eq: follow-dgamma} follows from Lemma~\ref{lemma: parrial-Gamma},~\eqref{eq: follow-dM} follows from the chain rule and Lemma~\ref{lemma: partial-M}.
Hence, $t^2 g(t; \theta)$ is an increasing function of $t$ for $t \in \mathbb{R}^+$, and a decreasing function of $t$ for $t \in \mathbb{R}^-$, taking its minimum at $t = 0.$
Hence, $t^2 g(t; \theta)\geq 0$ for all $t \in \mathbb{R}.$ This implies that $g(t; \theta) \geq 0$ for all $t \in \mathbb{R},$ which in conjunction with~\eqref{eq:def-g} implies the statement of the theorem.
\end{proof}

\begin{definition}[Optimal tilted objective]
Let the optimal tilted objective be defined as
\begin{equation}
    \wF(t) := \wR(t; \breve{\theta}(t)).
\end{equation}
\label{def:wF}
\end{definition}

\begin{theorem}[Optimal tilted objective is increasing with $t$] 
\label{thm: opt-obj-increasing}
Under Assumption~\ref{assump: expnential_family}, for all $t \in \mathbb{R},$ and all $\theta \in \Theta,$
\begin{equation}
    \frac{\partial}{\partial t} \wF(t) = \frac{\partial}{\partial t} \wR(t; \breve{\theta}(t)) \geq 0.
\end{equation}
\end{theorem}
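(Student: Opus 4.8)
The plan is to differentiate $\wF(t) = \wR(t; \breve{\theta}(t))$ directly via the chain rule and exploit the optimality of $\breve{\theta}(t)$ to kill the term that arises from the implicit dependence on $t$ through the solution. Concretely, I would write
\begin{equation}
\frac{\partial}{\partial t} \wF(t) = \left.\frac{\partial}{\partial t} \wR(t; \theta)\right|_{\theta = \breve{\theta}(t)} + \left(\frac{\partial}{\partial t}\breve{\theta}(t)\right)^\top \left.\nabla_\theta \wR(t; \theta)\right|_{\theta = \breve{\theta}(t)},
\end{equation}
which is the standard decomposition into an explicit partial derivative and a term propagated through $\breve{\theta}(t)$. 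The differentiability of $\breve{\theta}(t)$ needed to justify this split is exactly what Lemma~\ref{lem:implicit-func-theorem} provides under Assumptions~\ref{assump: expnential_family} and~\ref{assump:strict-saddle}.

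The key observation is that the second term vanishes. Since $\breve{\theta}(t)$ is a local minimizer of $\wR(t; \cdot)$, first-order optimality gives $\nabla_\theta \wR(t; \breve{\theta}(t)) = 0$; this is precisely the content of Lemma~\ref{lem: Lambda-solution} (recall $\Lambda(t;\theta) = t\,\wR(t;\theta)$, so $\nabla_\theta \Lambda(t; \breve{\theta}(t)) = 0$ is equivalent to $\nabla_\theta \wR(t; \breve{\theta}(t)) = 0$ for $t \neq 0$, and the $t=0$ case follows by continuity). Consequently the envelope-type identity
\begin{equation}
\frac{\partial}{\partial t} \wF(t) = \left.\frac{\partial}{\partial t} \wR(t; \theta)\right|_{\theta = \breve{\theta}(t)}
\end{equation}
holds, reducing the problem to the sign of the explicit partial $t$-derivative evaluated at the minimizer.

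I would then finish by invoking Theorem~\ref{thm: obj-increasing}, which establishes $\frac{\partial}{\partial t}\wR(t;\theta) \geq 0$ for \emph{all} $t \in \mathbb{R}$ and \emph{all} $\theta \in \Theta$. Specializing to $\theta = \breve{\theta}(t)$ immediately yields $\frac{\partial}{\partial t}\wF(t) \geq 0$, completing the argument. The main obstacle, and the only nonroutine step, is the rigorous justification of the envelope argument: one must ensure $\breve{\theta}(t)$ is genuinely differentiable in $t$ and that the interchange of differentiation is valid, both of which rest on the strict saddle property (Assumption~\ref{assump:strict-saddle}) guaranteeing $\nabla^2_{\theta\theta^\top}\wR(t;\breve{\theta}(t)) \succ 0$ so that the implicit function theorem applies (Lemma~\ref{lem:implicit-func-theorem}); once this is in place, the result is an almost immediate corollary of Theorem~\ref{thm: obj-increasing}.
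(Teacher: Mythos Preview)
Your envelope-theorem argument is correct, but the paper takes a different, more elementary route. Rather than differentiating through $\breve{\theta}(t)$ via the chain rule, the paper argues directly at the level of monotonicity: for any $\epsilon>0$ and any $\theta$, Theorem~\ref{thm: obj-increasing} gives $\wR(t+\epsilon;\theta)\geq \wR(t;\theta)$, and by optimality $\wR(t;\theta)\geq \wR(t;\breve{\theta}(t))$; taking the (local) minimum over $\theta$ on the left then yields $\wF(t+\epsilon)\geq \wF(t)$, so $\wF$ is non-decreasing. This avoids any appeal to the differentiability of $t\mapsto\breve{\theta}(t)$ and hence sidesteps Lemma~\ref{lem:implicit-func-theorem} entirely. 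Your approach, by contrast, buys you the exact envelope identity $\frac{\partial}{\partial t}\wF(t)=\left.\frac{\partial}{\partial t}\wR(t;\theta)\right|_{\theta=\breve{\theta}(t)}$, which is sharper information than mere monotonicity, at the cost of explicitly invoking Assumption~\ref{assump:strict-saddle} and the implicit function theorem. Both routes ultimately rest on Theorem~\ref{thm: obj-increasing}.
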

\begin{proof}
Notice that for all $\theta$, and all $\epsilon \in \mathbb{R}^+,$
\begin{align}
    \wR(t+\epsilon; \theta) &\geq \wR(t; \theta)\label{eq: follow-thm-2}\\
    & \geq \wR(t; \breve{\theta}(t)),\label{eq: optimality-follow}
\end{align}
where~\eqref{eq: follow-thm-2} follows from Theorem~\ref{thm: obj-increasing} and~\eqref{eq: optimality-follow} follows from the definition of $\breve{\theta}(t)$. Hence,
\begin{equation}
    \wR(t+\epsilon; \breve{\theta}(t+\epsilon)) = 
    \min_{\theta \in B(\breve{\theta}(t), r)} \wR(t+\epsilon; \theta) \geq \wR(t; \breve{\theta}(t)),
\end{equation}
which completes the proof.
\end{proof}

\newpage
\section{Connections Between TERM and Exponential Tilting}
\label{app:exptilt}
Here we provide  connections between TERM and exponential tilting, a concept previously explored in the context of importance sampling and the theory of large deviations~\citep{dembo-zeitouni, beirami2018characterization, wainwright2005new}. 
To do so, suppose that $X$ is drawn from distribution $p(\cdot)$. Let us study the distribution of random variable $Y = f(X;\theta).$ Let $\Lambda_Y (t)$ be the cumulant generating function~\cite[Sectiom 2.2]{dembo-zeitouni}. That is
\begin{align}
\Lambda_Y (t) & := \log \left( E_p \left\{ e^{tY} \right\}\right) \\
& = \log \left( E_{p} \left\{e^{tf(X;\theta)}\right\}\right).
\end{align}
Now, suppose that $x_1, \ldots, x_N$ are drawn i.i.d. from $p(\cdot)$.  {\it Note that this distributional assumption is made solely for providing intuition on the tilted objectives and is not needed in any of the proofs in this paper}. Hence, $\wR(t; \theta)$ can be viewed as an empirical approximation to the  cumulant generating function:
\begin{equation}
    \Lambda_Y (t) \approx t \wR(t; \theta).
\end{equation}
Hence, $\wR(t; \theta)$ provides an approximate characterization of the distribution of $f(X; \theta).$ Thus, minimizing $\wR(t; \theta)$ is approximately equivalent to minimizing the complementary cumulative distribution function (CDF) of $f(X;\theta)$. In other words, this is equivalent to minimizing  $P\{ f(X;\theta) > a\}$ for some $a,$ which is a function of $t.$

In the next section, we will explore these connections with tail probabilities dropping the distributional assumptions, effectively drawing connections between superquantile methods and TERM.

\newpage 
\section{TERM as an Approximate Superquantile Method}
\label{app:superquantile}
For all $a \in \mathbb{R},$ let $Q(a; \theta)$ denote the quantile of the losses that are no smaller than $a,$ i.e.,
\begin{equation}
    Q(a; \theta):= \frac{1}{N} \sum_{i \in [N]}  \mathbb{I}\left\{ f(x_i;\theta) \geq a \right\},
\end{equation}
where $\mathbb{I}\{\cdot\}$ is the indicator function.
Notice that $Q(a; \theta) \in \left\{0, \frac{1}{N}, \ldots, 1\right\}$ quantifies the fraction of the data for which loss is at least $a.$ In this section, we further assume that $f$ is such that $f(x_i; \theta)\geq 0$ for all $\theta$. 

Suppose that we are interested in choosing $\theta$ in a way that for a given $a \in \mathbb{R}$, we minimize the fraction of the losses that are larger than $a$. That is
\begin{equation}
\label{eq: def-theta-Q}
Q^0(a) := \min_{\theta} Q(a;\theta) = Q(a; \theta^0(a)),
\end{equation}
where 
\begin{equation}
    \theta^0(a) := \arg \min_{\theta} Q(a;\theta) .
\end{equation}
This is a non-smooth non-convex problem and solving it to global optimality is very challenging. In this section, we argue that TERM provides a reasonable approximate solution to this problem, which is computationally feasible. 

Notice that we have the following simple relation:
\begin{lemma}
If $a<\wF(- \infty)$ then $Q^0(a) = 1$. Further, if $a> \wF(+\infty)$ then $Q^0(a) = 0$,
where $\wF(\cdot)$ is defined in Definition~\ref{def:wF}, and is reproduced here:
\begin{align}
    \wF(-\infty) &= \lim_{t \to -\infty} \wR(t; \breve{\theta}(t)) = \min_{\theta} \min_{i \in [N]} f(x_i; \theta),\\ 
    \wF(+\infty) &= \lim_{t \to +\infty} \wR(t; \breve{\theta}(t)) = \min_{\theta} \max_{i \in [N]} f(x_i; \theta).
\end{align}
\end{lemma}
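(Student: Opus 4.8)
The plan is to treat the two claims separately; each reduces to a one-line observation once the reproduced characterizations of $\wF(-\infty)$ and $\wF(+\infty)$ as the min-min and min-max of the individual losses are in hand. I would take those two identities as given, since they follow from Lemma~\ref{lemma: special_cases} (the pointwise limits of $\wR(t;\theta)$ to the min-loss and max-loss) together with Definition~\ref{def:wF} and the exchange of the limit in $t$ with the minimization over $\theta$. No smoothness, convexity, or optimization machinery beyond these is needed.

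For the first claim, I would suppose $a < \wF(-\infty) = \min_\theta \min_{i \in [N]} f(x_i;\theta)$. The right-hand side is the global minimum of the individual losses over \emph{both} $\theta$ and the sample index, so for every $\theta \in \Theta$ and every $i \in [N]$ we have $f(x_i;\theta) \geq \wF(-\infty) > a$. Hence each indicator $\mathbb{I}\{f(x_i;\theta)\geq a\}$ equals $1$, so $Q(a;\theta)=1$ for every $\theta$; taking the minimum over $\theta$ yields $Q^0(a)=1$.

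For the second claim, I would suppose $a > \wF(+\infty) = \min_\theta \max_{i \in [N]} f(x_i;\theta)$ and exhibit an explicit witness: let $\theta^\star$ attain the min-max, so that $\max_{i} f(x_i;\theta^\star) = \wF(+\infty)$. Then for every $i$ we have $f(x_i;\theta^\star) \leq \wF(+\infty) < a$, so every indicator vanishes and $Q(a;\theta^\star)=0$. Since $Q(a;\theta)\geq 0$ for all $\theta$ by definition, the minimum $Q^0(a)=\min_\theta Q(a;\theta)=0$ is attained at $\theta^\star$.

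The only points requiring care are the strict-versus-nonstrict inequalities in the indicator---I would use the strict hypotheses $a<\wF(-\infty)$ and $a>\wF(+\infty)$ precisely to force every loss onto the correct side of $a$---and the justification of the reproduced identities for $\wF(\pm\infty)$. The latter is the only genuinely nontrivial ingredient, and it is already supplied by Lemma~\ref{lemma: special_cases} and Definition~\ref{def:wF}, so I expect the proof itself to be essentially definitional.
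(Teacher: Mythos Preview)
Your proposal is correct and matches the paper's intent: the paper states this lemma as a ``simple relation'' and omits the proof entirely, and your argument is precisely the definitional unpacking one would supply. The only remark is that the identities for $\wF(\pm\infty)$ are part of the lemma statement here rather than something to be established, so you need not justify them separately.
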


Next, we present our main result on the connection between the superquantile method and TERM.
\begin{theorem}
For all $t \in \mathbb{R},$ and all $\theta$, and all $a \in (\wF(-\infty), \wF(+\infty)),$\footnote{We define the RHS at $t=0$ via continuous extension.}
\begin{equation}
    Q(a; \theta) \leq \wQ(a; t, \theta): = \frac{e^{ \wR(t; \theta) t} -  e^{\wF(-\infty) t}}{e^{at} -  e^{\wF(-\infty) t}}.
\label{eq:Q-thm}
\end{equation}
\label{thm:Q}
\end{theorem}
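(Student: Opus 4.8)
The plan is to prove the bound by constructing, for each fixed $t$, a single scalar function that majorizes the indicator $\mathbb{I}\{\,\cdot\,\geq a\}$ and whose average over the samples collapses exactly to the TERM objective. Write $m := \wF(-\infty)$. By the preceding Lemma, $m = \min_{\theta'}\min_{i\in[N]} f(x_i;\theta')$, so for the fixed $\theta$ in the statement every loss satisfies $f(x_i;\theta)\geq m$, and the hypothesis $a\in(\wF(-\infty),\wF(+\infty))$ guarantees $a>m$.

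First I would fix $t\neq 0$ and introduce the comparison function
\begin{equation}
\phi_t(x) := \frac{e^{tx}-e^{tm}}{e^{ta}-e^{tm}}.
\end{equation}
Since $a>m$, the denominator is strictly positive when $t>0$ and strictly negative when $t<0$, hence nonzero. Its derivative $\phi_t'(x) = t e^{tx}/(e^{ta}-e^{tm})$ is a ratio of two quantities of the same sign (both positive for $t>0$, both negative for $t<0$), so $\phi_t$ is strictly increasing on $\mathbb{R}$ for either sign of $t$. Together with the endpoint values $\phi_t(m)=0$ and $\phi_t(a)=1$, monotonicity yields the pointwise majorization $\mathbb{I}\{x\geq a\}\leq \phi_t(x)$ for all $x\geq m$: on $[m,a)$ the indicator is $0\leq\phi_t(x)$ because $\phi_t(x)\geq\phi_t(m)=0$, and on $[a,\infty)$ the indicator is $1\leq\phi_t(x)$ because $\phi_t(x)\geq\phi_t(a)=1$.

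Next I would apply this inequality at $x=f(x_i;\theta)\geq m$ for each $i\in[N]$ and average. Using linearity together with the identity $\frac{1}{N}\sum_{i\in[N]}e^{tf(x_i;\theta)} = e^{t\wR(t;\theta)}$, which is immediate from the definition of $\wR$, gives
\begin{equation}
Q(a;\theta) = \frac{1}{N}\sum_{i\in[N]}\mathbb{I}\{f(x_i;\theta)\geq a\} \leq \frac{1}{N}\sum_{i\in[N]}\phi_t(f(x_i;\theta)) = \frac{e^{t\wR(t;\theta)}-e^{tm}}{e^{ta}-e^{tm}} = \wQ(a;t,\theta),
\end{equation}
which is the claim for $t\neq 0$. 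The case $t=0$ follows by continuous extension: letting $t\to 0$ and expanding each exponential to first order sends the right-hand side to $(\overline{R}(\theta)-m)/(a-m)$, which is exactly the ordinary Markov inequality applied to the nonnegative numbers $f(x_i;\theta)-m$, so the bound persists in the limit.

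The only delicate point---and the step I would treat most carefully---is the unified handling of both signs of $t$. Subtracting $e^{tm}$ flips the sign of the denominator as $t$ crosses zero, so one must verify that $\phi_t$ remains increasing and pinned to the correct endpoint values in \emph{both} regimes rather than importing the $t>0$ picture. Equivalently, the entire argument can be recast as a single Markov inequality applied to the nonnegative random variable $\sgn(t)\,(e^{tf(x_i;\theta)}-e^{tm})$ against the threshold $\sgn(t)\,(e^{ta}-e^{tm})>0$, which makes the sign bookkeeping transparent and reduces the whole statement to the elementary Markov bound.
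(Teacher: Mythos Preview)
Your proof is correct and follows essentially the same approach as the paper: both arguments reduce to the pointwise bound $\mathbb{I}\{f\geq a\}\leq (e^{tf}-e^{tm})/(e^{ta}-e^{tm})$ for $f\geq m$, then average over samples. The paper packages the computation through an intermediate identity (rewriting $Q$ via $e^{(a-m)t\,\mathbb{I}\{\cdot\}}$) and the elementary bound $\mathbb{I}\{x\geq 1\}\leq x$, whereas you construct the majorizing function $\phi_t$ directly and verify monotonicity in both sign regimes; after factoring out $e^{tm}$ the two inequalities are literally identical, so the difference is purely organizational.
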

\begin{proof}
We have
\begin{align}
   Q(a; \theta) &= \frac{\frac{1}{N} \sum_{i \in [N]} e^{(a- \wF(-\infty)) t \mathbb{I}\left\{ \frac{f(x_i; \theta) - \wF(-\infty)}{a - \wF(-\infty)} \geq 1 \right\} } - 1}{e^{(a - \wF(-\infty))t} - 1} \label{eq:proofQ-1}\\
   & \leq \frac{\frac{1}{N} \sum_{i \in [N]} e^{ (f(x_i; \theta) - \wF(-\infty)) t} - 1}{e^{(a - \wF(-\infty))t} - 1}\label{eq:proofQ-2}\\
   & = \frac{e^{ \wR(t; \theta) t} - e^{\wF(-\infty) t}}{e^{at} - e^{\wF(-\infty) t}}\label{eq:proofQ-3},
\end{align}
where~\eqref{eq:proofQ-1} follows from Lemma~\ref{lem:Q1},~\eqref{eq:proofQ-2} follows from Lemma~\ref{lem:indicator}, the fact that $e^{tx}$ is strictly increasing (resp. decreasing) for $t>0$ (resp. $t<0$) and $(e^{(a - \wF(-\infty))t} - 1)$ is positive (resp. negative) for $t>0$ (resp. $t<0$), and~\eqref{eq:proofQ-3} follows from definition.
\end{proof}
\begin{lemma}
For all $t \in \mathbb{R},$ and all $\theta$,\footnote{We define the RHS at $t=0$ via continuous extension.}
\begin{equation}
    Q(a; \theta) = \frac{\frac{1}{N} \sum_{i \in [N]} e^{(a - \wF(-\infty) ) t \mathbb{I}\left\{ f(x_i; \theta) \geq a \right\}  } - 1}{e^{(a - \wF(-\infty) )t} - 1}.
\end{equation}
\label{lem:Q1}
\end{lemma}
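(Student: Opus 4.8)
The plan is to prove this as essentially a one-line algebraic identity, exploiting only the fact that the indicator $\mathbb{I}\{f(x_i;\theta)\geq a\}$ is $\{0,1\}$-valued. Abbreviating $b := (a-\wF(-\infty))t$, the exponent $b\,\mathbb{I}\{f(x_i;\theta)\geq a\}$ is either $b$ or $0$, so each summand $e^{b\,\mathbb{I}\{\cdots\}}$ equals either $e^{b}$ or $1$; this is the entire content of the lemma.

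Concretely, I would first set $k := \sum_{i\in[N]}\mathbb{I}\{f(x_i;\theta)\geq a\}$, the number of samples with loss at least $a$, so that $Q(a;\theta)=k/N$ directly from the definition of $Q$. Partitioning the sum in the numerator according to the value of the indicator gives
$$\frac{1}{N}\sum_{i\in[N]} e^{b\,\mathbb{I}\{f(x_i;\theta)\geq a\}} \;=\; \frac{1}{N}\bigl(k\,e^{b}+(N-k)\bigr),$$
and subtracting $1$ collapses this to $\tfrac{k}{N}(e^{b}-1)$. Dividing by $e^{b}-1$, which is nonzero whenever $t\neq 0$ and $a\neq\wF(-\infty)$, then yields $k/N = Q(a;\theta)$, exactly the claimed expression.

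I do not expect any genuine obstacle, since the statement is a purely combinatorial rewriting of the counting quantity $Q(a;\theta)$. The only point requiring a word of care is the degenerate case $t=0$, where numerator and denominator both vanish and the right-hand side must be interpreted via the continuous extension announced in the footnote; a first-order expansion $e^{b}=1+b+o(b)$ (equivalently L'H\^opital in $t$) shows the numerator behaves like $b\,k/N$ and the denominator like $b$, so the limiting ratio is again $k/N=Q(a;\theta)$. This identity then serves in Theorem~\ref{thm:Q} as the exact starting point onto which the smooth exponential majorant for the indicator is applied.
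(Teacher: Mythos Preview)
Your proposal is correct and takes essentially the same approach as the paper: both reduce to the identity $\frac{1}{N}\sum_{i} e^{b\,\mathbb{I}\{\cdots\}} = Q(a;\theta)e^{b} + (1-Q(a;\theta))$ and then solve for $Q$. You are in fact slightly more complete than the paper, which states this identity and stops, whereas you also spell out the $t=0$ continuous-extension case.
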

\begin{proof}
The proof is completed following this identity:
\begin{align}
    \frac{1}{N} \sum_{i \in [N]} e^{(a - \wF(-\infty) )  \mathbb{I}\left\{ f(x_i; \theta) \geq a \right\}  t}  =  Q(a; \theta)e^{(a - \wF(-\infty) )t} + (1-Q(a; \theta)).
\end{align}
\end{proof}
\begin{lemma}
For $x\geq 0$, we have $\mathbb{I}\{x\geq 1\} \leq x .$
\label{lem:indicator}
\end{lemma}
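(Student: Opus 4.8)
The plan is to prove this elementary inequality by a case analysis on the value of the indicator, splitting according to whether $x \geq 1$ or $0 \leq x < 1$. Since $\mathbb{I}\{x \geq 1\}$ takes only the values $0$ and $1$, it suffices to verify the claimed bound $\mathbb{I}\{x \geq 1\} \leq x$ separately in each of these two regimes.

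First I would treat the case $x \geq 1$. Here the indicator evaluates to $\mathbb{I}\{x \geq 1\} = 1$, and the asserted inequality becomes $1 \leq x$, which is exactly the hypothesis defining this case; hence it holds immediately. Next I would treat the complementary case $0 \leq x < 1$. Here the indicator evaluates to $\mathbb{I}\{x \geq 1\} = 0$, so the asserted inequality reduces to $0 \leq x$, which is precisely the standing assumption $x \geq 0$. Taking the two cases together yields $\mathbb{I}\{x \geq 1\} \leq x$ for every $x \geq 0$, as desired.

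There is no genuine obstacle in this argument; the statement is essentially definitional once the case split is made. The only point that warrants care is the explicit use of the hypothesis $x \geq 0$ in the regime $x < 1$: without it the bound would fail (for instance at negative $x$, where the right-hand side is negative while the indicator is nonnegative). This is also the feature of the lemma that makes it applicable in the proof of Theorem~\ref{thm:Q}, where it is invoked with the nonnegative quantity $\frac{f(x_i; \theta) - \wF(-\infty)}{a - \wF(-\infty)}$ in the exponent, so I would note that the nonnegativity assumption $f(x_i; \theta) \geq 0$ made in that section ensures the hypothesis is met there.
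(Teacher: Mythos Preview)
Your proof is correct; the paper in fact omits any proof of this lemma, treating it as self-evident, and your two-case argument is the natural way to justify it. One minor remark on your closing paragraph: the nonnegativity of $\frac{f(x_i; \theta) - \wF(-\infty)}{a - \wF(-\infty)}$ in the proof of Theorem~\ref{thm:Q} follows not from the assumption $f(x_i;\theta)\geq 0$ but rather from $f(x_i;\theta)\geq \wF(-\infty)$ (by definition of $\wF(-\infty)$ as the global minimum) together with $a>\wF(-\infty)$.
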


Theorem~\ref{thm:Q} directly leads to the following result.
\begin{theorem}
For all $a \in (\wF(-\infty), \wF(+\infty)),$ we have
\begin{equation}
     Q^0(a) \leq Q^1(a) \leq Q^2(a) \leq Q^3(a) = \inf_{t \in \mathbb{R}} \left\{ \frac{e^{\wF(t) t} - e^{\wF(-\infty) t} }{e^{at} - e^{\wF(-\infty) t}} \right\},
\label{eq:Q-2}
\end{equation}
where
\begin{align}
    Q^1(a) &:= \inf_{t\in \mathbb{R}} Q(a; \breve{\theta}(t))\\
    Q^2(a) &:= Q(a; \breve{\theta}(\wt(a)))\\
    Q^3(a) &:= \wQ(a; \wt(a), \breve{\theta}(\wt(a))) 
\end{align}
and 
\begin{equation}
     \wt(a) := \arg\inf_{t \in \mathbb{R}} \left\{ \wQ(a; t, \breve{\theta}(t)) \right\} = \arg\inf_{t \in \mathbb{R}} \left\{ \frac{e^{\wF(t) t} - e^{\wF(-\infty) t}}{e^{at} - e^{\wF(-\infty) t}} \right\}. 
\end{equation}
\label{thm:Q-opt}
\end{theorem}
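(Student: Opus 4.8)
The plan is to recognize that Theorem~\ref{thm:Q-opt} is essentially a bookkeeping corollary of Theorem~\ref{thm:Q}, obtained by chaining four elementary relations among $Q^0, Q^1, Q^2, Q^3$. I would establish the chain from left to right, each step resting either on an infimum-over-subset argument or on the pointwise bound already proven in Theorem~\ref{thm:Q}.

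For the first inequality $Q^0(a) \le Q^1(a)$, I would note that $\{\breve{\theta}(t) : t \in \mathbb{R}\}$ is a subset of $\Theta$, so taking an infimum of $Q(a; \cdot)$ over this tilted family can only raise the value relative to the global minimum $Q^0(a) = \min_{\theta} Q(a; \theta)$. The second inequality $Q^1(a) \le Q^2(a)$ is immediate from the definitions: $Q^2(a) = Q(a; \breve{\theta}(\wt(a)))$ is the value of $Q(a; \breve{\theta}(t))$ at the single point $t = \wt(a)$, hence at least the infimum $Q^1(a) = \inf_{t} Q(a; \breve{\theta}(t))$ of the same function.

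The third inequality $Q^2(a) \le Q^3(a)$ is where Theorem~\ref{thm:Q} enters: instantiating the pointwise bound $Q(a; \theta) \le \wQ(a; t, \theta)$ at the particular choice $t = \wt(a)$ and $\theta = \breve{\theta}(\wt(a))$ gives exactly $Q^2(a) \le \wQ(a; \wt(a), \breve{\theta}(\wt(a))) = Q^3(a)$. Finally, for the closing equality, I would substitute $\wR(t; \breve{\theta}(t)) = \wF(t)$ (Definition~\ref{def:wF}) into the formula for $\wQ$ from Theorem~\ref{thm:Q}, giving
\begin{equation*}
\wQ(a; t, \breve{\theta}(t)) = \frac{e^{\wF(t) t} - e^{\wF(-\infty) t}}{e^{at} - e^{\wF(-\infty) t}},
\end{equation*}
and then invoke the definition of $\wt(a)$ as the argument attaining $\inf_{t} \wQ(a; t, \breve{\theta}(t))$ to conclude that $Q^3(a)$ equals the stated infimum.

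Since the analytic content is carried entirely by Theorem~\ref{thm:Q}, I do not expect a genuine obstacle. The only points requiring mild care are bookkeeping: confirming that the hypothesis $a \in (\wF(-\infty), \wF(+\infty))$ keeps the denominator $e^{at} - e^{\wF(-\infty) t}$ of the correct sign so that Theorem~\ref{thm:Q} applies and the bound remains meaningful, and checking that the infimum defining $\wt(a)$ is attained (or else reading $Q^3(a)$ and $\wt(a)$ through a minimizing sequence, so the final equality holds as an infimum rather than a minimum).
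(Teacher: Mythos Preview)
Your proposal is correct and follows essentially the same approach as the paper: the paper declares the first two inequalities trivial and only addresses $Q^2(a)\le Q^3(a)$, which it derives from Theorem~\ref{thm:Q} exactly as you do. Your write-up is in fact more careful than the paper's, since you make explicit the subset-infimum argument for $Q^0\le Q^1$, the single-point-versus-infimum argument for $Q^1\le Q^2$, and the substitution $\wR(t;\breve{\theta}(t))=\wF(t)$ for the closing equality.
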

\begin{proof} 
The only non-trivial step is to show that $Q^2(a) \leq Q^3(a).$
Following Theorem~\ref{thm:Q},
\begin{align}
   Q^2(a) &= Q(a; \breve{\theta}(\wt(a))\\
   &\leq \inf_{t \in \mathbb{R}} \wQ(a; t, \breve{\theta}(t))\\
   & = Q^3(a),
\end{align}
which completes the proof.
\end{proof}
 Theorem~\ref{thm:Q-opt} motivates us with the following approximation on the solutions of the superquantile method.
\begin{approximation}
For all $a \in (\wF(-\infty), \wF(+\infty)),$
\begin{equation}
   Q(a; \theta^0(a)) = Q^0(a) \approx Q^2(a) =  Q(a; \breve{\theta}(\wt(a)),
\label{eq:approx-superquantile-sol}
\end{equation}
and hence, $\breve{\theta}(\wt(a)$ is an approximate solution to the superquantile optimization problem.
\label{approx:superquantile}
\end{approximation}

While we have not characterized how tight this approximation is for  $a \in (\wF(-\infty), \wF(+\infty))$, we believe that Approximation~\ref{approx:superquantile} provides a reasonable solution to the superquantile optimization problem in general. This is evidenced empirically when the approximation is evaluated on the toy examples of Figure~\ref{fig:example}, and compared with the global solutions of the superquantile method. The results are shown in Figure~\ref{fig:super-quantile-results}. As can be seen, $Q^0(a) \approx Q^2(a)$ as suggested by Approximation~\ref{approx:superquantile}. Also, we can see that while the bound in Theorem~\ref{thm:Q-opt} is not tight, the solution that is obtained from solving it results in a good approximation to the superquantile optimization.

\begin{figure}[h]
    \centering
    \begin{subfigure}{0.46\textwidth}
        \centering
        \includegraphics[width=\textwidth]{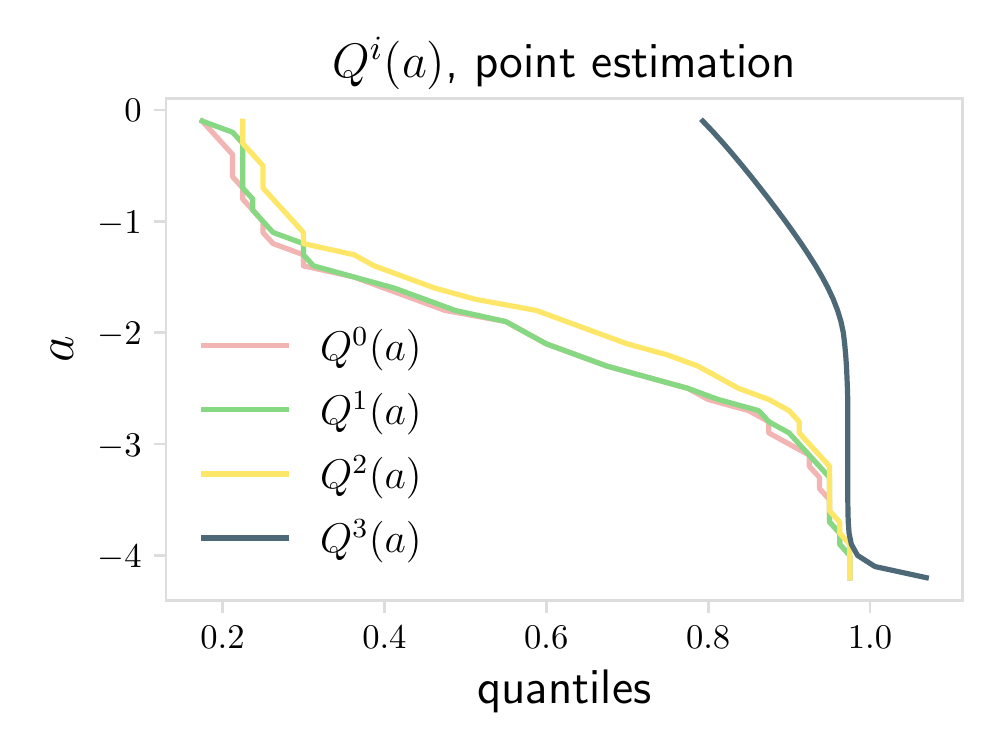}
        \caption{\small Numerical results showing the bounds $Q^1(a), Q^2(a)$, and $Q^3(a)$ for $Q^0(a)$ on the point estimation example.}
        \label{fig:}
    \end{subfigure}
    \hfill
    \begin{subfigure}{0.46\textwidth}
        \centering
        \includegraphics[width=\textwidth]{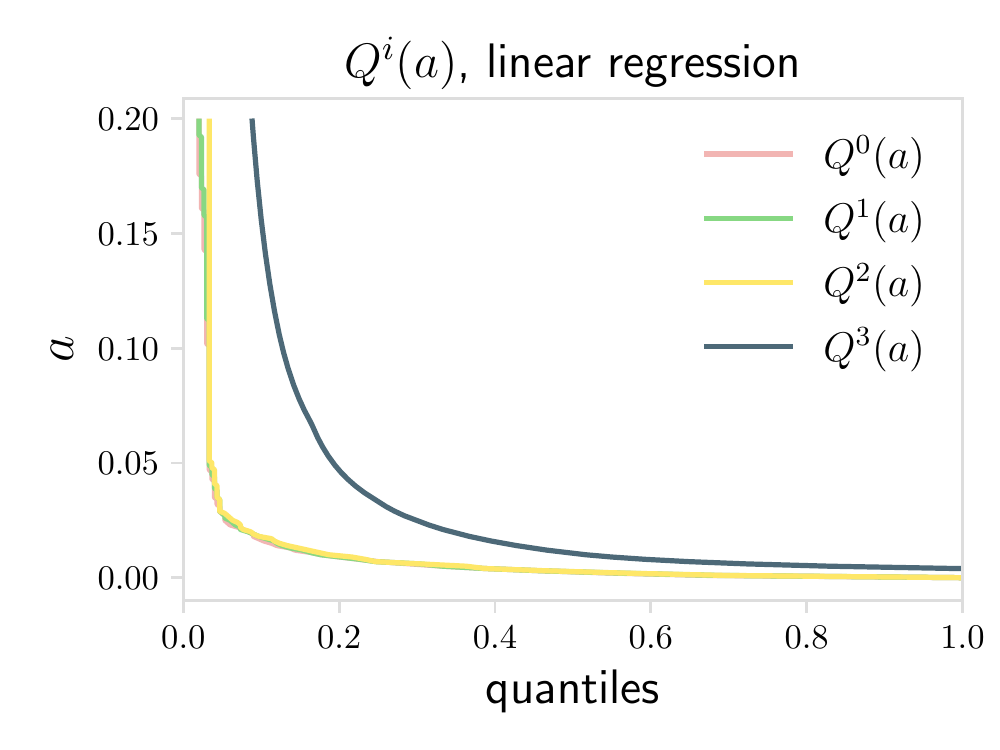}
        \caption{\small Numerical results showing the bounds $Q^1(a), Q^2(a)$, and $Q^3(a)$ for $Q^0(a)$ on the linear regression example.
        }
        
    \end{subfigure}
    \caption{$Q^1(a)$ and $Q^2(a)$ are close to $Q^0(a)$, which indicates that the solution obtained from solving $Q^3(a)$ (which is $Q^2(a)$) is a tight approximation of the globally optimal solution of $Q^0(a)$.}
\label{fig:super-quantile-results}
\end{figure}

Finally, we draw connections between these results and the $k$-loss.
Notice that minimizing $Q(a; \theta)$ for a fixed $a$ is equivalent to minimizing $a$ for a fixed $Q(a; \theta).$ If we fix $Q(a; \theta) = (N-k)/N,$ minimizing $a$ would be equivalent to minimizing the $k$-loss.
Formally, let $R_{(k)}(\theta)$ be the $k$-th order statistc of the loss vector. Hence, $R_{(k)}$ is the $k$-th smallest loss, and particularly 
\begin{align}
    R_{(1)}(\theta) &= \widecheck{R}(\theta),\\
    R_{(N)} (\theta) & = \widehat{R}(\theta). 
\end{align}
Thus, for any $k \in [N],$ we define
\begin{equation}
    R^*_{(k)} := \min_{\theta} R_{(k)}(\theta).
\end{equation}
\begin{equation}
    \theta^*(k) := \arg\min_{\theta} R_{(k)}(\theta).
\end{equation}
Note that 
\begin{align}
    R^*_{(1)}  &= \wF(-\infty),\\
    R^*_{(N)}  &= \wF(+\infty).
\end{align}
 Theorem~\ref{thm:Q} directly implies the following result:
\begin{corollary}
For all $k \in \{2, \ldots, N-1\},$ and all $t \in \mathbb{R}^+:$
\begin{equation}
   \left| e^{(R_{(k)}(\theta) - \wF(-\infty)) t} - 1 \right|  \leq \left(\frac{N}{N-k} \right) \left| e^{(\wR(t;\theta)  - \wF(-\infty))t} - 1\right|.
\end{equation}
\label{thm:k-bound}
\end{corollary}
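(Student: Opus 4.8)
The plan is to derive the corollary by specializing Theorem~\ref{thm:Q} to the choice $a = R_{(k)}(\theta)$ and then reading off a bound on the $k$-th order statistic directly from the definition of $Q$. First I would record two elementary sign facts that let me discard the absolute values throughout. Since $\wF(-\infty) = \min_{\theta}\min_{i} f(x_i;\theta)$, we have $R_{(k)}(\theta) \geq R_{(1)}(\theta) = \widecheck{R}(\theta) \geq \wF(-\infty)$, and likewise $\wR(t;\theta) \geq \widecheck{R}(\theta) \geq \wF(-\infty)$ for every $t$. Hence for $t \in \mathbb{R}^+$ both $e^{(R_{(k)}(\theta) - \wF(-\infty))t} - 1$ and $e^{(\wR(t;\theta) - \wF(-\infty))t} - 1$ are nonnegative, so the inequality to be proved is equivalent to the version with the modulus signs removed.

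The key step is to convert the order statistic into a statement about $Q$. By definition $R_{(k)}(\theta)$ is the $k$-th smallest of the $N$ losses, so at least the $N-k+1$ losses occupying order-statistic positions $k, k+1, \ldots, N$ satisfy $f(x_i;\theta) \geq R_{(k)}(\theta)$; consequently $Q(R_{(k)}(\theta);\theta) \geq (N-k+1)/N \geq (N-k)/N$. If $R_{(k)}(\theta) = \wF(-\infty)$ the left-hand side of the claim vanishes and there is nothing to prove, so I may assume $a := R_{(k)}(\theta) > \wF(-\infty)$, which is exactly the regime in which the proof of Theorem~\ref{thm:Q} operates. Applying that theorem at this $a$ gives $Q(a;\theta) \leq \bigl(e^{\wR(t;\theta)t} - e^{\wF(-\infty)t}\bigr)/\bigl(e^{at} - e^{\wF(-\infty)t}\bigr)$.

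From here the argument is purely algebraic. For $t>0$ the denominator $e^{at} - e^{\wF(-\infty)t}$ is strictly positive, so clearing it and combining with the lower bound $Q(a;\theta) \geq (N-k)/N$ yields
\[
  \tfrac{N-k}{N}\bigl(e^{R_{(k)}(\theta)t} - e^{\wF(-\infty)t}\bigr) \;\leq\; e^{\wR(t;\theta)t} - e^{\wF(-\infty)t}.
\]
Dividing both sides by $e^{\wF(-\infty)t} > 0$ and then multiplying by $N/(N-k)$ turns this into $e^{(R_{(k)}(\theta) - \wF(-\infty))t} - 1 \leq \tfrac{N}{N-k}\bigl(e^{(\wR(t;\theta) - \wF(-\infty))t} - 1\bigr)$, which is the desired bound once the nonnegative quantities are wrapped back in absolute values.

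I expect the only genuinely delicate point to be the justification that Theorem~\ref{thm:Q} may be invoked at $a = R_{(k)}(\theta)$ even though this value need not lie below $\wF(+\infty)$ for an arbitrary $\theta$; I would address this by observing that the proof of Theorem~\ref{thm:Q} (through Lemma~\ref{lem:Q1} and Lemma~\ref{lem:indicator}) uses only $a > \wF(-\infty)$ and never the upper endpoint, together with the separate trivial treatment of the boundary case $R_{(k)}(\theta) = \wF(-\infty)$. Everything else---the sign bookkeeping that removes the absolute values and the replacement of $Q(R_{(k)}(\theta);\theta)$ by $(N-k)/N$---is routine.
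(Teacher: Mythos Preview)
Your proposal is correct and follows the same approach as the paper: set $a = R_{(k)}(\theta)$ in Theorem~\ref{thm:Q} and use the lower bound $Q(R_{(k)}(\theta);\theta) \geq (N-k)/N$. You have simply filled in details that the paper's one-line proof omits---the sign bookkeeping for the absolute values, the boundary case $R_{(k)}(\theta) = \wF(-\infty)$, and the observation that the proof of Theorem~\ref{thm:Q} only actually requires $a > \wF(-\infty)$.
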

\begin{proof}
    We proceed by setting $Q(a; \theta) = \frac{N-k}{N}$ and $a = R_{(k)} (\theta)$ in Theorem~\ref{thm:Q}, which implies the result.
\end{proof}
While the bound is left implicit in Corollary~\ref{thm:k-bound}, we can obtain an explicit bound if we only consider $t \in \mathbb{R}^+$ (i.e., we are interested in $k$-losses for larger $k$):
\begin{corollary}
For all $k \in \{2, \ldots, N-1\},$ and all $t \in \mathbb{R}^+:$
\begin{equation}
    R_{(k)} (\theta) \leq \wF(-\infty) + \frac{1}{t} \log  \left(\frac{e^{(\wR(t; \theta) - \wF(-\infty))t } - \frac{k}{N}}{1- \frac{k}{N}}  \right).
\end{equation}
\end{corollary}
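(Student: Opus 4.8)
The plan is to obtain this explicit bound directly from the implicit bound in Corollary~\ref{thm:k-bound} by specializing to $t \in \mathbb{R}^+$, stripping the absolute values, and then solving the resulting inequality for $R_{(k)}(\theta)$. No new machinery is needed beyond the earlier corollary and some monotonicity facts already established in the excerpt.

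First I would argue that for $t > 0$ both quantities inside the $|\cdot|$ in Corollary~\ref{thm:k-bound} are nonnegative, so the bars can be dropped without a sign flip. Indeed, $R_{(k)}(\theta) \geq R_{(1)}(\theta) = \widecheck{R}(\theta) \geq \wF(-\infty)$, where the last inequality holds because $\wF(-\infty) = \min_\theta \min_{i \in [N]} f(x_i;\theta)$; hence $(R_{(k)}(\theta) - \wF(-\infty))t \geq 0$ and $e^{(R_{(k)}(\theta) - \wF(-\infty))t} - 1 \geq 0$. Likewise, combining Lemma~\ref{lemma: special_cases} (which gives $\wR(-\infty;\theta) = \widecheck{R}(\theta)$) with Theorem~\ref{thm: obj-increasing} (the tilted objective is nondecreasing in $t$) yields $\wR(t;\theta) \geq \widecheck{R}(\theta) \geq \wF(-\infty)$, so the right-hand exponential term is nonnegative as well. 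Corollary~\ref{thm:k-bound} therefore reduces to
\begin{equation}
   e^{(R_{(k)}(\theta) - \wF(-\infty)) t} - 1 \leq \frac{N}{N-k}\left( e^{(\wR(t;\theta) - \wF(-\infty))t} - 1\right).
\end{equation}

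Next I would isolate $e^{(R_{(k)}(\theta)-\wF(-\infty))t}$: adding $1$ to both sides, placing the right-hand side over the common denominator $N-k$, and then dividing numerator and denominator by $N$ turns the bound into
\begin{equation}
   e^{(R_{(k)}(\theta) - \wF(-\infty)) t} \leq \frac{e^{(\wR(t;\theta) - \wF(-\infty))t} - k/N}{1 - k/N}.
\end{equation}
Taking logarithms, dividing by $t > 0$, and adding $\wF(-\infty)$ then produces exactly the claimed inequality. The one step requiring care — and the reason the result is stated for $t \in \mathbb{R}^+$ rather than for all $t$ — is verifying that the logarithm is well defined: since $k \leq N-1$ the denominator $1 - k/N$ is strictly positive, and since $\wR(t;\theta) \geq \wF(-\infty)$ the numerator satisfies $e^{(\wR(t;\theta)-\wF(-\infty))t} \geq 1 > k/N$, so the argument of the log is positive. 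Monotonicity of $x \mapsto e^{xt}$ and of $\log$ holds in the required direction precisely because $t > 0$, which is also what lets us divide through by $t$ without reversing the inequality. I expect no genuine obstacle here; the entire content of the proof is the sign and positivity bookkeeping that makes each of these manipulations legitimate for positive $t$.
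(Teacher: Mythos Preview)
Your proposal is correct and follows exactly the route the paper intends: the paper's own proof is simply ``the statement follows by algebraic manipulation of Corollary~\ref{thm:k-bound},'' and you have carried out precisely that manipulation with the sign bookkeeping spelled out. One very minor remark: invoking Theorem~\ref{thm: obj-increasing} (stated under Assumption~\ref{assump: expnential_family}) to get $\wR(t;\theta)\geq \wF(-\infty)$ is more than you need; for $t>0$ this follows directly from $\frac{1}{N}\sum_i e^{tf(x_i;\theta)}\geq e^{t\min_i f(x_i;\theta)}\geq e^{t\wF(-\infty)}$, which keeps the argument self-contained within the superquantile section.
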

\begin{proof}
The statement follows by algebraic manipulation of Corollary~\ref{thm:k-bound}.
\end{proof}

\newpage
\section{Algorithms for Solving TERM}
\label{app: solving-TERM}
\vspace{-0.1in}

In the main text, we present TERM in the batch setting (Algorithm~\ref{alg:batch-TERM}). 
Here we provide the stochastic variants of the solvers in the context of hierarchical multi-objective tilting (see Eq.~\eqref{eq: class-TERM}).

There are a few points to note about stochastic solvers (Algorithm~\ref{alg:stochastic-TERM}):
\begin{enumerate}[leftmargin=*]
    \item It is intractable to compute the exact normalization weights for the samples in the minibatch. Hence, we use $\wR_{g, \tau}$, a term that incorporates  stochastic dynamics, to follow the tilted objective for each group $g$, which is used for normalizing the weights as in~\eqref{eq: tilted_grad}.
    \item While we sample the group from which we draw the minibatch, for small number of groups, one might want to draw one minibatch per each group and weight the resulting gradients accordingly.
    \item The second last line in Algorithm~\ref{alg:stochastic-TERM}, concerning the update of $\wR_{g, \tau}$ is not a trivial linear averaging. Instead, we use a tilted averaging to ensure an unbiased estimator (if $\theta$ is not being updated).
\end{enumerate}

\begin{algorithm}[h]
\SetKwInOut{Init}{Initialize}
\SetAlgoLined
\DontPrintSemicolon
\SetNoFillComment
\Init{$\wR_{g, \tau} = 0~  \forall g\in [G]$}
\KwIn{$t, \tau, \alpha, \lambda$}
\While{stopping criteria not reached}{
sample $g$ on $[G]$ from a Gumbel-Softmax distribution with logits $\wR_{g, \tau} + \frac{1}{t}\log |g|$ and temperature {$\frac{1}{t}$}\;
sample minibatch $B$ uniformly at random within group $g$\; compute the loss $f(x; \theta)$ and gradient $\nabla_\theta f(x; \theta)$ for all $x \in B$\;
$\wR_{B, \tau} \gets \text{$\tau$-tilted loss~\eqref{eq: TERM} on minibatch $B$}$\;
$\wR_{g, \tau} \gets \frac{1}{\tau} \log \left((1-\lambda) e^{\tau\wR_{g, \tau}} + \lambda e^{\tau\wR_{B, \tau}}\right)$, $w_{\tau, x} \gets e^{\tau f(x; \theta) - \tau \wR_{g, \tau}}$\;
$\theta \gets \theta - \frac{\alpha}{|B|} \sum_{x\in B} w_{\tau, x} \nabla_{\theta} f(x; \theta)$\;
}
\caption{Stochastic TERM}\label{alg:stochastic-TERM}
\end{algorithm}

The stochatic algorithm above requires roughly the same
time/space complexity as mini-batch SGD, and thus scales similarly for large-scale problems. TERM for the non-hierarchical cases can be recovered from Algorithm~\ref{alg:batch-TERM} and \ref{alg:stochastic-TERM} by setting the inner-level tilt parameter $\tau=0$. For completeness, we also describe them here. Algorithm~\ref{alg:batch-non-h-TERM} is the sample-level tilting algorithm in the batch setting, and Algorithm~\ref{alg:stochastic-non-h-TERM} is its stochastic variant.

\setlength{\textfloatsep}{2pt}
\begin{algorithm}[h]
\SetKwInOut{Init}{Initialize}
\SetAlgoLined
\DontPrintSemicolon
\SetNoFillComment
\KwIn{$t, \alpha$}
\While{stopping criteria not reached}{

compute the loss $f(x_i; \theta)$ and gradient $\nabla_\theta f(x_i; \theta)$ for all $i \in [N]$\;
$\wR(t; \theta) \gets \text{$t$-tilted loss~\eqref{eq: TERM}}$ on all $i \in [N]$\;
$w_i(t; \theta) \gets e^{t(f(x_i; \theta) - \wR(t; \theta))}$\;
$\theta \gets \theta - \frac{\alpha}{N} \sum_{i \in [N]} w_i(t; \theta) \nabla_{\theta}f(x_i; \theta)$\;
}
\caption{Batch Non-Hierarchical TERM}\label{alg:batch-non-h-TERM}
\end{algorithm}

\setlength{\textfloatsep}{0pt}
\begin{algorithm}[t]
\SetKwInOut{Init}{Initialize}
\SetAlgoLined
\DontPrintSemicolon
\SetNoFillComment
\Init{$\wR_{t} = 0$}
\KwIn{$t, \alpha, \lambda$}
\While{stopping criteria not reached}{
sample minibatch $B$ uniformly at random from $[N]$\; compute the loss $f(x; \theta)$ and gradient $\nabla_\theta f(x; \theta)$ for all $x \in B$\;
$\wR_{B, t} \gets \text{$t$-tilted loss~\eqref{eq: TERM} on minibatch $B$}$\;
$\wR_{t} \gets \frac{1}{t} \log \left((1-\lambda) e^{t\wR_{t}} + \lambda e^{t \wR_{B, t}}\right)$, $w_{t, x} \gets e^{t f(x; \theta) - t\wR_{t}}$\;
$\theta \gets \theta - \frac{\alpha}{|B|} \sum_{x\in B} w_{t, x} \nabla_{\theta} f(x; \theta)$\;
}
\caption{Stochastic Non-Hierarchical TERM}\label{alg:stochastic-non-h-TERM}
\end{algorithm}

In order to verify the correctness of Algorithm~\ref{alg:stochastic-non-h-TERM}, we plot the distance of the solution $\theta$ to the  solution $\theta^*$ obtained by running the full gradient method (Algorithm~\ref{alg:batch-TERM}) in terms of the number of iterations. In Figure~\ref{fig:alg4}, we see that $\theta$ can be arbitrarily close to $\theta^*$,
and Algorithm~\ref{alg:stochastic-non-h-TERM} with $t \neq 0$ converges similarly to mini-batch SGD with $t=0$. As mentioned in the main text, theoretically analyze the convergence of stochastic solvers would be interesting direction of future work. The challenges would be to characterize the tightness of the estimator $\wR$ to the true risk $R$ at each iteration leveraging the proposed tilted averaging structure.

We summarize the applications we solve TERM for and the algorithms we use in Table~\ref{table: alg_choice} below on the left.

\begin{minipage}{\textwidth}
\begin{minipage}{.53\textwidth}
  \centering
  \includegraphics[width=.67\linewidth]{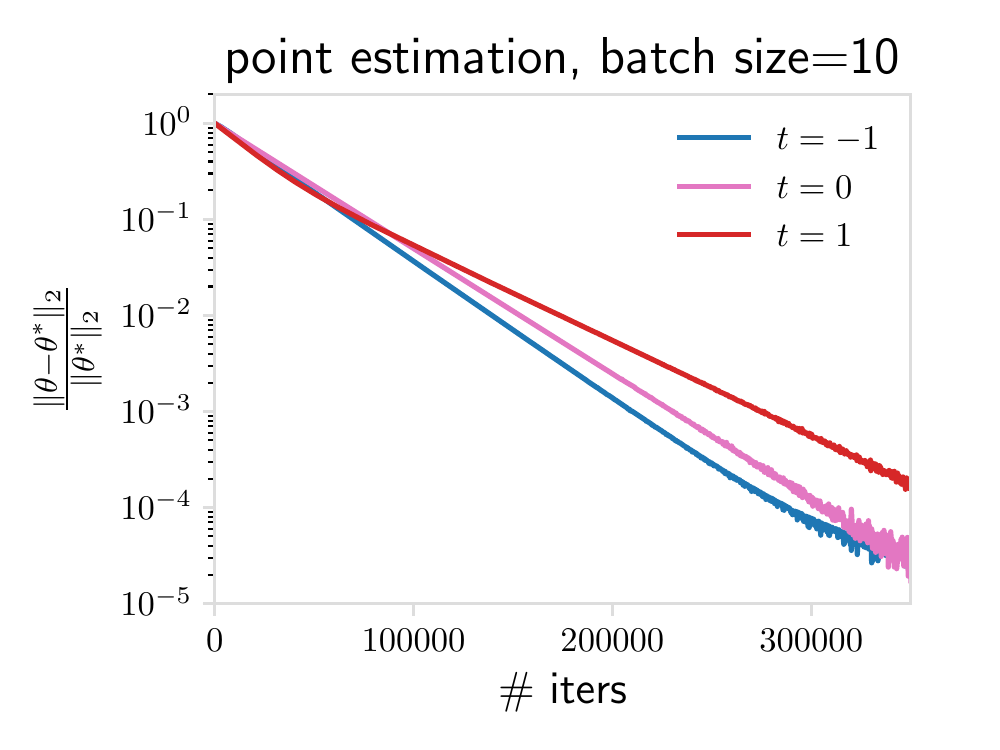}
  \vspace{-0.1in}
  \captionof{figure}{\small Correctness of Algorithm~\ref{alg:stochastic-non-h-TERM}. The y-axis is the normalized distance between $\theta$ obtained by running Alg.~\ref{alg:stochastic-non-h-TERM} and the optimal solution $\theta^*$ via the full batch method. For different values of $t$, $\theta$ can be arbitrarily close to $\theta^*$.}
  \label{fig:alg4}
\end{minipage}%
\hfill
\begin{minipage}{0.45\textwidth}
\centering
\scalebox{0.8}{
    \begin{tabular}{l|l}
      Three toy examples (Figure~\ref{fig:example})   & Algorithm~\ref{alg:batch-non-h-TERM} \\
       Robust Regression (Table~\ref{table: outlier})  & Algorithm~\ref{alg:batch-non-h-TERM} \\ 
       Robust Classification (Table~\ref{table:label_noise_cnn})  & Algorithm~\ref{alg:stochastic-non-h-TERM} \\ 
       Low-quality Annotators (Figure~\ref{fig:noisy_annotator})  & Algorithm~\ref{alg:stochastic-TERM} ($\tau=0$) \\ 
       Fair PCA (Figure~\ref{fig:pca}) & Algorithm~\ref{alg:batch-TERM} ($\tau=0$) \\
       Class Imbalance (Figure~\ref{fig:class_imabalance}) & Algorithm~\ref{alg:stochastic-TERM} ($\tau=0$) \\
       Variance Reduction (Table~\ref{table: dro}) & Algorithm~\ref{alg:batch-TERM} ($\tau=0$) \\
       Hierarchical TERM (Table~\ref{table:hierarchical}) & Algorithm~\ref{alg:batch-TERM}\\
    \end{tabular}}
    \vspace{2em}
    \captionof{table}{\small Applications and their corresponding solvers.}
    \label{table: alg_choice}
\vspace{0.15in}
\end{minipage}
\end{minipage}

\subsection{Convergence with $t$} 

First, we note that $t$-tilted losses are $\beta(t)$-smooth for all $t$.
In a small neighborhood around the tilted solution, $\beta(t)$ is bounded for all negative  $t$ and moderately positive $t$, whereas it scales linearly with $t$ as $t \to +\infty$, which has been previously studied in the context of exponential smoothing of the max~\citep{kort1972new, pee2011solving}. 
We prove this formally in Appendix~\ref{app:general-TERM-properties}, Lemma~\ref{lemma:TERM-smoothness}, but it can also be observed visually via the toy example in Figure~\ref{fig:toy1}.
{
Based on this, we provide a convergence result below for Algorithm~\ref{alg:batch-non-h-TERM}.
\begin{theorem}\label{thm: convergence}
Under Assumption~\ref{assump: regularity}, there exist $C_1, C_2<\infty$ that do not depend on $t$ such that for any $t\in \mathbb{R}^+,$ setting the step size $\alpha = \frac{1}{C_1 + C_2t},$ after $m$ iterations:
\begin{equation}
    \wR(t, \theta_m) - \wR(t, \breve{\theta}(t)) \leq \left(1 - \frac{\beta_{\min}}{C_1 + C_2t}\right)^m \left(  \wR(t, \theta_0) - \wR(t, \breve{\theta}(t)) \right).
\end{equation}
\end{theorem}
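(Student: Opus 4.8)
The plan is to recognize Algorithm~\ref{alg:batch-non-h-TERM} as vanilla gradient descent on the objective $\wR(t; \cdot)$ and to invoke the textbook linear-convergence guarantee for gradient descent on a smooth, strongly convex function. Two ingredients are needed: a strong-convexity modulus that is uniform in $t$, and a smoothness modulus that grows at most linearly in $t$; the contraction factor $1 - \mu/\beta$ then delivers the stated rate.

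First I would fix the strong-convexity constant. By Lemma~\ref{lemma:TERM-convexity}, for every $t \in \mathbb{R}^+$ and every $\theta \in \Theta$ we have $\nabla^2_{\theta\theta^\top}\wR(t;\theta) \succ \beta_{\min}\mathbf{I}$, so $\wR(t;\cdot)$ is $\beta_{\min}$-strongly convex with modulus $\mu := \beta_{\min}$ independent of $t$. Next I would extract the linear-in-$t$ smoothness bound. Lemma~\ref{lemma:TERM-smoothness} shows $\wR(t;\cdot)$ is $\beta(t)$-smooth with $\lim_{t\to+\infty}\beta(t)/t$ finite; equivalently, reading off the Hessian from Lemma~\ref{lem: Hessian}, the curvature decomposes into the term~\eqref{eq:smoothness-24}, bounded above by $\beta_{\max}\mathbf{I}$, plus the positive-semidefinite term~\eqref{eq:smoothness-23}, whose largest eigenvalue scales linearly in $t$. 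This yields constants $C_1, C_2 < \infty$, independent of $t$, with $\beta(t) \le C_1 + C_2 t$ for all $t \in \mathbb{R}^+$.

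With these in hand I would run the standard analysis. Writing $\beta := C_1 + C_2 t$ and using the prescribed step size $\alpha = 1/\beta$, the descent lemma for a $\beta$-smooth function gives
\begin{equation}
\wR(t;\theta_{m+1}) \le \wR(t;\theta_m) - \frac{1}{2\beta}\|\nabla_\theta \wR(t;\theta_m)\|_2^2 .
\end{equation}
Strong convexity implies the Polyak--\L ojasiewicz inequality $\|\nabla_\theta \wR(t;\theta_m)\|_2^2 \ge 2\mu\,(\wR(t;\theta_m) - \wR(t;\breve{\theta}(t)))$, and combining the two estimates yields the one-step contraction
\begin{equation}
\wR(t;\theta_{m+1}) - \wR(t;\breve{\theta}(t)) \le \left(1 - \frac{\mu}{\beta}\right)\left(\wR(t;\theta_m) - \wR(t;\breve{\theta}(t))\right).
\end{equation}
Iterating $m$ times and substituting $\mu = \beta_{\min}$ and $\beta = C_1 + C_2 t$ gives exactly the claimed bound.

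The main obstacle is that the smoothness parameter $\beta(t)$ in Lemma~\ref{lemma:TERM-smoothness} is defined \emph{locally}, as a supremum over a ball $\mathcal{B}(\breve{\theta}(t), r)$, whereas the descent lemma in the last step needs the smoothness estimate to hold along the entire trajectory of iterates. Bridging local smoothness to a global descent guarantee is the delicate point: I would argue either that the Hessian bound of Lemma~\ref{lem: Hessian} in fact holds on all of $\Theta$, so that $\wR(t;\cdot)$ is globally $(C_1 + C_2 t)$-smooth, or that gradient descent with step $\alpha \le 1/\beta$ is monotonically decreasing, so that all iterates remain in a sublevel set on which the linear smoothness bound is valid, letting the contraction apply at every step.
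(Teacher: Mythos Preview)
Your proposal is correct and matches the paper's approach exactly: the paper invokes Lemma~\ref{lemma:TERM-convexity} for $\beta_{\min}$-strong convexity, Lemma~\ref{lemma:TERM-smoothness} for the $(C_1 + C_2 t)$-smoothness, and then simply cites \citep{karimi2016linear}[Theorem~1] for the PL-based linear contraction that you spell out explicitly. Your concern about local versus global smoothness is legitimate, but the paper does not address it either---it treats the Lipschitz-gradient bound as global when citing the external convergence result.
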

\begin{proof}
First observe that by Lemma~\ref{lemma:TERM-convexity}, $\wR(t, \theta)$ is $\beta_{\min}$-strongly convex for all $t\in \mathbb{R}^+.$ Next, notice that by Lemma~\ref{lemma:TERM-smoothness}, there exist $C_1, C_2 <\infty$ such that $\wR(t; \theta)$ has a $(C_1 + C_2t)$-Lipschitz gradient for all $t \in \mathbb{R}^+.$ Then, the result follows directly from~\citep{karimi2016linear}[Theorem 1].
\end{proof}
}

\begin{wrapfigure}[12]{r}{0.5\textwidth}
  \centering
  \vspace{-0.2in}
  \includegraphics[trim=0 0 0 15,clip, width=0.3\textwidth]{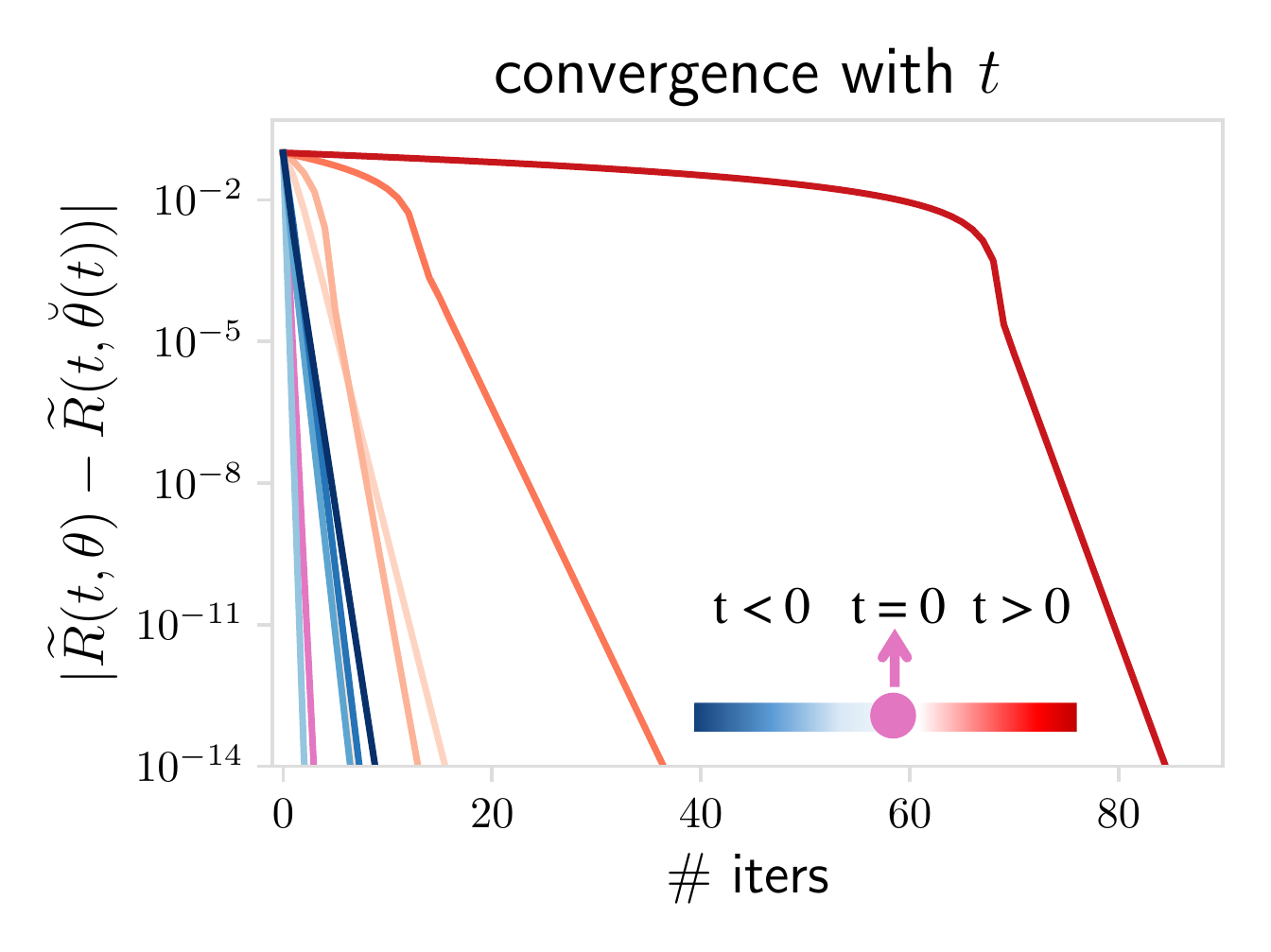}
  \vspace{-0.1in}
  \caption{\small As $t\to +\infty$, the objective becomes less smooth in the vicinity of the final solution, hence suffering from slower convergence. For negative values of $t$, TERM converges fast due to the smoothness in the vicinity of solutions despite its non-convexity. }
  \label{fig:solving}
\vspace{0.1in}
\end{wrapfigure}

Theorem~\ref{thm: convergence} indicates that solving TERM to a local optimum using gradient-based methods will tend to be as efficient as traditional ERM for small-to-moderate values of $t$~\citep{jin2017escape}, which we corroborate via experiments on multiple real-world datasets in Section~\ref{sec:experiments}. This is in contrast to solving for the min-max solution, which would be similar to solving TERM as $t \to +\infty$~\citep{kort1972new,pee2011solving, ostrovskii2020efficient}.

Second, recall that the $t$-tilted loss remains strongly convex for $t>0,$ so long as the original loss function is strongly convex. On the other hand, for sufficiently large negative $t,$ the $t$-tilted loss becomes non-convex.
Hence, while the $t$-tilted solutions for positive $t$ are unique, the objective may have multiple (spurious) local minima for negative $t$ even if the original loss function is strongly convex. 
For negative $t$, we seek the solution for which the parametric set of $t$-tilted solutions obtained by sweeping $t \in \mathbb{R}$ remains continuous (as in Figure~\ref{fig:example}a-c). To this end, for negative $t$, we solve TERM by smoothly decreasing $t$ from $0$ ensuring that the solutions form a continuum in $\mathbb{R}^d$. Despite the non-convexity of TERM with $t < 0$, we find that this approach produces effective solutions to multiple real-world problems in Section~\ref{sec:experiments}. Additionally, as the objective remains smooth, it is still relatively efficient to solve. We plot the convergence with $t$ on a toy problem in Figure~\ref{fig:solving}.

\newpage
\section{Additional Experiments}
\label{appen:exp_full}
In this section we provide complete experimental results showcasing the properties of TERM (Appendix~\ref{app:fullexp:properties}) and the use-cases covered in Section~\ref{sec:experiments} (Appendix~\ref{appen:complete_exp}). Details on how the experiments themselves were executed are provided in Appendix~\ref{appen:exp}.

\subsection{Experiments to showcase properties of TERM}
\label{app:fullexp:properties}

Recall that in Section~\ref{sec:term_properties},  Interpretation 1 is that TERM can be tuned to re-weight samples to magnify or suppress the influence of outliers. In Figure~\ref{fig:highlight_samples} below, we visually show this effect by highlighting the samples with the largest weight for $t\to +\infty$ and $t \to -\infty$ on the logistic regression example previously described in Figure~\ref{fig:example}. 
\begin{figure}[h]
    \centering
    \begin{subfigure}{0.48\textwidth}
        \centering
        \includegraphics[width=0.7\textwidth]{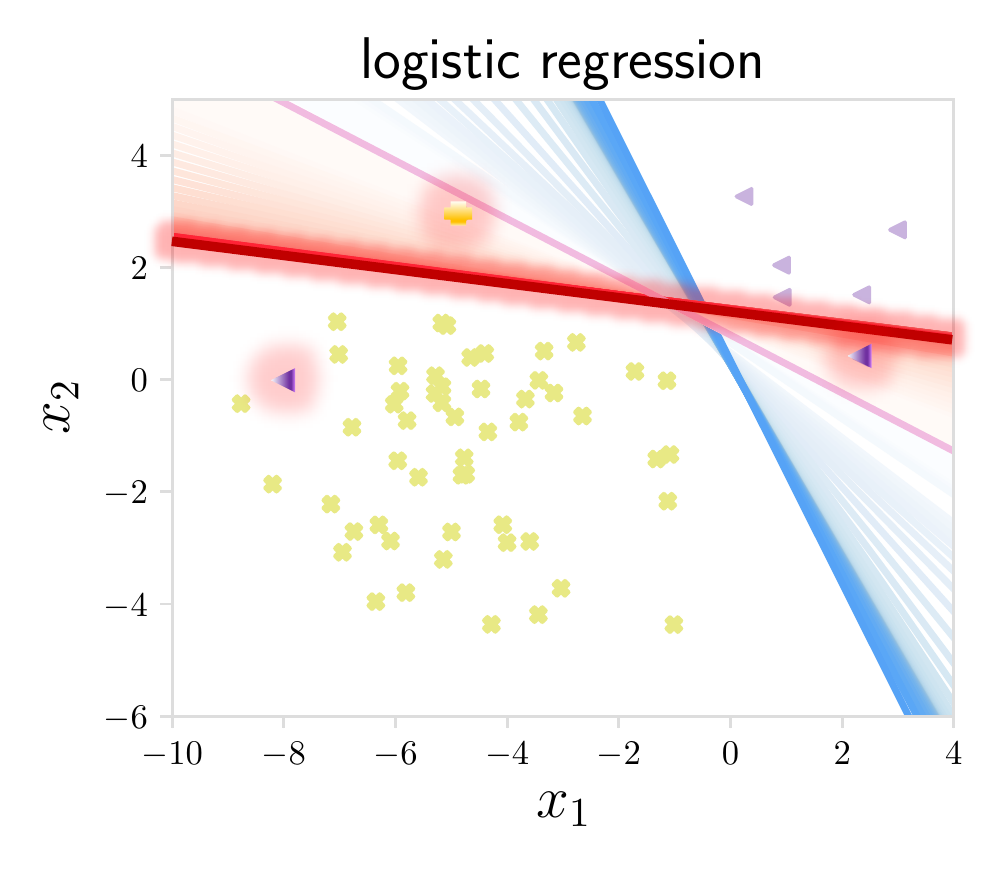}
        \caption{Samples with the largest weights when $t \to + \infty$.}
    \end{subfigure}
    \hfill
    \begin{subfigure}{0.48\textwidth}
        \centering
        \includegraphics[width=0.7\textwidth]{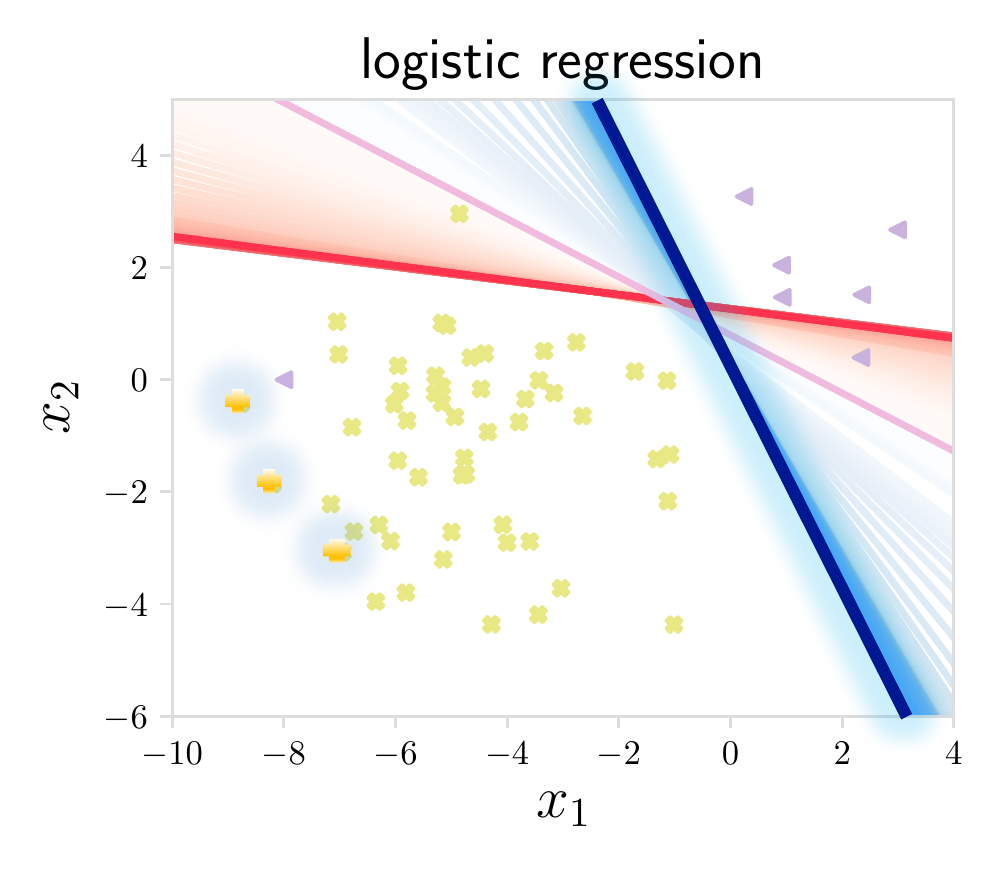}
        \caption{Samples with the largest weights when $t \to - \infty$.
        }
    \end{subfigure}
    \caption{\small For positive values of $t$, TERM focuses on the samples with relatively large losses (rare instances). When $t \to +\infty$ (left), a few misclassified samples have the largest weights and are highlighted. On the other hand, for negative values of $t$, TERM suppresses the effect of the outliers, and as $t \to -\infty$ (right), samples with the smallest losses hold the the largest weights. }
    \label{fig:highlight_samples}
\end{figure}

\vspace{-4mm}
Interpretation 2 is concerned with smooth tradeoffs between the average-loss and max/min-loss. In Figure~\ref{fig:property_loss_tradeoff} below, we show that (1) tilted solutions with positive $t$'s achieve a smooth tradeoff between average-loss and max-loss, (2) similarly, negative $t$'s result in a smooth tradeoff between average-loss and min-loss, and (3) increasing $t$ from $-\infty$ to $+\infty$ reduces the variance of the losses.

\vspace{-0.2in}
\begin{figure}[b!]
    \centering
     \begin{subfigure}{\textwidth}
    \includegraphics[width=\textwidth]{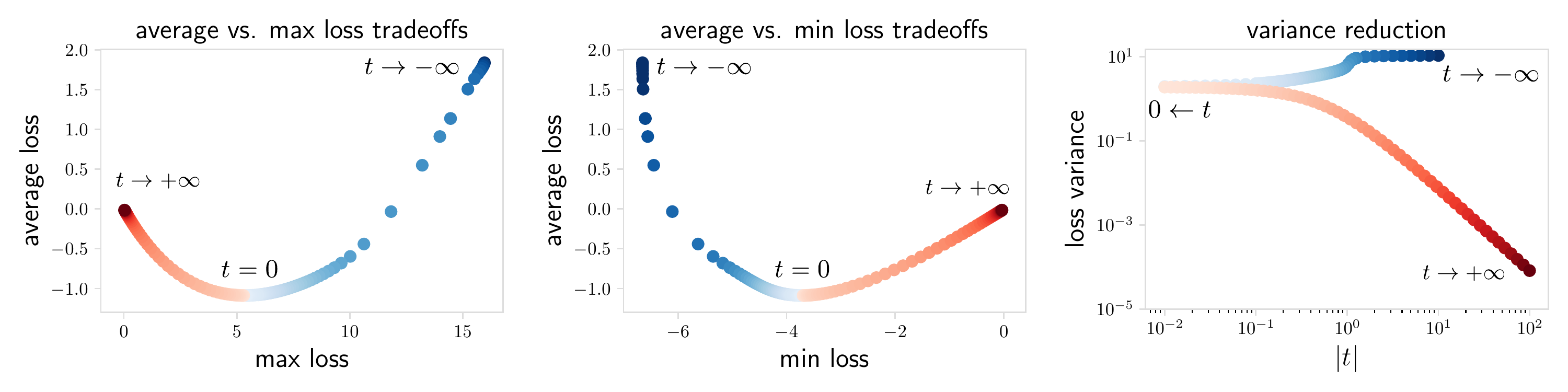}
    \end{subfigure}
    \begin{subfigure}{\textwidth}
    \includegraphics[width=\textwidth]{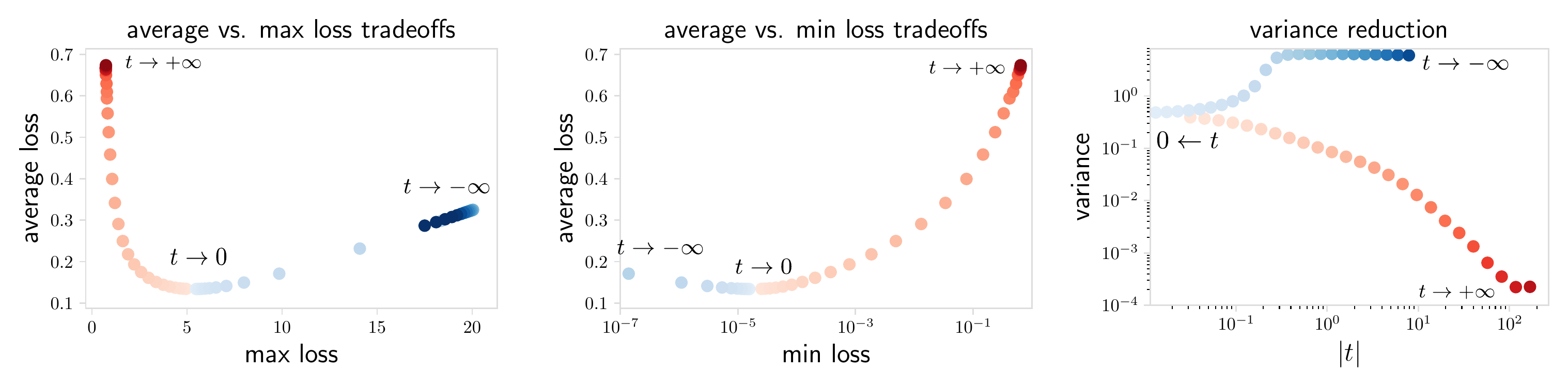}
    \end{subfigure}
    \caption{\small The tradeoffs between the average-loss and the max/min-loss offered by TERM on the point estimation (top) and logistic regression (bottom) toy examples presented in Figure~\ref{fig:example}, empirically validating Theorems~\ref{thm:Lambda}--~\ref{thm: variance-reduction}. Positive values of $t$ trade the average-loss for the max-loss, while negative values of $t$ trade the average-loss for the min-loss. Increasing $t$ from $-\infty$ to $+\infty$ results in the reduction of loss variance, allowing the solution to tradeoff between bias/variance and potentially improve  generalization.}
    \label{fig:property_loss_tradeoff}
\end{figure}

\subsection{Complete case studies}\label{appen:complete_exp}

Here we provide complete results obtained from applying TERM to a diverse set of applications. We either present full metrics of the empirical results discussed in Section~\ref{sec:experiments}, or provide additional experiments demonstrating the effects of TERM in new settings.

\paragraph{Robust regression.}

In Section~\ref{sec:exp:robsut_regression}, we focused on  noise scenarios with random label noise. Here, we present results involving both feature noise and target noise.
We investigate the performance of TERM on two datasets (cal-housing~\citep{pace1997sparse} and abalone~\citep{dua2019uci}) used in~\citep{yu2012polynomial}. Both datasets have features with 8 dimensions. We generate noisy samples following the setup in~\citep{yu2012polynomial}---sampling 100 training samples, and randomly corrupting 5\% of them by multiplying their features by 100 and multiply their targets by 10,000.  From Table~\ref{table: outlier_setting2} below, we see that TERM significantly outperforms the baseline objectives in the noisy regime on both datasets.

\begin{table}[h]
	\caption{\small An alternative noise setup involving both feature noise and label noise. Similarly, TERM with $t<0$ significantly outperforms several baseline objectives for noisy outlier mitigation.
	}
	\centering
	\label{table: outlier_setting2}
	\scalebox{0.9}{
	\begin{tabular}{ l  ll | ll} 
			\toprule[\heavyrulewidth]
			 \multicolumn{1}{l}{\multirow{2}{*}{ objectives}} & \multicolumn{2}{c|}{test RMSE ({\fontfamily{qcr}\selectfont{cal-housing}})}  & \multicolumn{2}{c}{test RMSE  ({\fontfamily{qcr}\selectfont{abalone}})}  \\
			 \cmidrule(r){2-5}
            &      clean         & noisy     & clean             & noisy  \\
            \hline
            ERM	   &     0.766 {\tiny(0.023)}	& 239   {\tiny(9)}  &  2.444 {\tiny (0.105)}	&  1013  {\tiny (72)  }  \\
            $L_1$	   &     0.759 {\tiny(0.019)}	& 139   {\tiny(11)} &   2.435 {\tiny (0.021)}	&  1008  {\tiny (117) }  \\
            Huber~\citep{Huber-loss}	   &     0.762 {\tiny(0.009)}	& 163   {\tiny(7)}  &  2.449 {\tiny (0.018)}	&  922   {\tiny (45) } \\
            CRR~\citep{bhatia2017consistent} & 0.766 {\tiny (0.024)} & 245 {\tiny (8)} & 2.444 {\tiny (0.021)} & 986 {\tiny (146)} \\
            \rowcolor{myblue}
            TERM 	   &      0.745 {\tiny(0.007)}	& {\textbf{0.753} {\tiny(0.016)}}	 &  2.477 {\tiny (0.041)}	&  \textbf{2.449} {\tiny (0.028)} \\
            \hline
            Genie ERM & 0.766 {\tiny(0.023)} & 0.766 {\tiny (0.028)} & 2.444 {\tiny (0.105)} & 2.450 {\tiny (0.109)} \\
			\bottomrule
	\end{tabular}}
\vspace{0.15in}
\end{table}

We also provide results on synthetic data  across different noise levels in two settings. In Figure~\ref{fig:robust_regression_synthetic}, the mean of the noise is different from the mean of the clean data, and in Figure~\ref{fig:robust_regression_synthetic2}, the mean of two groups of data are the same. Similarly, TERM ($t=-2$) can effectively remove outliers in the presence of random noise.

\begin{figure}[h]
    \centering
    \includegraphics[width=0.98\textwidth]{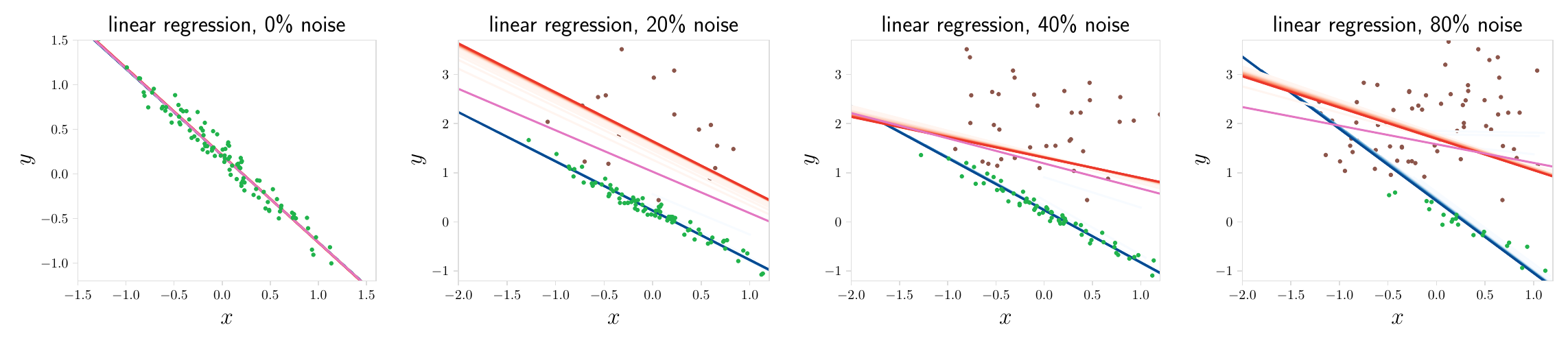}
    \caption{\small {Robust regression on synthetic data. In the presence of random noise, TERM with negative $t$'s (blue, $t=-2$) can fit structured clean data at all noise levels, while ERM (purple) and TERM with positive $t$'s (red) overfit to corrupted data. We color inliers in green and outliers in brown for visualization purposes.}}
    \label{fig:robust_regression_synthetic}
\end{figure}

\begin{figure}[h]
    \centering
    \includegraphics[width=0.98\textwidth]{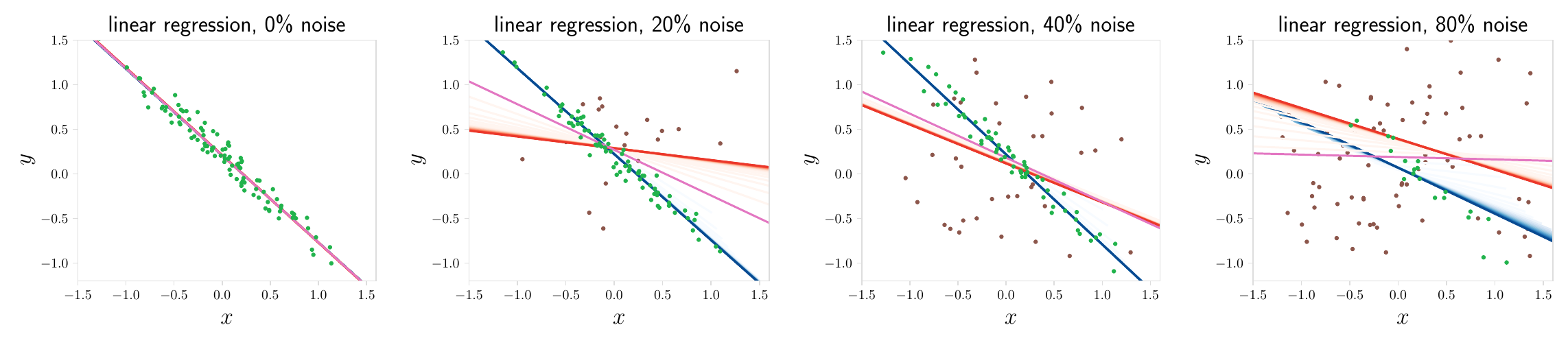}
    \caption{\small {In the presence of random noise with the same mean as that of clean data, TERM with negative $t$'s (blue) can still surpass outliers in all cases, while ERM (purple) and TERM with positive $t$'s (red) overfit to corrupted data.  While the performance drops for 80\% noise, TERM can still learn useful information, and achieves much lower error than ERM.}}
    \label{fig:robust_regression_synthetic2}
\end{figure}

\paragraph{Robust classification.} Recall that in Section~\ref{sec:exp:robsut_regression}, for classification in the presence of label noise, we only compare with baselines which do not require clean validation data. In Table~\ref{table:label_noise_cnn_full} below, we report the complete results of comparing TERM with all baselines, including MentorNet-DD~\citep{jiang2018mentornet} which needs additional clean data. In particular, in contrast to the other methods, MentorNet-DD uses 5,000 clean validation images. TERM is competitive with can even exceed the performance of MentorNet-DD, even though it does not have access to this clean data.

\begin{table}[h]
\caption{\small A complete comparison including two MentorNet variants. TERM is able to match the performance of MentorNet-DD, which needs additional clean labels.
}
\centering
\label{table:label_noise_cnn_full}
\scalebox{0.9}{
\begin{tabular}{lcccc}
	   \toprule[\heavyrulewidth]
        \multicolumn{1}{l}{\multirow{2}{*}{objectives}} & \multicolumn{4}{c}{test accuracy ({\fontfamily{qcr}\selectfont{CIFAR-10}}, Inception)} \\
        \cmidrule(r){2-5}
          &  20\% noise & 40\% noise & 80\% noise\\
        \midrule
 \rule{0pt}{2ex}ERM & 0.775 {\tiny (.004)} &  0.719 {\tiny (.004)} & 0.284 {\tiny (.004)}  \\
 RandomRect~\citep{ren2018learning} & 0.744 {\tiny (.004)} & 0.699 {\tiny (.005)} & 0.384 {\tiny (.005)}  \\
SelfPaced~\citep{kumar2010self} & 0.784 {\tiny (.004)} & 0.733 {\tiny (.004)} & 0.272 {\tiny (.004)}  \\
 MentorNet-PD~\citep{jiang2018mentornet} & 0.798 {\tiny (.004)} & 0.731 {\tiny (.004)} & 0.312 {\tiny (.005)} \\
 GCE~\citep{zhang2018generalized} & \textbf{0.805} {\tiny (.004)}  & 0.750 {\tiny (.004)} & 0.433 {\tiny (.005)} \\
 MentorNet-DD~\citep{jiang2018mentornet} & \textbf{0.800} {\tiny (.004)} & \textbf{0.763} {\tiny (.004)} & \textbf{0.461}{\tiny (.005)} \\
 \rowcolor{myblue}
 TERM & 0.795 {\tiny (.004)}  & \textbf{0.768} {\tiny (.004)} & \textbf{0.455} {\tiny (.005)} \\
 \hline 
 \rule{0pt}{2ex}Genie ERM & 0.828 {\tiny (.004)} & 0.820 {\tiny (.004)} & 0.792 {\tiny (.004)} \\
\bottomrule[\heavyrulewidth]
\end{tabular}}
\end{table}

To interpret the noise more easily, we provide a toy logistic regression example with synthetic data here. In Figure~\ref{fig:robust_classification_synthetic}, we see that TERM with $t=-2$ (blue) can converge to the correct classifier under 20\%, 40\%, and 80\% noise.

\begin{figure}[h]
    \centering
    \includegraphics[width=\textwidth]{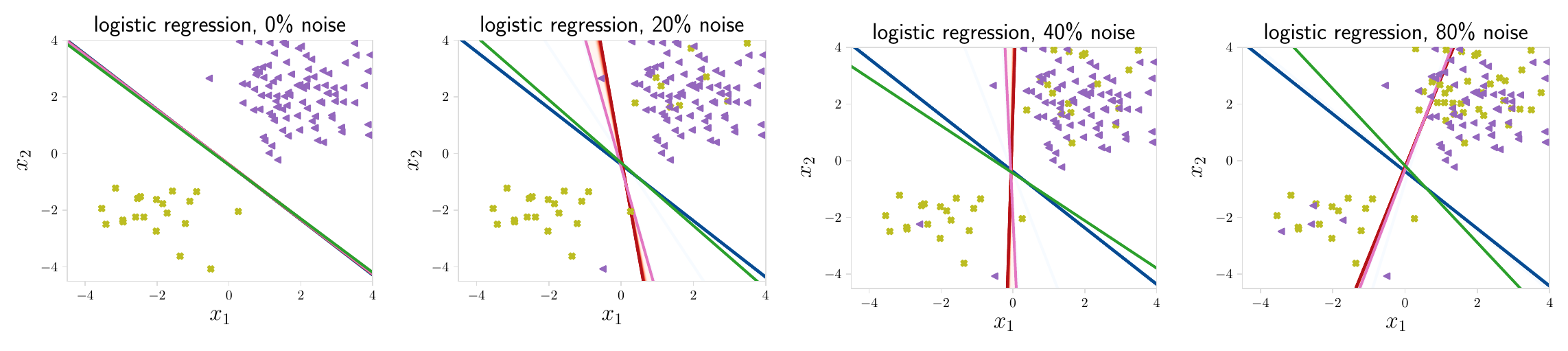}
    \caption{\small {Robust classification using synthetic data. On this toy problem, we show that TERM with negative $t$'s (blue) can be robust to random noisy samples. The green line corresponds to the solution of the generalized cross entropy (GCE) baseline~\citep{zhang2018generalized}. Note that on this toy problem, GCE is as good as TERM with negative $t$'s, despite its inferior performance on the real-world CIFAR10 dataset.}}
    \label{fig:robust_classification_synthetic}
\end{figure}

{(\textbf{Adversarial or structured noise}.) As a word of caution, we note that the experiments thus far have focused on random noise; as one might expect, TERM with negative $t$'s could potentially overfit to outliers if they are constructed in an adversarial way. In the examples shown in Figure~\ref{fig:adversarial_noise}, under 40\% noise and 80\% noise, TERM has a high error measured on the clean data (green dots).}

\begin{figure}[h]
    \centering
    \includegraphics[width=1.0\textwidth]{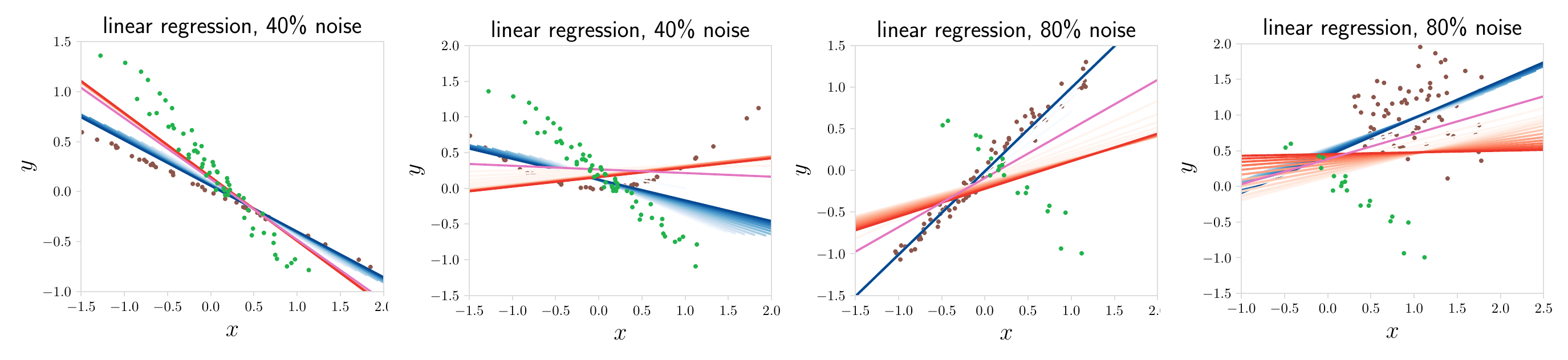}
    \caption{\small {TERM with negative $t$'s (blue) cannot fit clean data if the noisy samples (brown) are adversarial or structured in a manner that differs substantially from the underlying true distribution.}}
    \label{fig:adversarial_noise}
\end{figure}

\begin{figure}[h]
\centering
\begin{minipage}{.45\textwidth}
  \centering
  \includegraphics[width=0.9\linewidth]{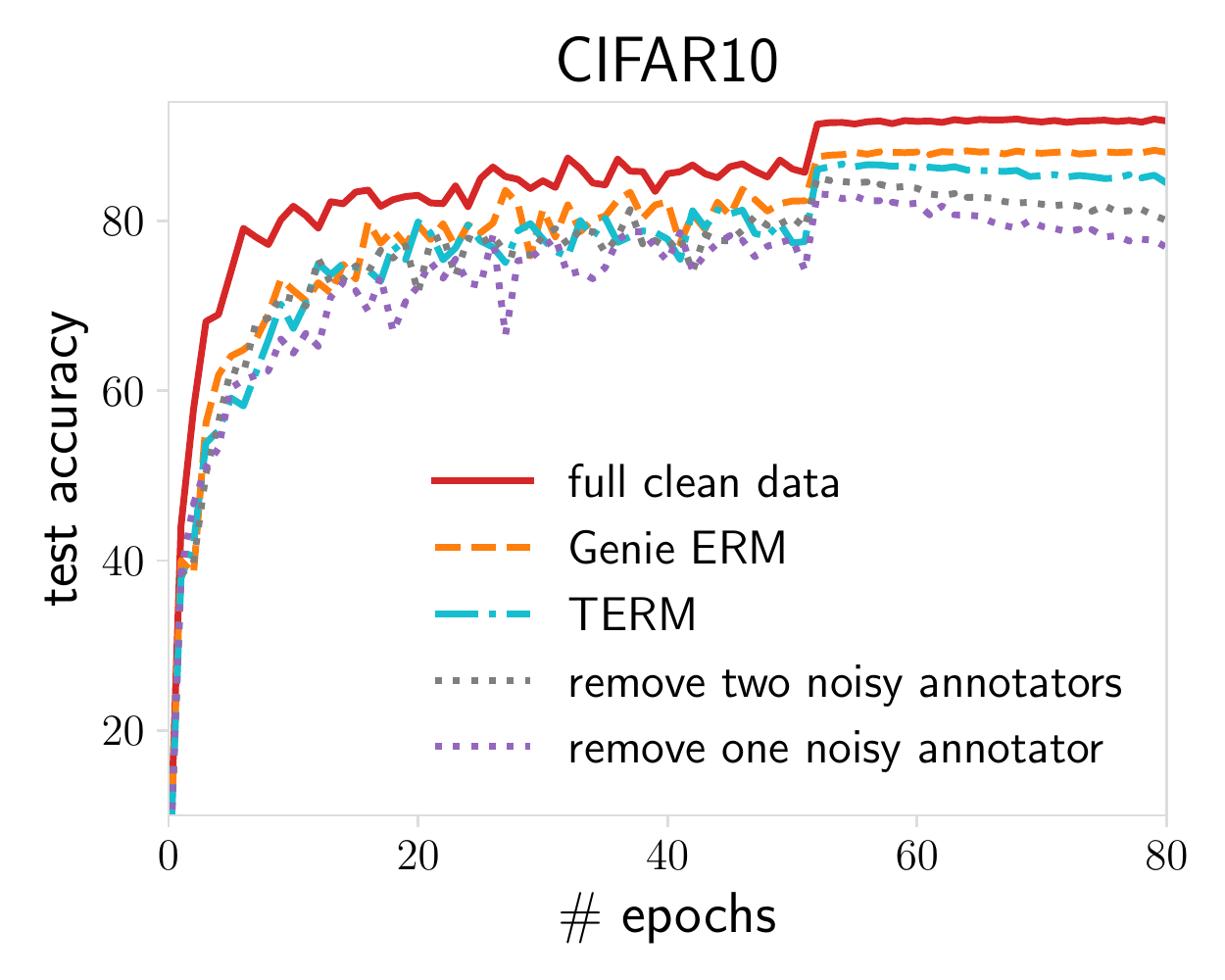}
  \captionof{figure}{\small TERM achieves higher test accuracy  than the baselines, and can match the performance of Genie ERM (i.e., training on all the clean data combined).}
  \label{fig:noisy_annotator_2}
\end{minipage}
\hfill
\begin{minipage}{.45\textwidth}
        \centering
        \includegraphics[width=0.9\textwidth]{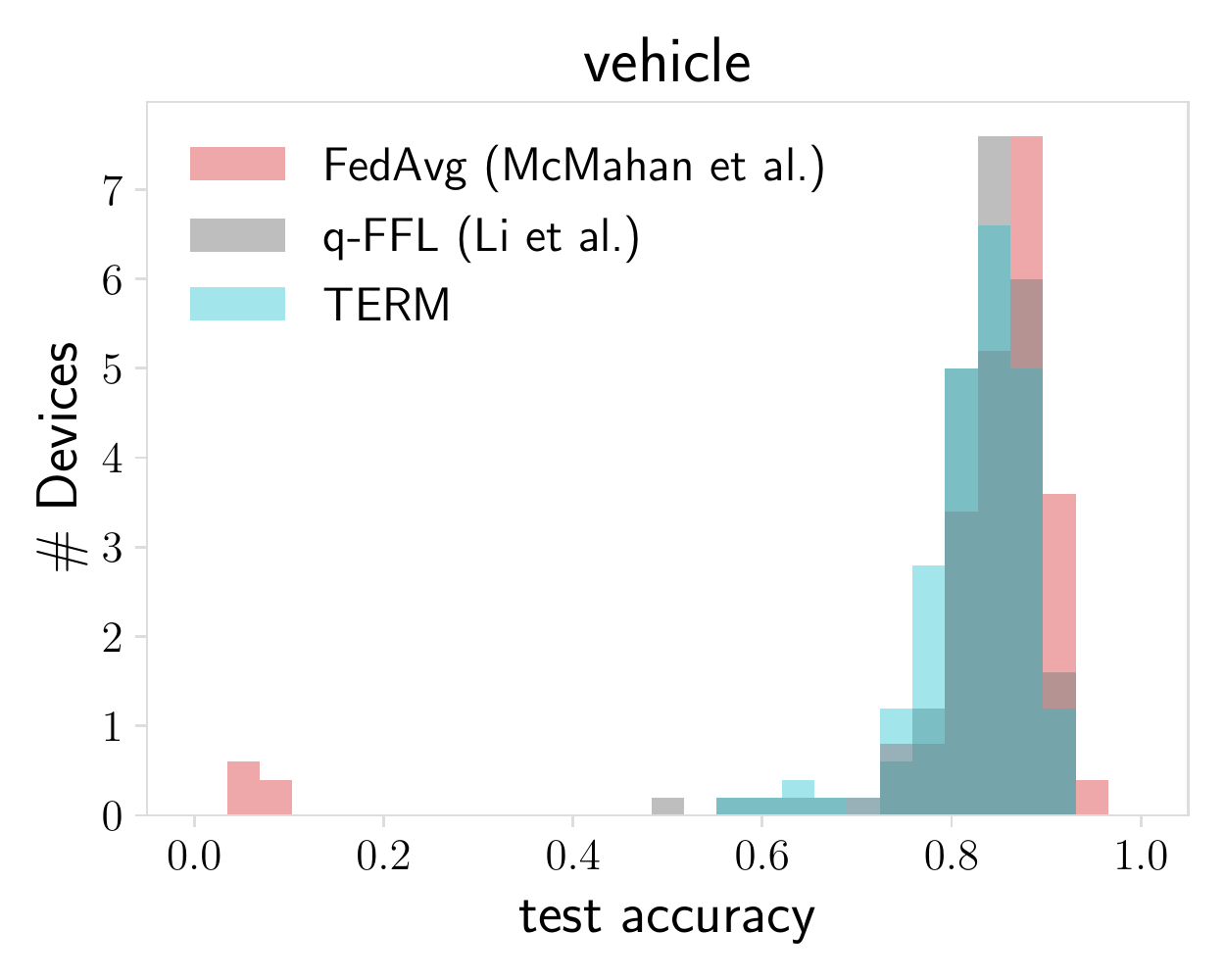}
        \captionof{figure}{\small TERM FL ($t = 0.1$) significantly increases the accuracy on the worst-performing device (similar to $q$-FFL~\citep{li2019fair}) while obtaining a similar average accuracy (Table~\ref{table: fair_flearn}).}               
        \label{fig:flearn}
\end{minipage}
\vspace{0.15in}
\end{figure}

\paragraph{Low-quality annotators.} In Section~\ref{sec:exp:noisy_annotator}, we demonstrate that TERM can be used to mitigate the effect of noisy annotators, and we assume each annotator is either always correct, or always uniformly assigning random labels. 
Here, we explore a different and possibly more practical scenario where there are four noisy annotators who corrupt 0\%, 20\%, 40\%, and 100\% of their data by assigning labels uniformly at random, and there is one additional adversarial annotator who always assigns wrong labels. We assume the data points labeled by each annotator do not overlap, since~\citep{khetan2017learning} show that obtaining one label per sample is optimal for the data collectors under a fixed annotation budget. We compare TERM with several baselines: (a) training without the data coming from the adversarial annotator, (b) training without the data coming from the worst two annotators, and (c) training with all the clean data combined (Genie ERM). The results are shown in Figure~\ref{fig:noisy_annotator_2}. We see that TERM outperforms the strong baselines of removing one or two noisy annotators, and closely matches the performance of training with all the available clean data.

{\bf Fair federated learning.} Federated learning involves learning statistical models across massively distributed networks of remote devices or isolated organizations~\citep{mcmahan2016FedAvg,li2020survey}. Ensuring fair (i.e., uniform) performance distribution across the devices is a major concern in federated settings~\citep{mohri2019agnostic, li2019fair}, as using current approaches for federated learning (FedAvg~\citep{mcmahan2016FedAvg}) may result in highly variable performance across the network.~\citep{li2019fair} consider solving an alternate objective for federated learning, called $q$-FFL, to dynamically emphasize the worst-performing devices, which is conceptually similar to the goal of TERM, though it is applied specifically to the problem of federated learning and limited to the case of positive $t$. 
Here, we compare TERM with $q$-FFL in their setup on the vehicle dataset~\citep{duarte2004vehicle} consisting of data collected from 23 distributed sensors (hence 23 devices). We tilt the $L_2$ regularized linear SVM objective at the device level. At each communication round, we re-weight the accumulated local model updates from each selected device based on the weights estimated via Algorithm~\ref{alg:stochastic-TERM}. From Figure~\ref{fig:flearn}, we see that similar to $q$-FFL, TERM ($t=0.1$) can also significantly promote the accuracy on the worst device while maintaining the overall performance. 
The statistics of the accuracy distribution are reported in Table~\ref{table: fair_flearn} below.

\begin{table}[h]
	\caption{\small Both $q$-FFL and TERM can encourage more uniform accuracy distributions across the devices in federated networks while maintaining similar average performance.} 
	\centering
	\label{table: fair_flearn}
	\begin{tabular}{ l cccc} 
		\toprule
         \multirow{2}{*}{objectives} & \multicolumn{4}{c}{test accuracy} \\
        \cmidrule(r){2-5}
		  & average &  worst 10\% & best 10\% & standard deviation \\
		\hline
		 FedAvg & 0.853 {\tiny (0.173)} & 0.421 {\tiny (0.016)} & {\bf 0.951} {\tiny (0.008)} & 0.173 {\tiny  (0.003)} \\
		$q$-FFL ($q=5$) & 0.862 {\tiny (0.065)} & {\bf 0.704} {\tiny (0.033)} & 0.929 {\tiny (0.006)} & {\bf 0.064} {\tiny (0.011)}\\
		\rowcolor{myblue}
		TERM ($t=0.1$) & 0.853 {\tiny (0.061)} & {\bf 0.707} {\tiny (0.021)} &  0.926 {\tiny (0.006)} & {\bf 0.061} {\tiny (0.006)} \\
		\bottomrule
	\end{tabular}
	\vspace{1em}
\end{table}

\paragraph{Improving generalization via variance reduction.}
We compare TERM (applied at the class-level as in~\eqref{eq: class-TERM}, with logistic loss) with robustly regularized risk (RobustRegRisk) as in~\citep{namkoong2017variance}  on the HIV-1~\citep{hiv1,dua2019uci} dataset originally investigated by~\citep{namkoong2017variance}. We  examine the accuracy on the rare class ($Y=0$), the common class ($Y=1$), and overall accuracy. 

The mean and standard error of accuracies are reported in Table~\ref{table: dro}. RobustRegRisk and TERM offer similar performance improvements compared with other baselines, such as linear SVM, LearnRewight~\citep{ren2018learning}, FocalLoss~\citep{lin2017focal}, and HRM~\citep{leqi2019human}. For larger $t$, TERM achieves similar accuracy in both classes, while RobustRegRisk does not show similar trends by sweeping its hyperparameters. 
It is common to adjust the decision threshold to boost the accuracy on the rare class. 
We do this for ERM and RobustRegRisk and optimize the threshold so that ERM$_{+}$ and RobustRegRisk$_{+}$ result in the same validation accuracy on the rare class as TERM ($t=50$). TERM achieves similar performance to RobustRegRisk$_{+},$ without 
 the need for an extra tuned hyperparameter.

\begin{table}[h]
	\caption{\small TERM ($t=0.1$) is competitive with strong baselines in generalization. TERM ($t=50$) outperforms ERM$_{+}$ (with decision threshold changed for providing fairness) and is competitive with RobustRegRisk$_{+}$ with no need for extra hyperparameter tuning. 
	}
	\centering
	\label{table: dro}
\scalebox{0.85}{
	\begin{tabular}{ l  ll  ll  ll }
			\toprule[\heavyrulewidth]
        \multicolumn{1}{l}{\multirow{2}{*}{ objectives}} 
			& \multicolumn{2}{c}{accuracy ($Y=0$)}  & \multicolumn{2}{c}{accuracy ($Y=1$)} & \multicolumn{2}{c}{overall accuracy (\%)}    \\
            \cmidrule(r){2-7}
             & train & test & train & test & train & test \\
            \hline 
           \rule{0pt}{2ex}ERM   & 0.841 {\tiny (.005)} &	0.822 {\tiny (.009)} &	0.971 {\tiny (.000)} &	0.966 {\tiny (.002)}  &	0.944 {\tiny (.000)} &	0.934 {\tiny (.003)} \\
            Linear SVM & 0.873 {\tiny (.003)}	& {\bf 0.838} {\tiny (.013)}	& 0.965 {\tiny (.000)}	& 0.964 {\tiny (.002)}	& 0.951 {\tiny (.001)}	& \textbf{0.937} {\tiny (.004)} \\
            LearnReweight~\citep{ren2018learning}  & 0.860 {\tiny (.004)} & \textbf{0.841} {\tiny (.014)} & 0.960 {\tiny (.002)} & 0.961 {\tiny (.004)} & 0.940 {\tiny (.001)} & 0.934 {\tiny (.004)}\\
            FocalLoss~\citep{lin2017focal} & 0.871 {\tiny  (.003) } &	{\bf 0.834} {\tiny (.013)} &	0.970 {\tiny (.000)} &	0.966 {\tiny (.003)} &	0.949 {\tiny (.001)} &	\textbf{0.937} {\tiny (.004)} \\
            HRM~\citep{leqi2019human}  & 0.875 {\tiny (.003)}  & {\bf 0.839} {\tiny (.012)} & 0.972 {\tiny (.000)} & 0.965 {\tiny (.003)} & 0.952 {\tiny (.001)} & {\bf 0.937} {\tiny (.003)} \\
            RobustRegRisk~\citep{namkoong2017variance}  & 0.875 {\tiny (.003)} &   \textbf{0.844} {\tiny (.010)} &	0.971 {\tiny (.000)} &	0.966 {\tiny (.003)}  &	0.951 {\tiny (.001)} &	\textbf{0.939} {\tiny (.004)} \\
            \rowcolor{myblue}
            TERM ($t=0.1$) & 0.864 {\tiny (.003)} &	\textbf{0.840}  {\tiny (.011)} &	0.970 {\tiny (.000)} &	0.964 {\tiny (.003)} &	0.949 {\tiny (.001)} &	\textbf{0.937} {\tiny (.004)} \\
            \hline
           \rule{0pt}{2ex}ERM$_{+}$ (thresh = 0.26)  & 0.943 {\tiny (.001)} &	0.916 {\tiny (.008)} &	0.919 {\tiny (.001)}  & 0.917 {\tiny (.003)}  & 0.924 {\tiny (.001)}  &	0.917 {\tiny (.002)} \\
           RobustRegRisk$_{+}$ (thresh=0.49) & 0.943 {\tiny (.000)} &	0.917 {\tiny (.005)} &	0.928 {\tiny (.001)} &	{\bf 0.928} {\tiny (.002)} &	0.931 {\tiny (.001)} &	\textbf{0.924} {\tiny (.001)} \\
           \rowcolor{myblue}
      TERM ($t = 50$) &  0.942 {\tiny(.001)} &   0.917 {\tiny (.005)} &	0.926 {\tiny (.001)} &  {\bf 0.925} {\tiny(.002)}   &	0.929 {\tiny(.001)}	 &  \textbf{0.924} {\tiny (.001)} \\
			\bottomrule
	\end{tabular}}
\end{table}

\newpage
\section{Experimental Details} \label{appen:exp}
We first describe the datasets and models used in each experiment presented in Section~\ref{sec:experiments}, and then provide a detailed setup including the choices of hyperparameters. All code and datasets are publicly available at \href{https://github.com/litian96/TERM}{github.com/litian96/TERM}.

\subsection{Datasets and models}

We apply TERM to a diverse set of real-world applications, datasets, and models. 

In Section~\ref{sec:exp:robsut_regression}, for regression tasks, we use the drug discovery data extracted from~\citep{diakonikolas2019sever} which is originally curated from~\citep{olier2018meta} and train linear regression models with different losses. 
There are 4,085 samples in total with each having 411 features. We randomly split the dataset into 80\% training set, 10\% validation set, and 10\% testing set.
For mitigating noise on classification tasks, we use the standard CIFAR-10 data and their standard train/val/test partitions along with a standard inception network~\citep{szegedy2016rethinking}. For experiments regarding mitigating noisy annotators, we again use the CIFAR-10 data and their standard partitions with a ResNet20 model. The noise generation procedure is described in Section~\ref{sec:exp:noisy_annotator}. 

In Section~\ref{sec:exp:pca}, for fair PCA experiments, we use the complete Default Credit data to learn low-dimensional approximations and the loss is computed on the full training set. We follow the exact data processing steps described in the work~\citep{samadi2018price} we compare with. There are 30,000 total data points with 21-dimensional features (after preprocessing). Among them, the high education group has 24,629 samples and the low education group has 5,371 samples. 
For class imbalance experiments, we directly take the unbalanced data extracted from MNIST~\citep{lecun1998gradient} used in~\citep{ren2018learning}.
When demonstrating the variance reduction of TERM, we use the HIV-1 dataset~\citep{hiv1} as in~\citep{namkoong2017variance} and randomly split it into 80\% train, 10\% validation, and 10\% test set. There are 6,590 total samples and each has 160 features. We report results based on five such random partitions of the data. We train logistic regression models (without any regularization) for this binary classification task for TERM and the baseline methods. We also investigate the performance of a linear SVM. 

In Section~\ref{sec:exp:multi_obj}, the HIV-1 data are the same as that in Section~\ref{sec:exp:pca}. We also manually subsample the data to make it more imbalanced, or inject random noise, as described in Section~\ref{sec:exp:multi_obj}.

\subsection{Hyperparameters}

\paragraph{Selecting $t$.} In  Section~\ref{sec:exp:pca} where we consider positive $t$'s, we select $t$ from a limited candidate set of $\{0.1, 0.5, 1, 5, 10, 50, 100, 200\}$ on the held-out validation set. 
For initial robust regression experiments, RMSE changed by only
0.08 on average across t; we thus used $t=-2$ for all experiments
involving noisy training samples (Section~\ref{sec:exp:robsut_regression} and Section~\ref{sec:exp:multi_obj}).

\paragraph{Other parameters.} 
For all experiments, we tune all other hyperparameters (the learning rates, the regularization parameters, the decision threshold for ERM$_{+}$, $\rho$ for~\citep{namkoong2017variance}, $\alpha$ and $\gamma$ for focal loss~\citep{lin2017focal}) based on a validation set, and select the best one.  For experiments regarding focal loss~\citep{lin2017focal}, we select the class balancing parameter ($\alpha$ in the original focal loss paper) from $\texttt{range}(0.05, 0.95, 0.05)$  and select the main parameter $\gamma$ from $\{0.5, 1, 2, 3, 4, 5\}$. We tune $\rho$ in~\citep{namkoong2017variance} such that $\frac{\rho}{n}$ is selected from $\{0.5, 1, 2, 3, 4, 5, 10\}$ where $n$ is the training set size. All regularization parameters including regularization for linear SVM are selected from $\{0.0001, 0.01, 0.1, 1, 2\}$.
For all experiments on the baseline methods, we use the default hyperparameters in the original paper (or the open-sourced code).

We summarize a complete list of main hyperparameter values as follows.

   \textit{Section~\ref{sec:exp:robsut_regression}:}
    \begin{itemize}[leftmargin=*]
        \item Robust regression. The threshold parameter $\delta$ for Huber loss for all noisy levels is 1, the corruption parameter $k$ for CRR is: 500 (20\% noise), 1000 (40\% noise), and 3000 (80\% noise); and  TERM uses $t=-2$.
        \item Robust classification. The results are all based on the default hyperparameters provided by the open-sourced code of MentorNet~\citep{jiang2018mentornet}, if applicable. We tune the $q$ parameter for generalized cross entropy (GCE) from $\{0.4, 0.8, 1.0\}$ and select a best one for each noise level. For TERM, we scale $t$ linearly as the number of iterations from 0 to -2 for all noise levels.
        \item Low-quality annotators. For all methods, we use the same set of hyperparameters. The initial step-size is set to 0.1 and decayed to 0.01 at epoch 50. The batch size is 100. 
    \end{itemize}
    \textit{Section~\ref{sec:exp:pca}:}
    \begin{itemize}[leftmargin=*]
        \item Fair PCA. We use the default hyperparameters and directly run the public code of~\citep{samadi2018price} to get the results on the min-max fairness baseline. We use a learning rate of 0.001 for our gradient-based solver for all target dimensions. 
        \item Handling class imbalance. We take the open-sourced code of LearnReweight~\citep{ren2018learning} and use the default hyperparameters for the baselines of LearnReweight, HardMine, and ERM. We implement focal loss, and select $\alpha=0.05, \gamma=2$.
        \item Variance reduction. The regularization parameter for linear SVM is 1. $\gamma$ for focal loss is 2. We perform binary search on the decision thresholds for ERM$_{+}$ and RobustRegRisk$_{+}$, and choose 0.26 and 0.49, respectively.
        
    \end{itemize}
    \textit{Section~\ref{sec:exp:multi_obj}:}
    \begin{itemize}[leftmargin=*]
        \item We tune the $q$ parameter for GCE based on validation data. We use $q=0,0,0.7, 0.3$ respectively for the four scenarios we consider. For RobustlyRegRisk, we use $\frac{\rho}{n}=10$ (where $n$ is the training sample size) and we find that the performance is not sensitive to the choice of $\rho$. For focal loss, we tune the hyperparameters for best performance and select $\gamma=2$, $\alpha=$0.5, 0.1, 0.5, and 0.2 for four scenarios. We use $t=-2$ for TERM in the presence of noise, and tune the positive $t$'s based on the validation data. In particular, the values of tilts under four cases are: (0, 0.1), (0, 50), (-2, 5), and (-2, 10) for TERM$_{sc}$ and (0.1, 0), (50, 0), (1, -2) and (50, -2) for TERM$_{ca}$.
    \end{itemize}

\newpage
\section{Discussion}
\label{app:broaderimpact}
Our proposed work provides an alternative to empirical risk minimization (ERM), which is ubiquitous throughout machine learning. As such, our framework (TERM) could be widely used for applications both positive and negative. However, our hope is that the TERM framework will allow machine learning practitioners to easily modify the ERM objective to handle practical concerns such as enforcing fairness amongst subgroups, mitigating the effect of outliers, and ensuring robust performance on new, unseen data. One potential downside of the TERM objective is that if the underlying dataset is \textit{not} well-understood, incorrectly tuning $t$ could have the unintended consequence of \textit{magnifying} the impact of biased/corrupted data in comparison to traditional ERM. Indeed, critical to the success of such a framework is understanding the implications of the modified objective, both theoretically and empirically. The goal of this work is therefore to explore these implications so that it is clear when such a modified objective would be appropriate.

In terms of the use-cases explored with the TERM framework, we relied on benchmark datasets that have been commonly explored in prior work~\citep[e.g.,][]{yang2010relaxed,samadi2018price,tantipongpipat2019multi,yu2012polynomial}. However, we note that some of these common benchmarks, such as cal-housing~\citep{pace1997sparse} and Credit~\citep{yeh2009comparisons}, contain potentially sensitive information. 
While the goal of our experiments was to showcase that the TERM framework could be useful in learning fair representations that suppress membership bias and hence promote fairer performance, developing an understanding for---and removing---such membership biases requires a more comprehensive treatment of the problem that is outside the scope of this work.

\end{document}